\documentclass{article}

\PassOptionsToPackage{numbers,compress}{natbib}

 \usepackage[preprint]{neurips_2026}

\usepackage[utf8]{inputenc} 
\usepackage[T1]{fontenc}    
\usepackage{hyperref}       
\usepackage{graphicx}       
\usepackage{url}            
\usepackage{booktabs}       
\usepackage{amsmath}        
\usepackage{amsfonts}       
\usepackage{amsthm}         
\usepackage{nicefrac}       
\usepackage{microtype}      
\usepackage{xcolor}         
\usepackage{subcaption} 
\usepackage{caption} 
\usepackage{colortbl} 
\usepackage{multirow} 
\usepackage{bbding}
\usepackage{bm}
\usepackage{microtype}      
\usepackage[normalem]{ulem} 

\newtheorem{theorem}{Theorem}[section]
\newtheorem{lemma}[theorem]{Lemma}
\newtheorem{assumption}[theorem]{Assumption}

\theoremstyle{definition}
\newtheorem{definition}[theorem]{Definition}
\theoremstyle{remark}

\title{Beyond Rigid Alignment: Graph Federated Learning via Dual Manifold Calibration}

\author{%
  Wentao Yu\\
  School of Computer Science and Engineering\\
  Nanjing University of Science and Technology\\
  Nanjing, China \\
  \texttt{wentao.yu@njust.edu.cn} \\
  \And
  Bo Han\\
  Department of Computer Science\\
  Hong Kong Baptist University\\
  Hong Kong, China\\
  \texttt{bhanml@comp.hkbu.edu.hk}\\
  \AND
  Jie Yang \\
  School of Automation and Intelligent Sensing\\
  Shanghai Jiao Tong University\\
  Shanghai, China\\
  \texttt{jieyang@sjtu.edu.cn}\\
  \And
  Chen Gong\\
  School of Automation and Intelligent Sensing\\
  Shanghai Jiao Tong University\\
  Shanghai, China\\
  \texttt{chen.gong@sjtu.edu.cn}\\
}

\begin{document}

\maketitle

\begin{abstract}
Graph Federated Learning (GFL) enables collaborative representation learning across distributed subgraphs while preserving privacy. However, heterogeneity remains a critical challenge, as subgraphs across clients typically differ significantly in both semantics and structures. Existing methods address heterogeneity by enforcing the rigid alignment of model parameters or prototypes between clients and the server. However, these alignments implicitly rely on a restrictive global linearity assumption that summarizes local data distributions using a single and globally consistent representation space. This severely compresses the personalized representation space of clients and fails to preserve diverse local graph distributions. To overcome these limitations, we propose \underline{\textbf{Fed}}erated \underline{\textbf{G}}raph \underline{\textbf{M}}anifold \underline{\textbf{C}}alibration (FedGMC), a novel paradigm that tackles semantic heterogeneity and structural heterogeneity from a unified manifold perspective. Instead of enforcing rigid alignment, FedGMC introduces a dual manifold calibration mechanism that preserves global commonalities while maximizing the personalized representation space of local clients. Specifically, for semantic heterogeneity, the server constructs a geometrically optimal semantic manifold via equidistant semantic anchors, so as to guide the calibration of local semantic manifolds. For structural heterogeneity, the server constructs a global structural manifold by building global structural templates, so as to guide the calibration of local structural manifolds. Finally, the server dynamically refines both global semantic manifolds and structural manifolds by aggregating local manifolds. Extensive experiments on eleven homophilic and heterophilic graphs demonstrate that FedGMC effectively balances global commonality and local personalization, thereby significantly outperforming state-of-the-art baseline methods.
\end{abstract}

\section{Introduction}
Graphs are ubiquitous data structures that model complex relational data across diverse domains, which include social networks, traffic systems, and molecular chemistry~\cite{bai2022two, 10032180, zhou2024traffic, zhou2025fedtps, zhou2026fedhint, yu2026atom}. In many real-world scenarios, large-scale graphs are typically partitioned across multiple isolated organizations or devices, which leads to locally accessible subgraphs due to stringent privacy regulations. Graph Federated Learning (GFL)~\cite{baek2023personalized, wentao2025fediih, yu2025homophily, yuintegrating, yu2026heterogeneity} has emerged as a promising paradigm that enables multiple clients to collaboratively train Graph Neural Networks (GNNs) without exposing raw graph data. However, the non-IID (non-independent and identically distributed) nature of subgraphs induces severe heterogeneity in GFL. In contrast to Euclidean data such as images, heterogeneity in GFL is substantially more complex and challenging, as it arises simultaneously from diverse node features (\textit{i.e.}, semantic heterogeneity) and highly varied topological structures (\textit{i.e.}, structural heterogeneity) across clients~\cite{NEURIPS2021_9c6947bd}.

To address heterogeneity, most existing methods focus on enforcing rigid alignment of model parameters or prototypes between clients and the server. For example, FedProx~\cite{MLSYS2020_1f5fe839} introduces a proximal term to constrain local updates toward the global model. Meanwhile, FedProto~\cite{tan2022fedproto} aligns local and global prototypes to alleviate heterogeneity. More recently, some approaches~\cite{yu2026heterogeneity, huang2023federated} have attempted to tackle node feature heterogeneity and structural heterogeneity through semantic alignment and structural alignment, respectively. However, these methods share a critical yet largely overlooked limitation: They implicitly rely on a restrictive global linearity assumption~\cite{ma2026fedmc}. This assumption attempts to represent diverse local data distributions within a single and globally consistent representation space. By enforcing such rigid alignment, existing methods force local representations toward a global consensus. This approach fundamentally conflicts with the inherently non-IID nature of local graphs. It severely compresses the personalized representation space of clients, which compels them to sacrifice unique semantic characteristics. Moreover, it is particularly harmful to structural representations, as blindly aligning graph topologies can obliterate distinctive connectivity patterns (\textit{e.g.}, diverse topological structures) that are crucial for downstream tasks~\cite{zhu2020beyond}.

To address these fundamental issues, we argue that the key lies not in forcing clients to align parameters nor prototypes, but rather in calibrating the intrinsic geometric structures of their local manifolds. To this end, we \textit{shift} the paradigm from enforcing rigid alignment to manifold calibration~\cite{ma2026fedmc, wangconsistency}. In contrast to rigid alignment, manifold calibration preserves the intrinsic geometry of each client's local manifold (\textit{i.e.}, the relative distances among representations). To achieve this, each client merely needs to compute a transformation (\textit{e.g.}, rotation) to establish geometric consistency between the local manifold and a shared global reference. In this way, we can establish a global consensus without sacrificing the essential personalized representation space of clients.

Motivated by this insight, we propose a novel paradigm named \underline{\textbf{Fed}}erated \underline{\textbf{G}}raph \underline{\textbf{M}}anifold \underline{\textbf{C}}alibration (FedGMC), which tackles both semantic heterogeneity and structural heterogeneity from a unified manifold perspective. We are the \textit{first} to introduce the manifold calibration paradigm into GFL to address heterogeneity. Unlike existing methods that require rigid alignment, FedGMC preserves global commonalities while maximizing the personalized representation space of local clients.

Specifically, FedGMC introduces a dual manifold calibration mechanism. (i) For semantic heterogeneity, the server constructs a geometrically optimal semantic manifold by using a simplex Equiangular Tight Frame (ETF), where equidistant semantic anchors serve as a shared global reference to guide the consistent calibration of local semantic manifolds. Each client then extracts structure-free ego-embeddings to construct local semantic manifolds, and employs the orthogonal Procrustes transformation to isometrically calibrate them towards the global semantic anchors. This orthogonal transformation effectively preserves the personalized representations of local semantics. (ii) For structural heterogeneity, the server constructs a global structural manifold by building global structural templates, which serve as a shared global reference to guide the consistent calibration of local structural manifolds. Each client captures the topological structures and connectivity characteristics of local graph to construct local structural manifold. It then leverages the optimal transport to match its local structural manifold with global structural templates. This calibration effectively preserves the personalized structural characteristics of local graphs while achieving global consistency. Finally, to resolve the geometric discrepancy between geometrically optimal global manifolds and diverse local manifolds, the server dynamically refines both global semantic and global structural manifolds via difficulty-weighted updates and Gromov-Wasserstein distances, respectively.
\section{Related Work}
\subsection{Graph Federated Learning}
Graph Federated Learning (GFL) has emerged as a promising paradigm for collaborative graph representation learning without sharing raw graph data. However, unlike conventional federated learning on Euclidean data (\textit{e.g.}, images and text), GFL suffers from significantly more intricate heterogeneity due to the simultaneous presence of diverse node features and highly varied topological structures across clients.

To address heterogeneity, existing GFL methods primarily enforce the rigid alignment of model parameters or prototypes between clients and the server. The first category of methods tackles heterogeneity by dynamically adjusting the aggregation weights of model parameters based on client similarities. For example, GCFL~\cite{NEURIPS2021_9c6947bd} and FedGNN~\cite{wu2021fedgnn} cluster clients according to the similarity of local gradients and local graphs, respectively. They then perform federated averaging within each cluster. FED-PUB~\cite{baek2023personalized} estimates pairwise client similarities from the outputs of local models. It then employs these similarities to perform weighted aggregation of model parameters, where higher aggregation weights are assigned to more similar clients. Similarly, FedIIH~\cite{wentao2025fediih} conducts a weighted federation of model parameters based on similarities derived from inferred graph distributions. Consequently, these methods can be viewed as performing similarity-based alignment of model parameters, as they assign higher aggregation weights to clients with higher similarity.

The second category of methods addresses heterogeneity through the alignment of semantics or structures. For instance, FedTAD~\cite{zhu2024fedtad} performs topology-aware knowledge distillation to align the global model with reliable local knowledge during aggregation. FedICI~\cite{yuintegrating} aggregates low-frequency components in a low-rank space to enhance the consistency of common features. More recently, FedSSA~\cite{yu2026heterogeneity} explicitly aligns both semantic knowledge and spectral characteristics. Ultimately, these methods can be regarded as prototype-level alignment, as they force the alignment of representations or knowledge toward a global consensus.

Despite these advances, existing methods share a critical yet largely overlooked limitation: They implicitly rely on a restrictive global linearity assumption~\cite{ma2026fedmc}. This assumption severely compresses the personalized representation space of clients, which compels them to sacrifice unique semantic and structural characteristics. In contrast, our FedGMC addresses heterogeneity from a manifold perspective, which preserves global commonalities while maximizing the personalized representation space of local clients.

\subsection{Manifold Learning}
According to the manifold hypothesis, high-dimensional data typically resides on or near low-dimensional manifolds rather than being uniformly distributed across the ambient Euclidean space~\cite{tenenbaum2000global, lei2020geometric}. Manifold learning aims to uncover and preserve these intrinsic geometric structures, leading to its widespread application across diverse domains~\cite{caterini2021rectangular, kiani2024hardness}. In the context of federated learning, recent studies have begun leveraging manifold learning to mitigate heterogeneity by calibrating the intrinsic geometries of local distributions. For instance, FedMC~\cite{ma2026fedmc} introduces a manifold calibration mechanism that establishes cross-client consistency by orienting local manifolds toward a global reference. However, this method is fundamentally tailored for Euclidean data (\textit{e.g.}, images) and fails to capture the intricate topological structures that are indispensable in GFL. In contrast, our FedGMC is the \textit{first} to extend the manifold calibration paradigm into GFL, effectively addressing both semantic heterogeneity and structural heterogeneity from a unified manifold perspective.

\section{Problem Definition}
In this work, we focus on the node classification task in GFL. Our goal is to collaboratively train GNNs across $M$ clients, where each client $m \in \{1, 2, \cdots, M\}$ holds a private local subgraph $\mathcal{G}_m = (\mathcal{V}_m, \mathcal{E}_m)$. Here, $\mathcal{V}_m$ and $\mathcal{E}_m$ denote the node set and edge set of $\mathcal{G}_m$, respectively. Let $\mathbf{X}_m \in \mathbb{R}^{n_m \times d}$ and $\mathbf{A}_m \in \mathbb{R}^{n_m \times n_m}$ be the node feature matrix and adjacency matrix of $\mathcal{G}_m$, where $n_m = |\mathcal{V}_m|$ is the number of nodes and $d$ is the feature dimension. Due to privacy constraints, each client's local subgraph $\mathcal{G}_m$ is inaccessible to all other clients.

\section{Our Proposed Method}
In this section, we present the details of our proposed \underline{\textbf{Fed}}erated \underline{\textbf{G}}raph \underline{\textbf{M}}anifold \underline{\textbf{C}}alibration (FedGMC).

\subsection{Framework Overview}
As shown in Fig.~\ref{fig1}, FedGMC introduces a dual manifold calibration mechanism. (i) \textbf{Semantic Manifold Calibration:} To address semantic heterogeneity, the server first constructs a geometrically optimal global semantic manifold. Each client then extracts structure-free ego-embeddings to form its local semantic manifold and employs an orthogonal Procrustes transformation to calibrate it toward the global semantic anchors. (ii) \textbf{Structural Manifold Calibration:} To address structural heterogeneity, the server constructs global structural templates that serve as a global structural manifold. Each client captures the topological structures and connectivity characteristics of its local graph to construct its local structural manifold, which is then matched against the global structural templates. (iii) \textbf{Global Manifold Refinement:} Finally, to resolve the geometric discrepancy between the optimal global manifolds and the diverse local manifolds, the server dynamically refines both the global semantic and structural manifolds.
\begin{figure}[t]
	\centering
	\includegraphics[width=13.5cm]{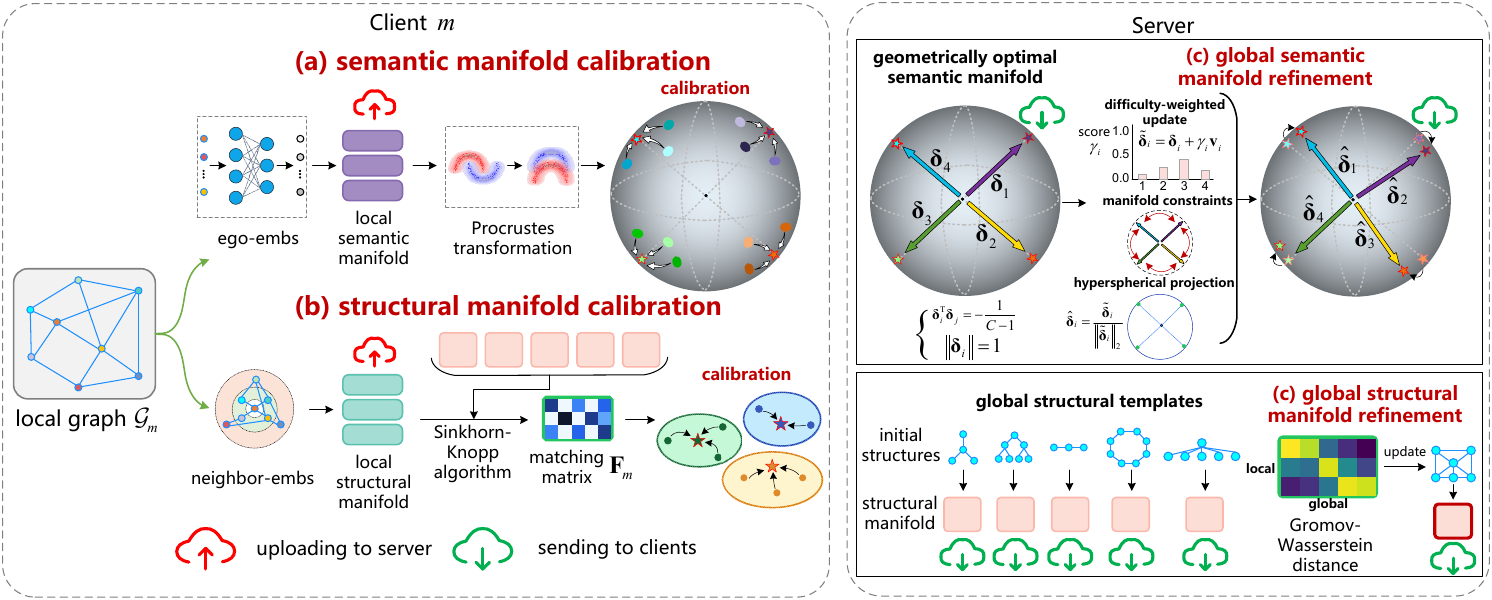}
	\caption{The overview of our proposed FedGMC. }
	\label{fig1}
\end{figure}
\subsection{Semantic Manifold Calibration}
\subsubsection{Constructing Global Semantic Manifold}
To establish a geometrically optimal global reference for semantic calibration, the server constructs a global semantic manifold based on the simplex Equiangular Tight Frame (ETF)~\cite{fickus2016equiangular}. Given $C$ classes, we define a set of semantic anchors $\Delta = \{\bm{\delta}_1, \bm{\delta}_2, \cdots, \bm{\delta}_C\}$ that satisfy the following geometric optimality constraint:
\begin{equation}
\bm{\delta}_i^{\top} \bm{\delta}_j = \frac{C}{C-1} \lambda_{i,j} - \frac{1}{C-1}, \quad \forall i, j \in \{1, \cdots, C\},
\label{eq:etf_constraint}
\end{equation}
where $\bm{\delta}_i \in \mathbb{R}^{d}$, $d$ denotes the embedding dimension, $\lambda_{i,j} = 1$ if $i = j$, and $\lambda_{i,j} = 0$ otherwise. To initialize the global semantic manifold $\Delta \in \mathbb{R}^{d \times C}$, we follow the standard simplex ETF construction~\cite{markou2024guiding}:
\begin{equation}
\Delta = \sqrt{\frac{C}{C-1}} \Phi \left( \mathbf{I}_C - \frac{1}{C} \mathbf{1}_C \mathbf{1}_C^{\top} \right),
\label{eq:etf_init}
\end{equation}
where $\Phi \in \mathbb{R}^{d \times C}$ represents a fundamental isometric transformation that embeds the $C$-dimensional simplex into the semantic space, $\mathbf{I}_C$ is the $C \times C$ identity matrix, and $\mathbf{1}_C$ is a $C$-dimensional all-ones column vector. This simplex ETF construction ensures that all semantic anchors possess unit norm and satisfy the maximal equidistance property, thereby providing a geometrically optimal global reference for semantic calibration.
\subsubsection{Local Semantic Manifold Calibration}
\paragraph{Extracting ego-embeddings} To construct the local semantic manifold, we first extract structure-free ego-embeddings for each node in the local graph $\mathcal{G}_m$. Unlike conventional node embeddings that aggregate neighborhood information, ego-embeddings capture only the intrinsic semantic features of individual nodes without being influenced by topological structures. Formally, for client $m$, the ego-embeddings are computed as $\mathbf{H}_m^{\text{ego}} = \sigma(\mathbf{X}_m \mathbf{W}_m^{\text{ego}})$, where $\sigma(\cdot)$ is a non-linear activation function, and $\mathbf{W}_m^{\text{ego}} \in \mathbb{R}^{d \times d}$ is a learnable parameter matrix. By relying solely on node features, ego-embeddings provide a pure representation of semantic characteristics, which is essential for semantic manifold calibration.
\paragraph{Constructing local semantic manifolds} Given ego-embeddings $\mathbf{H}_m^{\text{ego}}$, we construct the local semantic manifold $\mathbf{P}_m \in \mathbb{R}^{d \times C}$ by computing the class-wise mean of ego-embeddings, \textit{i.e.}, $\mathbf{P}_m = \left[ \bar{\mathbf{h}}_{m,1}, \bar{\mathbf{h}}_{m,2}, \cdots, \bar{\mathbf{h}}_{m,C} \right]$, where $\bar{\mathbf{h}}_{m,c} = \frac{1}{|\mathcal{V}_{m,c}|} \sum_{v \in \mathcal{V}_{m,c}} \mathbf{h}_{m,v}^{\text{ego}}$ represents the mean ego-embedding of all nodes belonging to class $c$ in client $m$, and $\mathcal{V}_{m,c}$ denotes the set of nodes with label $c$ in $\mathcal{G}_m$.
\paragraph{Orthogonal Procrustes transformation} To calibrate the local semantic manifold $\mathbf{P}_m$ toward the global semantic manifold $\Delta$, we seek an optimal orthogonal transformation matrix $\mathbf{R}_m \in \mathbb{R}^{d \times d}$ that minimizes the Frobenius norm of the alignment error by solving $\mathbf{R}_m = \arg\min_{\mathbf{R}_m^{\top} \mathbf{R}_m = \mathbf{I}} \left\| \mathbf{R}_m \mathbf{P}_m  - \Delta \right\|_\mathrm{F}^2$. This optimization problem admits a closed-form solution via Singular Value Decomposition (SVD). Specifically, we first compute the SVD of $\Delta \mathbf{P}_m^{\top}$, $\textit{i.e.}$, $\Delta \mathbf{P}_m^{\top} = \mathbf{U}_m \mathbf{\Sigma}_m \mathbf{V}_m^{\top}$, and then obtain the optimal orthogonal transformation as $\mathbf{R}_m = \mathbf{U}_m \mathbf{V}_m^{\top}$. The orthogonality constraint $\mathbf{R}_m^{\top} \mathbf{R}_m = \mathbf{I}$ ensures that the transformation is an isometric mapping, involving only rotations and reflections. This property is crucial, as it maximally preserves the intrinsic geometry and personalized characteristics of the local semantic manifold while establishing consistency with the global reference.
\paragraph{Semantic calibration loss} To enforce the calibration of local semantic manifolds toward the global semantic anchors, we propose the following semantic calibration loss:
\begin{equation}
\mathcal{L}_{\text{se\_calibration}} = \sum_{m=1}^{M} \sum_{v \in \mathcal{V}_m} \left\| \mathbf{h}_{m,v}^{\text{ego}} \mathbf{R}_m - \bm{\delta}_{y_v} \right\|_2^2,
\label{eq:semantic_calibration_loss}
\end{equation}
where $\mathbf{h}_{m,v}^{\text{ego}}$ denotes the ego-embedding of node $v$ in client $m$, and $\bm{\delta}_{y_v}$ represents the global semantic anchor corresponding to the label $y_v$ of node $v$. This semantic calibration loss encourages the calibrated local ego-embeddings to align with their corresponding global semantic anchors, thereby achieving semantic consistency across clients.

\subsection{Structural Manifold Calibration}
\subsubsection{Constructing Global Structural Manifold}
To establish a global reference for structural calibration, the server constructs a global structural manifold $\mathcal{T} = \{\mathbf{T}_1, \mathbf{T}_2, \cdots, \mathbf{T}_Q\}$, where each global structural template $\mathbf{T}_q \in \mathbb{R}^{2d}$ encodes a prototypical topological pattern. These templates are initialized randomly and will be dynamically refined through aggregation of local structural manifolds.
\subsubsection{Local Structural Manifold Calibration}
\paragraph{Local structural embeddings} To capture the structural information of the local graph $\mathcal{G}_m$, we sample $B$ nodes and compute their multi-hop neighbor embeddings. For each sampled node $b$, we aggregate the ego-embeddings of its 1-hop and 2-hop neighbors:
\begin{equation}
\mathbf{h}_{m,b}^{(1)} = \text{COMBINE}\left( \text{AGGR}\left\{ \mathbf{h}_{m,u}^{\text{ego}} : u \in \mathcal{N}_1(b) \right\} \right),
\label{eq:1hop_neighbor}
\end{equation}
\begin{equation}
\mathbf{h}_{m,b}^{(2)} = \text{COMBINE}\left( \text{AGGR}\left\{ \mathbf{h}_{m,u}^{\text{ego}} : u \in \mathcal{N}_1(b) \right\}, \text{AGGR}\left\{ \mathbf{h}_{m,u}^{\text{ego}} : u \in \mathcal{N}_2(b) \right\} \right),
\label{eq:2hop_neighbor}
\end{equation}
where $\mathcal{N}_k(b)$ denotes the $k$-hop neighborhood of node $b$, and AGGR and COMBINE are aggregation and combination functions, respectively.
\paragraph{Radial sequence matrix} To characterize how structural information propagates across multiple hops, we explicitly compute the 1-hop and 2-hop neighbor embeddings and amplify the structural signals via $\ell_2$ normalization:
\begin{equation}
\mathbf{r}_{m,b}^{(1)} = \frac{\mathbf{h}_{m,b}^{(1)}}{\|\mathbf{h}_{m,b}^{(1)}\|_2}, \quad
\mathbf{r}_{m,b}^{(2)} = \frac{\mathbf{h}_{m,b}^{(2)}}{\|\mathbf{h}_{m,b}^{(2)}\|_2}, \quad
\label{eq:residual}
\end{equation}
Afterwards, we introduce the radial sequence matrix, which characterizes the structural manifold at each topological ring as graph signals propagate outward from node $b$, \textit{i.e.}, $\mathbf{R}_{m,b} = \left[ \mathbf{r}_{m,b}^{(1)} \mid \mathbf{r}_{m,b}^{(2)} \right]^{\top} \in \mathbb{R}^{2d}$. This radial sequence matrix effectively encodes the structural characteristics of the local graph at different topological distances from node $b$, and serves as the local structural manifold for subsequent calibration.

\paragraph{Optimal transport matching} To match the $B$ local structural manifolds of client $m$ with $Q$ global structural templates, each client independently computes a client-specific matching matrix $\mathbf{F}_m \in \mathbb{R}^{B \times Q}$ by using the Sinkhorn-Knopp algorithm~\cite{cuturi2013sinkhorn, NEURIPS2022_bda6843d, chang2023csot}. Specifically, $(\mathbf{F}_m)_{b,q}$ represents the probability that the $b$-th local structural manifold is assigned to the $q$-th global template.

\paragraph{Structural calibration loss} To enforce the calibration of local structural manifolds toward global structural manifolds, we formulate the calibration loss as a soft assignment-weighted optimal transport discrepancy. Let $\mu_{m,b}$ denote the empirical measure induced by the $b$-th local structural manifold on the $m$-th client, and let $\nu_q$ denote the empirical measure associated with the $q$-th global structural manifold. Given the matching probabilities $\mathbf (\mathbf{F}_m)_{b,q}$ over $Q$ manifolds, with $\sum_{q=1}^{Q}\mathbf (\mathbf{F}_m)_{b,q}=1$, we define
\begin{equation}
\mathcal{L}_{\mathrm{str\_calibration}}
=
\frac{1}{MB}
\sum_{m=1}^{M}
\sum_{b=1}^{B}
\sum_{q=1}^{Q}
(\mathbf{F}_m)_{b,q}\,
\mathcal{W}
\left(
\mu_{m,b}, \nu_q
\right),
\label{eq:structural_calibration_loss}
\end{equation}
where $\mathcal W(\cdot,\cdot)$ denotes the optimal transport distance. This structural calibration loss encourages the local structural manifolds to align with the global structural templates while preserving their personalized topological characteristics.

\subsection{Global Manifold Refinement}
While the global semantic and structural manifolds provides the theoretically sound references, they may not perfectly align with the diverse local manifolds due to the non-IID nature of local graphs. To resolve this geometric discrepancy, the server dynamically refines the global semantic and structural manifolds by aggregating local semantic manifolds while preserving the maximal equidistance property.
\subsubsection{Dynamic Refinement of Global Semantic Manifold}
\paragraph{Computing candidate update vectors} After aggregating local semantic manifolds from all clients, the server computes a candidate update vector for each semantic anchor. For class $i$, we first define the global semantic deviation update vector as the mean deviation between local semantic manifolds and the global semantic anchor, \textit{i.e.}, $\mathbf{v}_i = \frac{1}{M} \sum_{m=1}^{M} \left( \mathbf{K}_{m,i} - \bm{\delta}_i \right)$, where $\mathbf{K}_{m,i}$ represents the mean semantic embedding of class $i$ in client $m$ after calibration. To provide the adjustment flexibility for difficult classes, we introduce a difficulty weight, \textit{i.e.}, $\gamma_i = \text{Softmax}\left( \frac{\bar{\mathcal{L}}_i}{\tau} \right) = \frac{\exp\left( \bar{\mathcal{L}}_i / \tau \right)}{\sum_{j=1}^{C} \exp\left( \bar{\mathcal{L}}_j / \tau \right)}$, where $\bar{\mathcal{L}}_i$ is the average semantic alignment loss for class $i$ across all clients, and $\tau$ is a temperature hyperparameter that controls the sharpness of the weight distribution. However, relying solely on data-driven updates would disrupt the strict simplex structure. To maintain the maximal equidistance property, we introduce a manifold constraint vector, namely $\mathbf{s}_i = -\sum_{j \neq i} \frac{\bm{\delta}_j}{\left\| \bm{\delta}_i - \bm{\delta}_j \right\|_2^2}$. This constraint vector ensures that even when adjusting semantic anchors, they continue to mutually repel each other to approximate the theoretical maximal equidistance. By combining the deviation update vector, difficulty weight, and manifold constraint vector, we obtain the candidate update vector for semantic anchor $i$, \textit{i.e.}, $\tilde{\delta}_i = \delta_i + \gamma_i \mathbf{v}_i + \mathbf{s}_i$.
\paragraph{Spherical projection} To prevent oscillations in the updates of semantic anchors, we constrain the update magnitude within a feasible region by introducing an adaptive scaling factor $t_i = \min\left(1, \frac{\eta}{\left\| \tilde{\delta}_i - \delta_i \right\|_2 + \epsilon}\right)$, where $\eta$ is a hyperparameter representing the maximum update step size, and $\epsilon$ is a small constant to prevent division by zero. Finally, we project the candidate update vector back onto the unit hypersphere to complete the dynamic refinement of the semantic anchor:
\begin{equation}
\hat{\bm{\delta}}_i = \frac{\bm{\delta}_i + t_i \left( \tilde{\bm{\delta}}_i - \bm{\delta}_i \right)}{\left\| \bm{\delta}_i + t_i \left( \tilde{\bm{\delta}}_i - \bm{\delta}_i \right) \right\|_2}.
\label{eq:spherical_projection}
\end{equation}
Through this mechanism, the global semantic manifold aggregates local semantic distributions while effectively maintaining the maximal equidistance property endowed by neural collapse theory~\cite{markou2024guiding}.
\subsubsection{Dynamic Refinement of Global Structural Manifold}
After aggregating the radial sequence matrices uploaded by clients, the server dynamically refines the $Q$ global structural templates. For the $q$-th template structural in $\mathcal{T}$, we update it by solving a Gromov-Wasserstein barycenter problem:
\begin{equation}
\mathbf{T}_q^{*}
=
\arg\min_{\mathbf{T}_q}
\sum_{m=1}^{M}
\sum_{b=1}^{B}
(\mathbf{F}_m)_{b,q} \,
\mathrm{GW}
\left(
\mu_{m,b}, \nu_q
\right),
\label{eq:structural_template_update}
\end{equation}
where $\text{GW}(\cdot, \cdot)$ denotes the Gromov-Wasserstein distance~\cite{beier2022linear}, which measures the structural dissimilarity between two metric spaces while being invariant to isometric transformations.

\section{Experiments}
To rigorously validate the effectiveness of our proposed FedGMC, we conduct extensive experiments on eleven widely adopted benchmark datasets that encompass both homophilic and heterophilic graphs. We compare FedGMC against thirteen competitive baseline methods. Following prior work~\cite{wentao2025fediih, yu2026heterogeneity}, we evaluate a total of 66 scenarios by systematically varying the number of clients under both non-overlapping and overlapping partitioning schemes. To ensure fair and robust comparisons, we report the mean classification accuracy together with the standard deviation over ten independent runs. Comprehensive implementation details are provided in Appendix~\ref{implementation_details}.

\subsection{Main Results}
\paragraph{Homophilic datasets} Tab.~\ref{table1} presents the results on homophilic datasets under the non-overlapping partitioning setting. FedGMC achieves the best average performance across all baseline methods and settings. Moreover, FedGMC is also maintaining relatively small standard deviations, which demonstrates both superior effectiveness and greater stability compared with the baselines. Experimental results under the overlapping partitioning setting are reported in Appendix~\ref{additional_tables}.

\paragraph{Heterophilic datasets} Tab.~\ref{table3} summarizes the results on heterophilic datasets under the non-overlapping partitioning setting. FedGMC attains the best average performance across all baselines and outperforms the second-best method (\textit{i.e.}, FedICI) by 1.07\% in classification accuracy. This gain arises because FedGMC explicitly addresses both node feature heterogeneity and structural heterogeneity through semantic and structural manifold calibration. Experimental results under the overlapping partitioning setting are provided in Appendix~\ref{additional_tables}.

\begin{table*}[t]
  \centering
  \scriptsize
  \caption{Accuracy (\%) of methods on six \textbf{homophilic} graph datasets under \textbf{non-overlapping} subgraph partitioning setting.}
    \label{table1}
    \renewcommand{\arraystretch}{0.9} 
       \scalebox{0.75}{
  \begin{tabular}{lcccccccccc}
  \hline
  \rowcolor{gray!50}
  \textbf{}     & \multicolumn{3}{c}{Cora}                                                    & \multicolumn{3}{c}{CiteSeer}                                                & \multicolumn{3}{c}{PubMed}                                                  & -              \\ \cline{2-11}
  Methods       & 5 Clients               & 10 Clients              & 20 Clients              & 5 Clients               & 10 Clients              & 20 Clients              & 5 Clients               & 10 Clients              & 20 Clients              & -              \\ \hline
  \rowcolor{gray!20}
  Local         & 81.30$\pm$0.21          & 79.94$\pm$0.24          & 80.30$\pm$0.25          & 69.02$\pm$0.05          & 67.82$\pm$0.13          & 65.98$\pm$0.17          & 84.04$\pm$0.18          & 82.81$\pm$0.39          & 82.65$\pm$0.03          & -              \\ \hline
  FedAvg~\cite{mcmahan2017communication}        & 74.45$\pm$5.64          & 69.19$\pm$0.67          & 69.50$\pm$3.58          & 71.06$\pm$0.60          & 63.61$\pm$3.59          & 64.68$\pm$1.83          & 79.40$\pm$0.11          & 82.71$\pm$0.29          & 80.97$\pm$0.26          & -              \\
  \rowcolor{gray!20}
  FedProx~\cite{MLSYS2020_1f5fe839}       & 72.03$\pm$4.56          & 60.18$\pm$7.04          & 48.22$\pm$6.18          & 71.73$\pm$1.11          & 63.33$\pm$3.25          & 64.85$\pm$1.35          & 79.45$\pm$0.25          & 82.55$\pm$0.24          & 80.50$\pm$0.25          & -              \\
  FedPer~\cite{Arivazhagan2019}        & 81.68$\pm$0.40          & 79.35$\pm$0.04          & 78.01$\pm$0.32          & 70.41$\pm$0.32          & 70.53$\pm$0.28          & 66.64$\pm$0.27          & 85.80$\pm$0.21          & 84.20$\pm$0.28          & 84.72$\pm$0.31          & -              \\
  \rowcolor{gray!20}
  GCFL~\cite{NEURIPS2021_9c6947bd}          & 81.47$\pm$0.65          & 78.66$\pm$0.27          & 79.21$\pm$0.70          & 70.34$\pm$0.57          & 69.01$\pm$0.12          & 66.33$\pm$0.05          & 85.14$\pm$0.33          & 84.18$\pm$0.19          & 83.94$\pm$0.36          & -              \\
  FedGNN~\cite{wu2021fedgnn}        & 81.51$\pm$0.68          & 70.12$\pm$0.99          & 70.10$\pm$3.52          & 69.06$\pm$0.92          & 55.52$\pm$3.17          & 52.23$\pm$6.00          & 79.52$\pm$0.23          & 83.25$\pm$0.45          & 81.61$\pm$0.59          & -              \\
  \rowcolor{gray!20}
  FedSage+\cite{NEURIPS2021_34adeb8e}      & 72.97$\pm$5.94          & 69.05$\pm$1.59          & 57.97$\pm$12.60          & 70.74$\pm$0.69          & 65.63$\pm$3.10          & 65.46$\pm$0.74          & 79.57$\pm$0.24          & 82.62$\pm$0.31          & 80.82$\pm$0.25          & -              \\
  FED-PUB~\cite{baek2023personalized}       & 83.70$\pm$0.19          & 81.54$\pm$0.12          & 81.75$\pm$0.56          & 72.68$\pm$0.44          & 72.35$\pm$0.53          & 67.62$\pm$0.12          & 86.79$\pm$0.09          & 86.28$\pm$0.18          & 85.53$\pm$0.30          & -              \\
  \rowcolor{gray!20}
  FedGTA~\cite{li2023fedgta}        & 80.06$\pm$0.63          & 80.59$\pm$0.38          & 79.01$\pm$0.31          & 70.12$\pm$0.10          & 71.57$\pm$0.34          & 69.94$\pm$0.14          & 87.75$\pm$0.01          & 86.80$\pm$0.01          & 87.12$\pm$0.05          & -              \\
  AdaFGL~\cite{li2024adafgl}        & 82.01$\pm$0.51          & 80.09$\pm$0.08          & 79.74$\pm$0.05          & 71.44$\pm$0.27          & 72.34$\pm$0.09          & 70.95$\pm$0.45          & 86.91$\pm$0.28          & 86.97$\pm$0.10          & 86.59$\pm$0.21          & -              \\ 
  \rowcolor{gray!20}
  FedTAD~\cite{zhu2024fedtad}       & 80.31$\pm$0.26          & 80.87$\pm$0.11          & 80.07$\pm$0.15          & 70.34$\pm$0.37          & 69.43$\pm$0.75          & 68.09$\pm$0.69          & 84.00$\pm$0.13          & 84.61$\pm$0.17          & 84.33$\pm$0.18          & -              \\ 
  FedIIH~\cite{wentao2025fediih}    & 84.11$\pm$0.17          & 81.85$\pm$0.09          & 83.01$\pm$0.15          & 72.86$\pm$0.25          & 76.50$\pm$0.06          & 73.36$\pm$0.41          & 87.80$\pm$0.18          & 87.65$\pm$0.18          & 87.19$\pm$0.25         & -              \\
  \rowcolor{gray!20}
  FedICI~\cite{yuintegrating} & \textbf{84.99$\pm$0.04}          & \underline{83.38$\pm$0.05}          & 84.04$\pm$0.08          & \underline{73.77$\pm$0.14}          & 77.45$\pm$0.07          & 73.65$\pm$0.29          & \textbf{88.72$\pm$0.05}          & \underline{87.93$\pm$0.11}          & \underline{87.61$\pm$0.11}         & -              \\
  FedSSA~\cite{yu2026heterogeneity}                 & \underline{84.67$\pm$0.05} & 82.32$\pm$0.04 & \underline{84.13$\pm$0.09} & 73.06$\pm$0.08 & \textbf{77.65$\pm$0.10} & \underline{74.09$\pm$0.09} & 88.11$\pm$0.07 & 87.78$\pm$0.13 & 87.37$\pm$0.14 & -              \\ \hline
  \rowcolor{yellow!30}
  FedGMC (Ours)                 & 84.16$\pm$0.11 & \textbf{83.65$\pm$0.16} & \textbf{84.55$\pm$0.13} & \textbf{73.88$\pm$0.11} & \underline{77.59$\pm$0.06} & \textbf{74.35$\pm$0.10} & \underline{88.39$\pm$0.09} & \textbf{88.67$\pm$0.12} & \textbf{88.11$\pm$0.11} & -              \\ \hline
  \rowcolor{gray!50}
  & \multicolumn{3}{c}{Amazon-Computer}                                         & \multicolumn{3}{c}{Amazon-Photo}                                            & \multicolumn{3}{c}{ogbn-arxiv}                                              & Avg.            \\ \cline{2-11} 
  Methods       & 5 Clients               & 10 Clients              & 20 Clients              & 5 Clients               & 10 Clients              & 20 Clients              & 5 Clients               & 10 Clients              & 20 Clients              & All           \\ \hline
  \rowcolor{gray!20}
  Local         & 89.22$\pm$0.13          & 88.91$\pm$0.17          & 89.52$\pm$0.20          & 91.67$\pm$0.09          & 91.80$\pm$0.02          & 90.47$\pm$0.15          & 66.76$\pm$0.07          & 64.92$\pm$0.09          & 65.06$\pm$0.05          & 79.57          \\ \hline
  FedAvg~\cite{mcmahan2017communication}        & 84.88$\pm$1.96          & 79.54$\pm$0.23          & 74.79$\pm$0.24          & 89.89$\pm$0.83          & 83.15$\pm$3.71          & 81.35$\pm$1.04          & 65.54$\pm$0.07          & 64.44$\pm$0.10          & 63.24$\pm$0.13          & 74.58          \\
  \rowcolor{gray!20}
  FedProx~\cite{MLSYS2020_1f5fe839}       & 85.25$\pm$1.27          & 83.81$\pm$1.09          & 73.05$\pm$1.30          & 90.38$\pm$0.48          & 80.92$\pm$4.64          & 82.32$\pm$0.29          & 65.21$\pm$0.20          & 64.37$\pm$0.18          & 63.03$\pm$0.04          & 72.84          \\
  FedPer~\cite{Arivazhagan2019}        & 89.67$\pm$0.34          & 89.73$\pm$0.04          & 87.86$\pm$0.43          & 91.44$\pm$0.37          & 91.76$\pm$0.23          & 90.59$\pm$0.06          & 66.87$\pm$0.05          & 64.99$\pm$0.18          & 64.66$\pm$0.11          & 79.94          \\
  \rowcolor{gray!20}
  GCFL~\cite{NEURIPS2021_9c6947bd}          & 89.07$\pm$0.91          & 90.03$\pm$0.16          & 89.08$\pm$0.25          & 91.99$\pm$0.29          & 92.06$\pm$0.25          & 90.79$\pm$0.17          & 66.80$\pm$0.12          & 65.09$\pm$0.08          & 65.08$\pm$0.04          & 79.90          \\
  FedGNN~\cite{wu2021fedgnn}        & 88.08$\pm$0.15          & 88.18$\pm$0.41          & 83.16$\pm$0.13          & 90.25$\pm$0.70          & 87.12$\pm$2.01          & 81.00$\pm$4.48          & 65.47$\pm$0.22          & 64.21$\pm$0.32          & 63.80$\pm$0.05          & 75.23          \\
  \rowcolor{gray!20}
  FedSage+\cite{NEURIPS2021_34adeb8e}      & 85.04$\pm$0.61          & 80.50$\pm$1.13          & 70.42$\pm$0.85          & 90.77$\pm$0.44          & 76.81$\pm$8.24          & 80.58$\pm$1.15          & 65.69$\pm$0.09          & 64.52$\pm$0.14          & 63.31$\pm$0.20          & 73.47          \\
  FED-PUB~\cite{baek2023personalized}       & 90.74$\pm$0.05 & 90.55$\pm$0.13          & 90.12$\pm$0.09          & 93.29$\pm$0.19          & 92.73$\pm$0.18          & 91.92$\pm$0.12          & 67.77$\pm$0.09          & 66.58$\pm$0.08          & 66.64$\pm$0.12          & 81.59          \\
  \rowcolor{gray!20}
  FedGTA~\cite{li2023fedgta}        & 86.69$\pm$0.18          & 86.66$\pm$0.23          & 85.01$\pm$0.87          & 93.33$\pm$0.12          & 93.50$\pm$0.21          & 92.61$\pm$0.15          & 60.32$\pm$0.04                   & 60.22$\pm$0.09                   & 58.74$\pm$0.14                         & 79.45          \\ 
  AdaFGL~\cite{li2024adafgl}        & 80.20$\pm$0.05          & 83.62$\pm$0.26          & 84.53$\pm$0.23          & 86.69$\pm$0.19          & 89.85$\pm$0.83          & 88.11$\pm$0.05          & 52.73$\pm$0.19          & 51.77$\pm$0.36          & 50.94$\pm$0.08          & 76.97 \\     
  \rowcolor{gray!20}
  FedTAD~\cite{zhu2024fedtad}       & 82.20$\pm$1.20          & 85.50$\pm$0.33          & 83.91$\pm$1.54          & 92.29$\pm$0.39          & 90.59$\pm$0.09          & 89.18$\pm$0.84          & 65.35$\pm$0.14          & 64.06$\pm$0.25          & 64.45$\pm$0.13          & 78.87 \\    
  FedIIH~\cite{wentao2025fediih} & 90.74$\pm$0.13          & 90.86$\pm$0.23 & 90.44$\pm$0.05 & 93.42$\pm$0.02 & 94.22$\pm$0.08 & 93.55$\pm$0.09 & 70.30$\pm$0.06 & 69.34$\pm$0.02 & 68.65$\pm$0.04 & 83.10 \\
  \rowcolor{gray!20}
  FedICI~\cite{yuintegrating} & 91.07$\pm$0.12          & 91.12$\pm$0.10          & 90.37$\pm$0.08          & 93.61$\pm$0.07          & 94.34$\pm$0.17          & 93.59$\pm$0.12          & 70.81$\pm$0.09          & \underline{69.60$\pm$0.06}          & \underline{68.79$\pm$0.11}         & \underline{83.60}              \\
  FedSSA~\cite{yu2026heterogeneity}                & \underline{91.09$\pm$0.07}          & \underline{91.30$\pm$0.13} & \underline{90.62$\pm$0.09} & \underline{93.86$\pm$0.15} & \underline{94.62$\pm$0.14} & \underline{93.76$\pm$0.07} & \underline{70.86$\pm$0.13} & 69.47$\pm$0.08 & 68.77$\pm$0.13 & 83.53 \\ \hline  
  \rowcolor{yellow!30}
  FedGMC (Ours)                  & \textbf{91.38$\pm$0.10}          & \textbf{91.85$\pm$0.14} & \textbf{90.92$\pm$0.09} & \textbf{93.98$\pm$0.06} & \textbf{94.75$\pm$0.07} & \textbf{94.51$\pm$0.10} & \textbf{70.95$\pm$0.08} & \textbf{69.68$\pm$0.15} & \textbf{68.88$\pm$0.13} & \textbf{83.90} \\ \hline  
  \end{tabular}
  }
  \end{table*}

\begin{table*}[t]
\centering
\scriptsize
\caption{Comparisons on five \textbf{heterophilic} graph datasets under \textbf{non-overlapping} subgraph partitioning setting. Accuracy (\%) is reported for \textit{Roman-empire} and \textit{Amazon-ratings}, and AUC (\%) is reported for \textit{Minesweeper}, \textit{Tolokers}, and \textit{Questions}.}
\label{table3}
\renewcommand{\arraystretch}{0.9} 
  \scalebox{0.75}{
  \begin{tabular}{lcccccccccc}
  \hline
  \rowcolor{gray!50}
  \textbf{}     & \multicolumn{3}{c}{Roman-empire}                                            & \multicolumn{3}{c}{Amazon-ratings}                                          & \multicolumn{3}{c}{Minesweeper}                                             & -              \\ \cline{2-11} 
  Methods       & 5 Clients               & 10 Clients              & 20 Clients              & 5 Clients               & 10 Clients              & 20 Clients              & 5 Clients               & 10 Clients              & 20 Clients              & -              \\ \hline
  \rowcolor{gray!20}
  Local         & 33.65$\pm$0.13          & 28.42$\pm$0.26          & 23.89$\pm$0.32          & 45.03$\pm$0.31 & \underline{45.89$\pm$0.19} & 46.02$\pm$0.02 & 71.35$\pm$0.17          & 69.96$\pm$0.16          & 69.31$\pm$0.09          & -              \\ \hline
  FedAvg~\cite{mcmahan2017communication}        & 38.93$\pm$0.32          & 35.43$\pm$0.32          & 32.00$\pm$0.39          & 41.26$\pm$0.53          & 41.66$\pm$0.14          & 42.20$\pm$0.21          & 72.60$\pm$0.08          & 69.68$\pm$0.09          & 71.36$\pm$0.16          & -              \\
  \rowcolor{gray!20}
  FedProx~\cite{MLSYS2020_1f5fe839}       & 27.95$\pm$0.59          & 26.43$\pm$1.41          & 23.12$\pm$0.49          & 36.92$\pm$0.02          & 36.86$\pm$0.14          & 36.96$\pm$0.05          & 71.91$\pm$0.27          & 70.66$\pm$0.20          & 71.50$\pm$0.37          & -              \\
  FedPer~\cite{Arivazhagan2019}        & 20.75$\pm$1.75          & 15.51$\pm$1.13          & 15.45$\pm$2.76          & 36.62$\pm$0.30          & 32.34$\pm$1.01          & 36.96$\pm$0.03          & 58.73$\pm$10.45         & 65.35$\pm$7.02          & 53.80$\pm$11.40         & -              \\
  \rowcolor{gray!20}
  GCFL~\cite{NEURIPS2021_9c6947bd}          & 40.65$\pm$0.14          & 40.51$\pm$0.24          & 37.85$\pm$0.25          & 36.92$\pm$0.05          & 36.86$\pm$0.14          & 36.96$\pm$0.02          & 72.04$\pm$0.13          & 71.88$\pm$0.12          & 69.20$\pm$0.18          & -              \\
  FedGNN~\cite{wu2021fedgnn}        & 30.26$\pm$0.11          & 29.09$\pm$0.13          & 26.60$\pm$0.02          & 36.80$\pm$0.06          & 36.72$\pm$0.15          & 36.45$\pm$0.09          & 72.15$\pm$0.13          & 71.08$\pm$0.07          &  71.71$\pm$0.27    & -              \\
  \rowcolor{gray!20}
  FedSage+\cite{NEURIPS2021_34adeb8e}      & 57.26$\pm$0.13          & 49.07$\pm$0.17          & 38.36$\pm$0.12          & 36.82$\pm$0.19          & 36.71$\pm$0.20          & 37.03$\pm$0.02          & 77.74$\pm$0.26 & 72.80$\pm$0.27    & 69.70$\pm$0.23          & -              \\
  FED-PUB~\cite{baek2023personalized}       & 40.80$\pm$0.26          & 36.77$\pm$0.30          & 32.67$\pm$0.39          & 44.41$\pm$0.41    &  44.85$\pm$0.17    & 45.39$\pm$0.50    & 72.18$\pm$0.02          & 71.69$\pm$0.71          & 71.41$\pm$0.87          & -              \\
  \rowcolor{gray!20}
  FedGTA~\cite{li2023fedgta}        & 61.56$\pm$0.27    & 60.94$\pm$0.19    & 59.65$\pm$0.28    & 41.22$\pm$0.66          & 39.40$\pm$0.44          & 39.24$\pm$0.12          & 73.54$\pm$1.56          & 72.65$\pm$1.21          & 69.63$\pm$4.54          & -              \\
  AdaFGL~\cite{li2024adafgl}        & 67.64$\pm$0.18          & 64.55$\pm$0.09          & 62.42$\pm$0.26          & 41.70$\pm$0.06          & 42.30$\pm$0.22          & 42.59$\pm$0.14          & 73.24$\pm$1.13                  & 70.79$\pm$2.14                   & 71.26$\pm$1.31                         & -          \\ 
  \rowcolor{gray!20}
  FedTAD~\cite{zhu2024fedtad} & 45.26$\pm$0.19          & 44.71$\pm$0.38          & 42.04$\pm$0.13          & 43.59$\pm$0.33          & 43.35$\pm$0.29          & 44.50$\pm$0.26          & 72.39$\pm$0.43                  & 71.99$\pm$0.13                   & 72.74$\pm$0.03                         & -          \\ 
  FedIIH~\cite{wentao2025fediih} & 68.32$\pm$0.05 & 66.44$\pm$0.28 & 64.61$\pm$0.13 & 44.26$\pm$0.24          & 44.24$\pm$0.10          & 45.19$\pm$0.04          & 74.29$\pm$0.02    & 73.23$\pm$0.04 & 72.81$\pm$0.02 &   -             \\
  \rowcolor{gray!20}
  FedICI~\cite{yuintegrating} & \underline{68.96$\pm$0.08}          & \underline{67.92$\pm$0.11}          & \underline{66.16$\pm$0.10}          & \underline{45.97$\pm$0.06}          & 45.36$\pm$0.08          & 46.07$\pm$0.08          & \underline{86.35$\pm$0.08}          & \underline{85.32$\pm$0.07}          & \underline{84.25$\pm$0.09}         & -              \\
  FedSSA~\cite{yu2026heterogeneity}    & 68.67$\pm$0.10           & 66.81$\pm$0.09 & 65.14$\pm$0.15 & 45.18$\pm$0.14          & 45.11$\pm$0.15          & \underline{46.13$\pm$0.05}          & 82.26$\pm$0.14    & 82.16$\pm$0.08 & 82.60$\pm$0.11 & -               \\ \hline 
  \rowcolor{yellow!30}
FedGMC (Ours) & \textbf{71.95$\pm$0.07} & \textbf{70.71$\pm$0.15} & \textbf{66.99$\pm$0.17} & \textbf{46.11$\pm$0.10}          & \textbf{46.09$\pm$0.12}          & \textbf{46.56$\pm$0.08}          & \textbf{86.79$\pm$0.07}    & \textbf{86.04$\pm$0.08} & \textbf{84.84$\pm$0.10} & -               \\ \hline
\rowcolor{gray!50}
                & \multicolumn{3}{c}{Tolokers}                                                & \multicolumn{3}{c}{Questions}                                               & \multicolumn{4}{c}{Avg.}                                                                     \\ \cline{2-11} 
  Methods       & 5 Clients               & 10 Clients              & 20 Clients              & 5 Clients               & 10 Clients              & 20 Clients              & 5 Clients               & 10 Clients              & 20 Clients              & All            \\ \hline
  \rowcolor{gray!20}
  Local         & 67.81$\pm$0.17          & 70.04$\pm$0.23          & 62.34$\pm$0.67          & 66.73$\pm$0.57          & 57.96$\pm$0.10          & 60.11$\pm$0.21          & 56.91                   & 54.45                   & 52.31                   & 54.56          \\ \hline
  FedAvg~\cite{mcmahan2017communication}        & 60.74$\pm$0.31          & 54.73$\pm$0.50          & 56.36$\pm$0.39          & 65.68$\pm$0.23          & 58.91$\pm$0.22          & 60.33$\pm$0.15          & 55.84                   & 52.08                   & 52.45                   & 53.46          \\
  \rowcolor{gray!20}
  FedProx~\cite{MLSYS2020_1f5fe839}       & 42.90$\pm$0.24          & 41.15$\pm$0.22          & 40.42$\pm$0.62          & 47.36$\pm$0.38          & 45.46$\pm$0.34          & 46.83$\pm$0.11          & 45.41                   & 44.11                   & 43.77                   & 44.43          \\
  FedPer~\cite{Arivazhagan2019}        & 46.61$\pm$9.88          & 54.97$\pm$13.23         & 44.82$\pm$11.61         & 58.38$\pm$9.39          & 59.40$\pm$9.71          & 62.32$\pm$1.56          & 44.22                   & 45.51                   & 42.67                   & 44.13          \\
  \rowcolor{gray!20}
  GCFL~\cite{NEURIPS2021_9c6947bd}          & 64.39$\pm$1.17          & 59.90$\pm$0.85          & 58.82$\pm$0.70          & 60.51$\pm$1.18          & 59.85$\pm$0.16          & 60.31$\pm$0.48          & 54.90                   & 53.80                   & 52.63                   & 53.78          \\
  FedGNN~\cite{wu2021fedgnn}        & 43.10$\pm$0.27          & 41.57$\pm$0.07          & 40.70$\pm$0.74          & 47.55$\pm$0.02          & 45.65$\pm$0.12          & 47.39$\pm$0.13          & 45.97                   & 44.82                   & 44.57                   & 45.12          \\
  \rowcolor{gray!20}
  FedSage+\cite{NEURIPS2021_34adeb8e}      & 75.06$\pm$0.16 & 71.31$\pm$0.14          &  69.73$\pm$0.26    & 64.95$\pm$0.12          &  65.06$\pm$0.27    & 59.33$\pm$0.28          & 62.37            &  58.99             & 54.83                   &  58.73    \\
  FED-PUB~\cite{baek2023personalized}       & 70.88$\pm$0.58          & 72.46$\pm$0.68 & 65.26$\pm$0.59          &  67.71$\pm$3.99    & 59.64$\pm$0.52          & 62.48$\pm$2.92    & 59.20                   & 57.08                   &  55.44             & 57.24          \\
  \rowcolor{gray!20}
  FedGTA~\cite{li2023fedgta}        & 60.83$\pm$0.45          & 55.18$\pm$1.20          & 57.89$\pm$1.61          & 65.56$\pm$1.91          & 58.29$\pm$1.57          & 61.70$\pm$0.35          & 60.54                   & 57.29                   & 57.62                   & 58.49          \\
  AdaFGL~\cite{li2024adafgl}        & 59.26$\pm$2.18          & 54.78$\pm$2.12          & 56.61$\pm$2.93          & 64.23$\pm$2.09          & 58.82$\pm$1.14          & 62.84$\pm$0.49          & 61.21                   & 58.25                   & 59.14                         & 59.54          \\ 
  \rowcolor{gray!20} 
  FedTAD~\cite{zhu2024fedtad} & 60.91$\pm$0.25          & 53.39$\pm$1.73          & 56.47$\pm$1.58          & 68.89$\pm$1.20          & 58.44$\pm$1.06          & 61.51$\pm$1.56          & 58.21                  & 54.38                   & 55.45                         & 56.01         \\
  FedIIH~\cite{wentao2025fediih} &  71.09$\pm$0.26    & 71.32$\pm$0.09    & 70.30$\pm$0.10 & 68.32$\pm$0.03 & 67.99$\pm$0.09 & 65.40$\pm$0.07 & 65.26          & 64.64          & 63.66          & 64.52 \\
  \rowcolor{gray!20}
  FedICI~\cite{yuintegrating} & 74.05$\pm$0.10          & 73.74$\pm$0.13          & 70.79$\pm$0.08          & 69.44$\pm$0.15          & 68.54$\pm$0.16          & 65.67$\pm$0.07          & \underline{68.95}          & \underline{68.18}          & \underline{66.59}         & \underline{67.88}              \\
  FedSSA~\cite{yu2026heterogeneity}                & \underline{75.82$\pm$0.05} & \underline{73.96$\pm$0.10} & \underline{72.29$\pm$0.11} & \underline{69.51$\pm$0.15}          & \underline{68.69$\pm$0.18}          & \underline{65.74$\pm$0.07}          & 68.29    & 67.35 & 66.10 & 67.34               \\ \hline
  \rowcolor{yellow!30}
FedGMC (Ours) & \textbf{76.49$\pm$0.15} & \textbf{74.03$\pm$0.09} & \textbf{72.33$\pm$0.06} & \textbf{69.73$\pm$0.08}          & \textbf{69.72$\pm$0.14}          & \textbf{65.81$\pm$0.12}          & \textbf{70.21}    & \textbf{69.32} & \textbf{67.31} & \textbf{68.95}               \\ \hline
      \end{tabular}
       }
\end{table*}
  
\subsection{Ablation Study}
To shed light on the contributions of components in our FedGMC, we perform ablation studies on \textit{Cora} and \textit{Roman-empire} datasets, which are shown in Fig.~\ref{fig-ablation}. Specifically, we employ `w/o Semantic', `w/o Structural', and `w/o Reﬁnement' to represent the reduced methods by removing `semantic manifold calibration', `structural manifold calibration', and `global manifold reﬁnement', respectively. As shown in Fig.~\ref{fig-ablation}, disabling any one of these components consistently leads to a clear performance degradation, which confirms that each component plays a vital role. Notably, removing structural manifold calibration alone causes the accuracy on \textit{Cora} to drop by more than 8\%. Ablation studies on other datasets are presented in~\ref{ap_ablation_study}.


\begin{figure}[t]
  \centering
  \begin{minipage}[t]{0.49\linewidth}
    \centering
    \subfloat[\footnotesize{\textit{Cora}}]{\includegraphics[width=0.5\columnwidth]{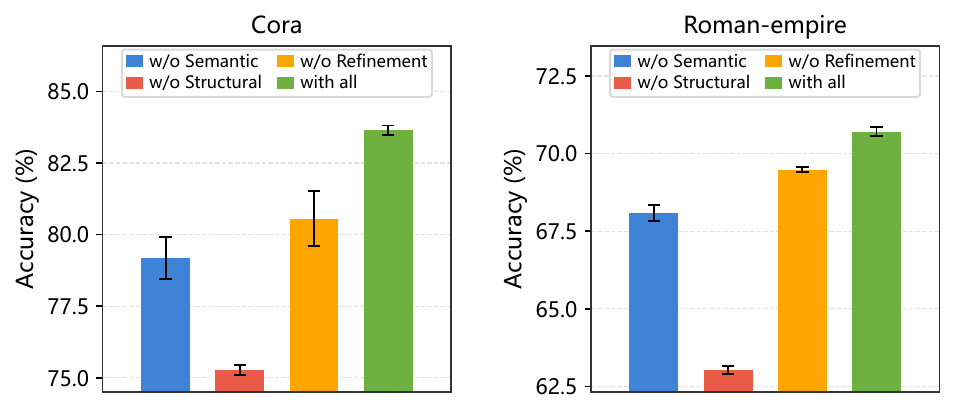}}
    \hfill
    \subfloat[\footnotesize{\textit{Roman-empire}}]{\includegraphics[width=0.5\columnwidth]{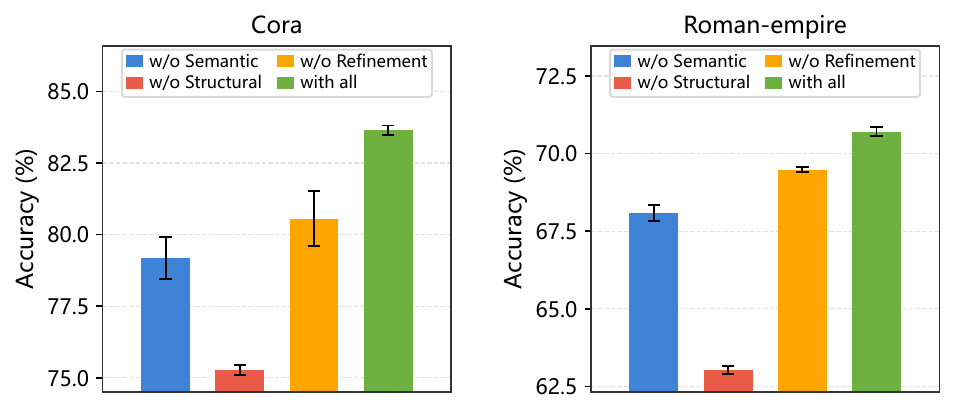}}
    \caption{Ablation studies under non-overlapping partitioning setting with 10 clients.}
    \label{fig-ablation}
  \end{minipage}
    \begin{minipage}[t]{0.49\linewidth}
    \centering
    \subfloat[\footnotesize{\textit{PubMed}}]{\includegraphics[width=0.5\columnwidth]{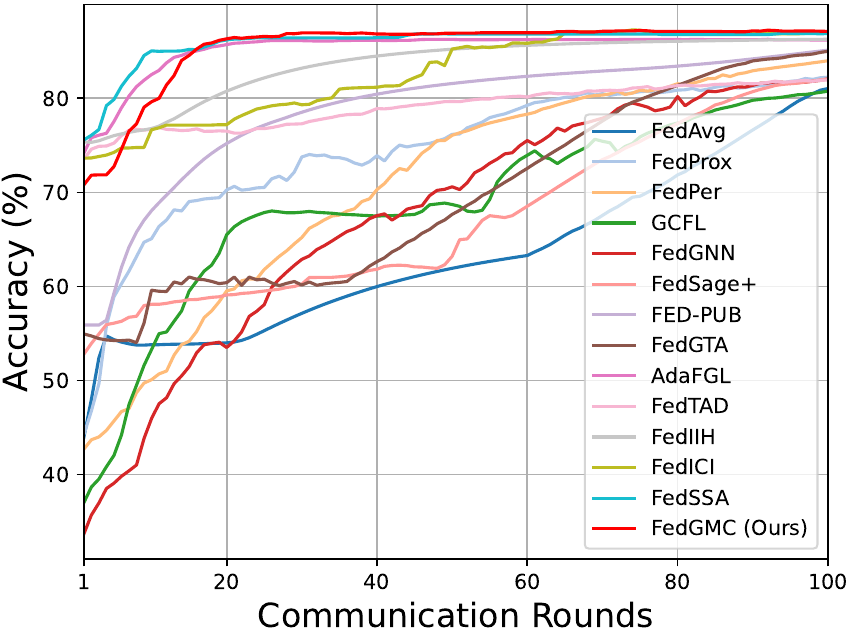}\label{fig5_1}}
    \hfill
    \subfloat[\footnotesize{\textit{Tolokers}}]{\includegraphics[width=0.5\columnwidth]{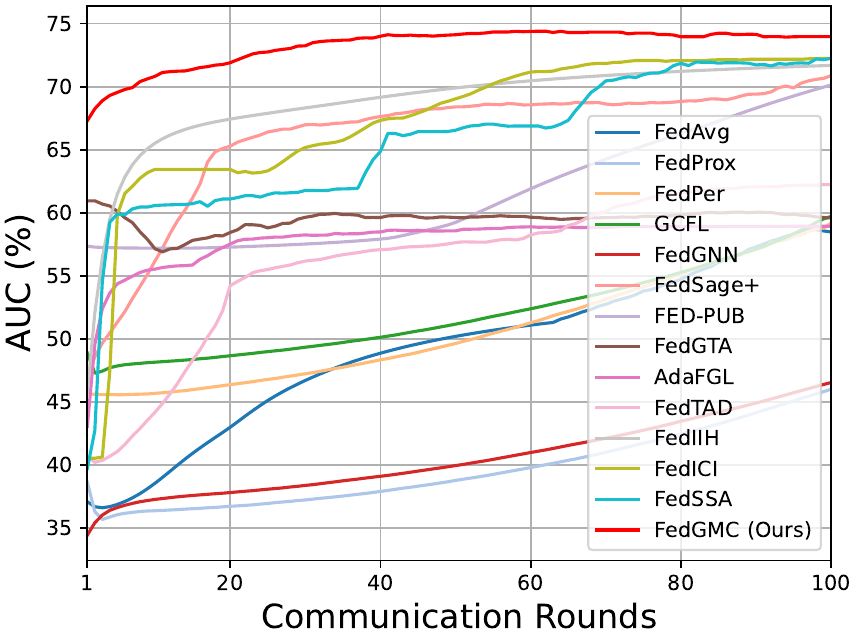}\label{fig5_2}}
    \caption{Convergence curves on two datasets under overlapping partitioning setting with 30 clients.}
    \label{fig5}
  \end{minipage}
\end{figure}

\subsection{Convergence Curves}
\label{Convergence_Curves}
To evaluate the convergence behavior of our proposed FedGMC and baseline methods, we plot the convergence curves in Fig.~\ref{fig5}. The convergence curves show that our FedGMC exhibits only minor fluctuations, which confirms its strong stability. Moreover, our FedGMC converges rapidly within a small number of communication rounds, which highlights its practical efficiency for real-world applications. For example, FedGMC achieves convergence after an average of only 40 communication rounds, while baseline methods like FedIIH need around 80 communication rounds on average. This faster convergence property is due to our semantic manifold calibration and structural manifold calibration, which effectively reduce the impact of heterogeneity. Additional convergence curves are available in Appendix~\ref{ap_convergence_curves}.

\subsection{Sensitivity Analysis on Hyperparameters}
Here we perform a comprehensive sensitivity analysis of hyperparameters involved in our proposed FedGMC. Since our FedGMC includes four key hyperparameters (\textit{i.e.}, the number of global structural manifold $Q$, the number of local structural manifold $B$, temperature hyperparameter $\tau$, and step size $\eta$), we plot accuracy curves accompanied by variance bars under different values of hyperparameters on \textit{Amazon-Computer} dataset. As shown in Fig.~\ref{fig6}, our FedGMC exhibits only minor performance variations under different hyperparameter values, which demonstrates the robustness of FedGMC to hyperparameter values. More sensitivity analyses are provided in Appendix~\ref{ap_sensitivity_analysis}.

\begin{figure}[]
  \centering
  \subfloat[\footnotesize{$Q$}]{\includegraphics[width=0.25\columnwidth]{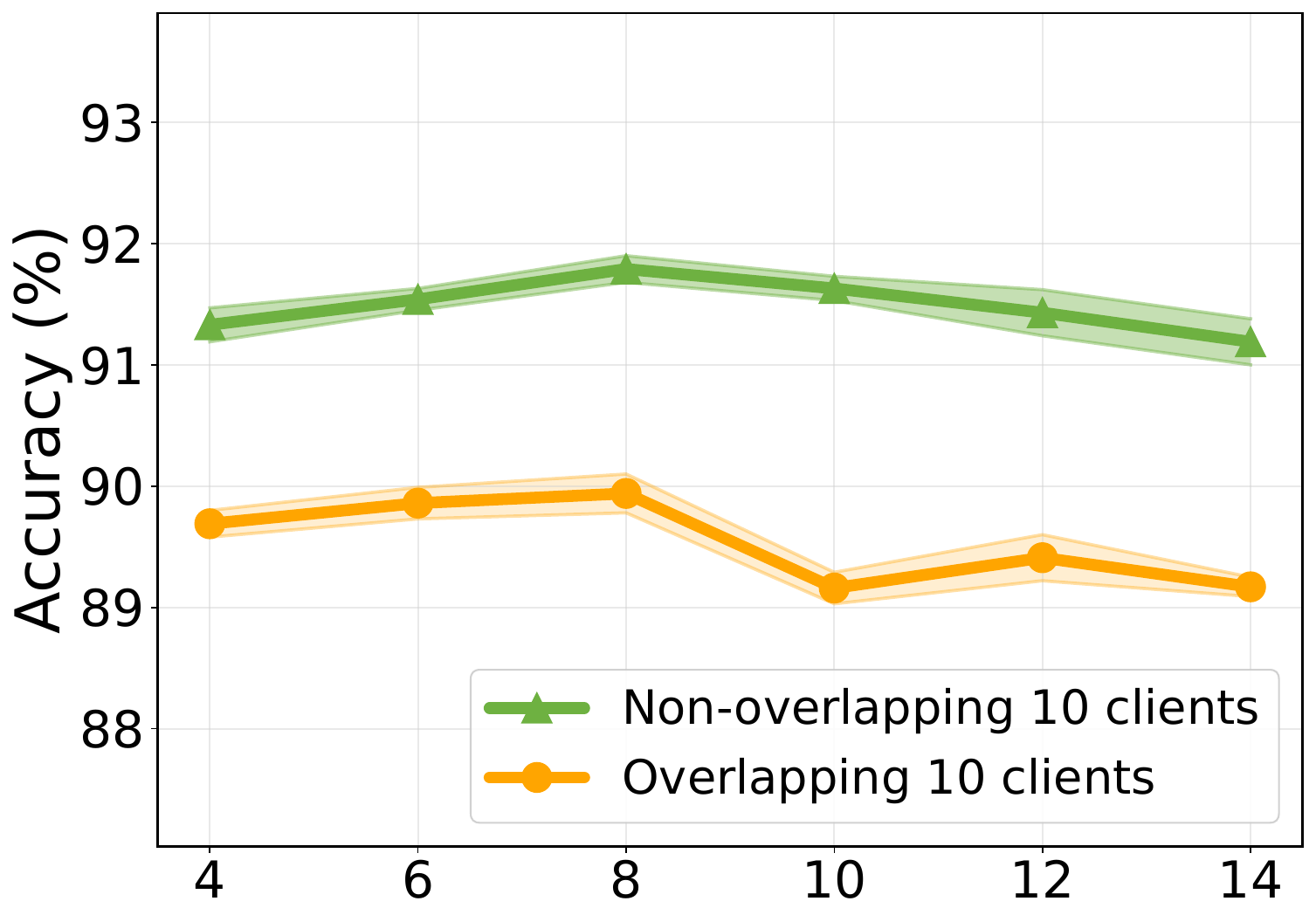}\label{fig6_1}}
  \hfill
  \subfloat[\footnotesize{$B$}]{\includegraphics[width=0.25\columnwidth]{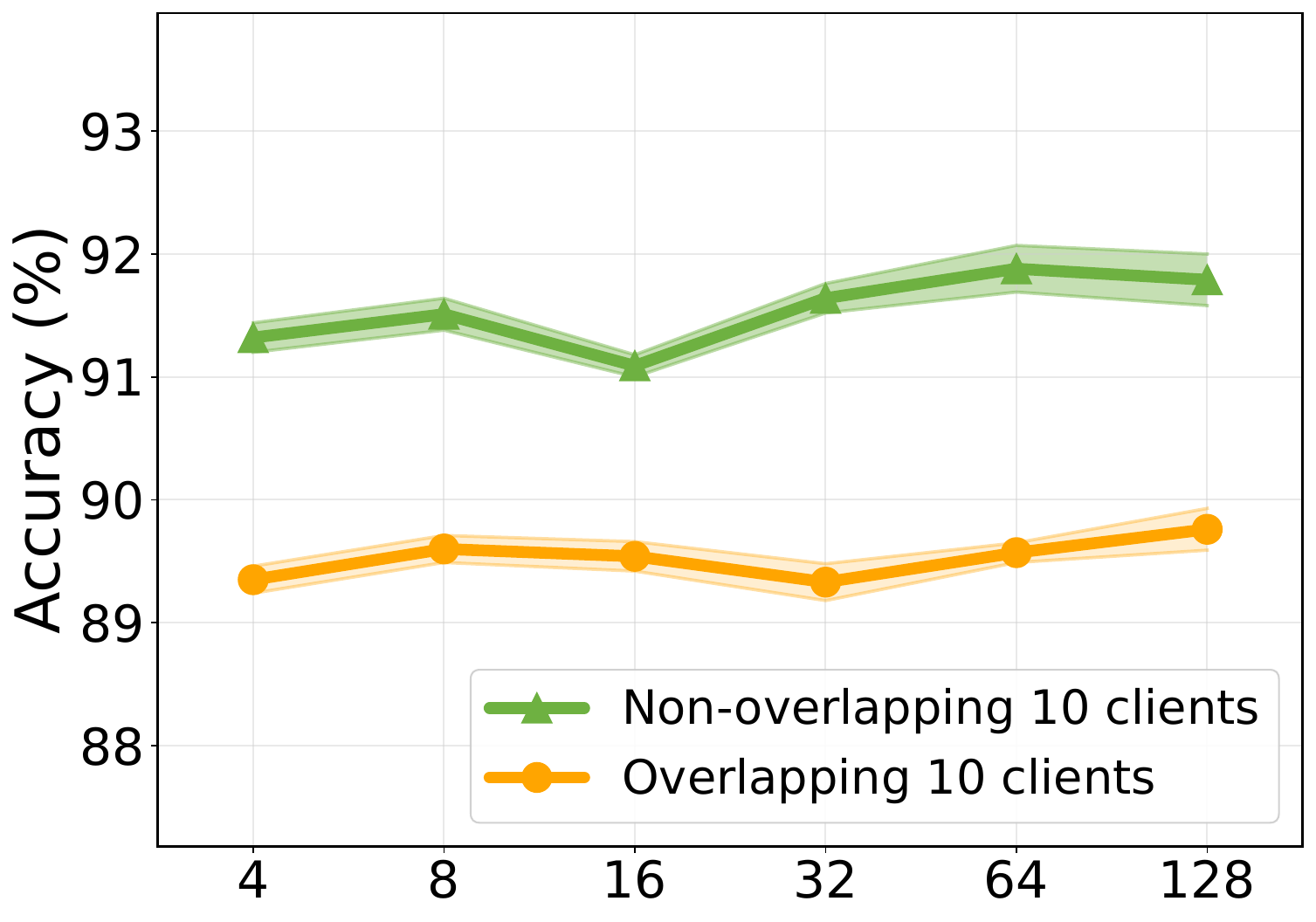}\label{fig6_2}}
  \hfill
  \subfloat[\footnotesize{$\tau$}]{\includegraphics[width=0.25\columnwidth]{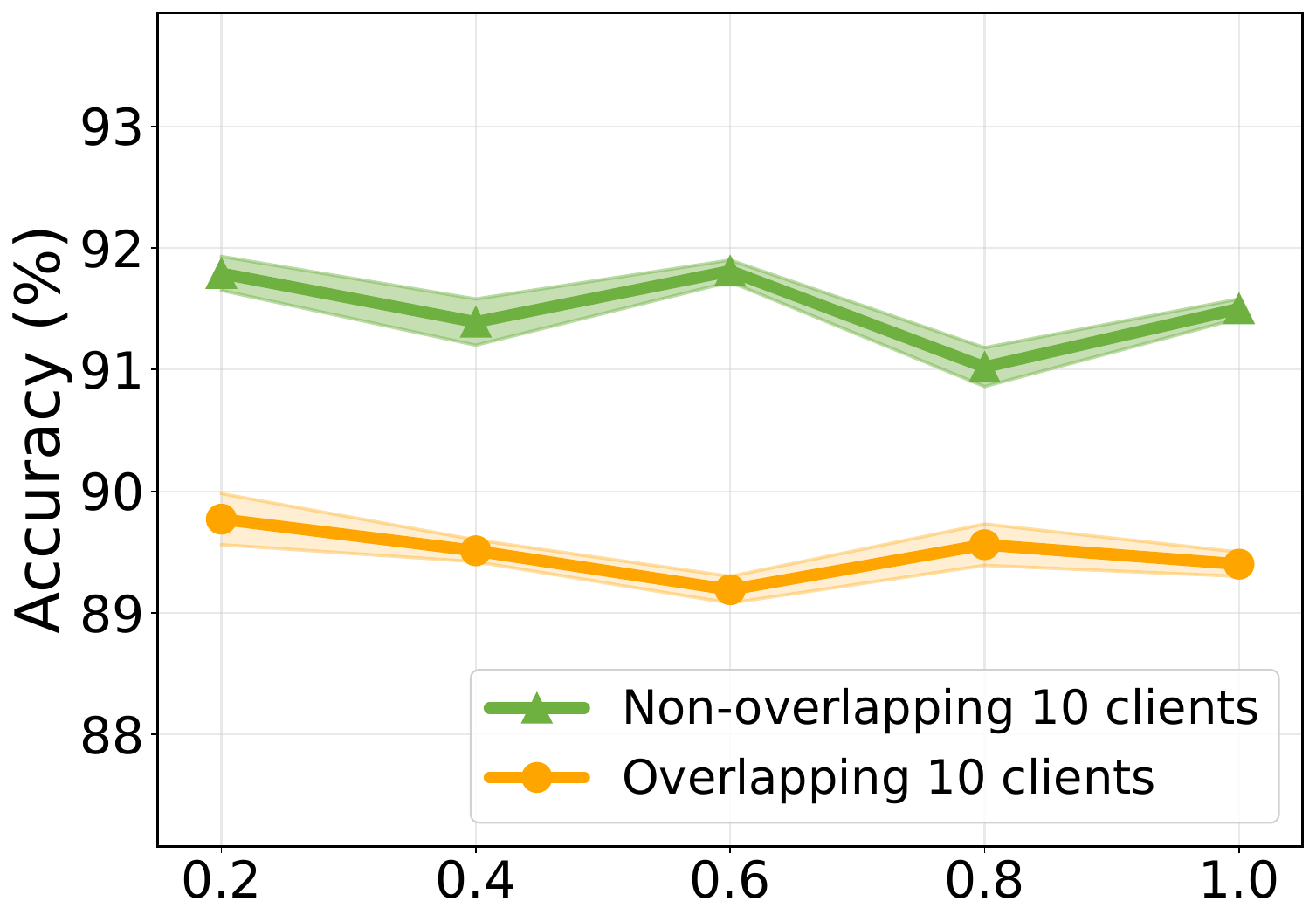}\label{fig6_3}}
  \hfill
    \subfloat[\footnotesize{$\eta$}]{\includegraphics[width=0.25\columnwidth]{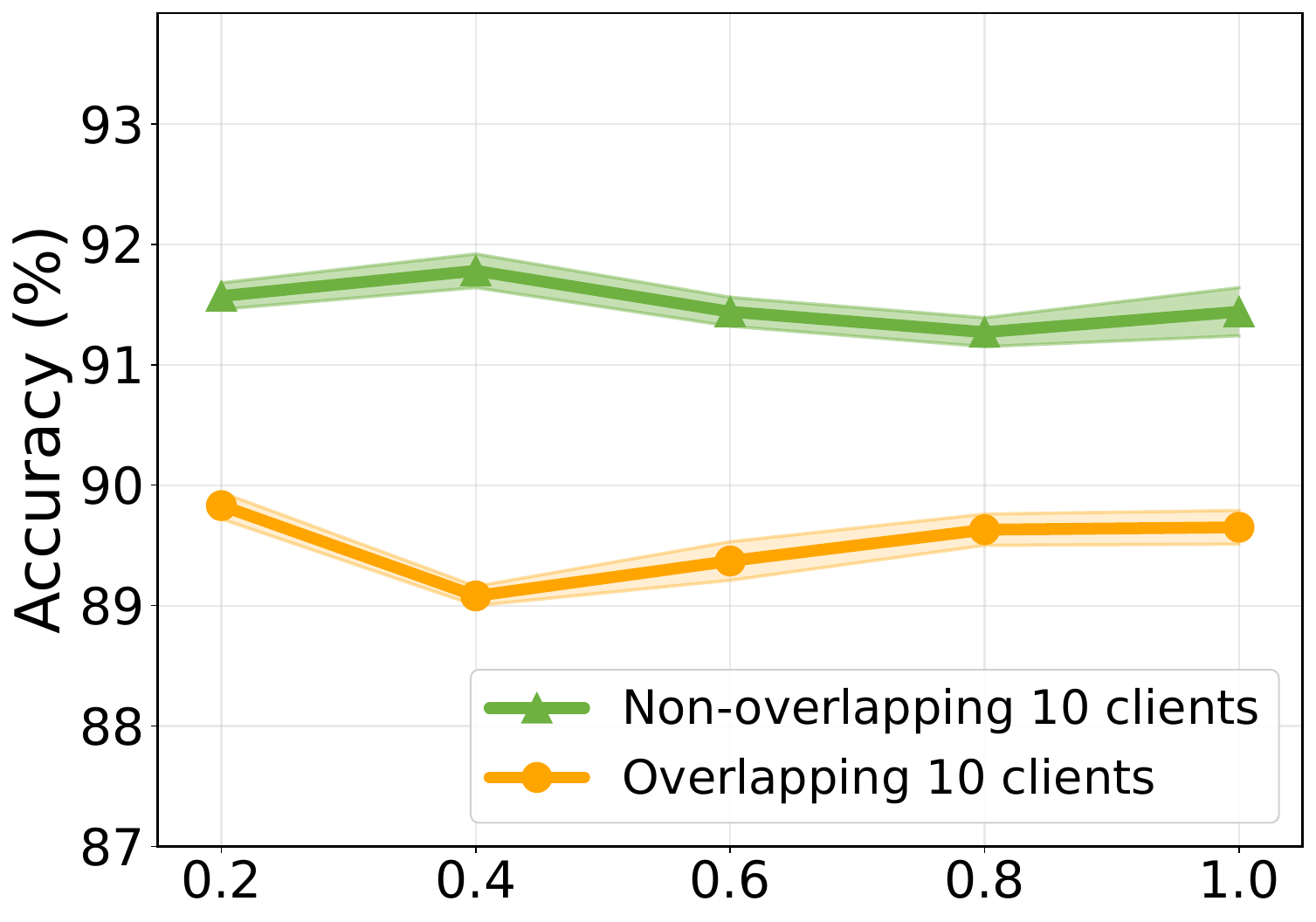}\label{fig6_4}}
  \caption{Accuracy curves accompanied by standard deviation bands on \textit{Amazon-Computer} dataset under different values of $Q$, $B$, $\tau$, and $\eta$.}
  \label{fig6}
\end{figure}

\subsection{Case Study}
To illustrate the effectiveness of FedGMC in manifold calibration, we conduct case studies on semantic and structural manifolds using \textit{Cora} dataset under the non-overlapping partitioning setting with 10 clients. We first visualize the global semantic manifolds of various clients via t-SNE~\cite{van2008visualizing}. As illustrated in Fig.~\ref{fig-case-study}(a), the 2D projections reveal that the global semantic manifolds (\textit{i.e.}, purple stars) are positioned with maximal equidistance across the sphere, which demonstrates the effectiveness of FedGMC in mitigating semantic heterogeneity. After global refinement (\textit{i.e.}, Fig.~\ref{fig-case-study}(b)), the local semantic embeddings (\textit{i.e.}, colored points) show only minor deviations from equidistant anchors while preserving the intrinsic geometric structure. Similarly, the global structural manifolds in Fig.~\ref{fig-case-study}(c) form compact clusters, which validates the effectiveness of addressing structural heterogeneity. After refinement (\textit{i.e.}, Fig.~\ref{fig-case-study}(d)), these clusters exhibit reduced variations yet retain topological structures. More case studies are provided in Appendix~\ref{ap_case_study}.

\begin{figure}[t]
	\centering
	\includegraphics[width=13.5cm]{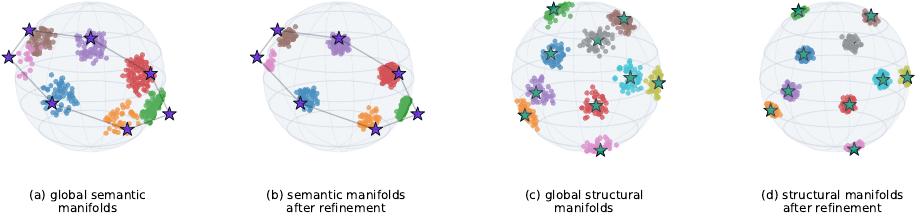}
	\caption{Case studies on \textit{Cora} dataset under non-overlapping partitioning setting with 10 clients.}
	\label{fig-case-study}
\end{figure}

\section{Conclusion}\label{conclusion}
In this paper, we propose \underline{\textbf{Fed}}erated \underline{\textbf{G}}raph \underline{\textbf{M}}anifold \underline{\textbf{C}}alibration (FedGMC) to address heterogeneity in Graph Federated Learning (GFL). Instead of enforcing rigid alignment, FedGMC introduces a dual manifold calibration mechanism that preserves global commonalities while maximizing the personalized representation space of local clients. For semantic heterogeneity, we construct a geometrically optimal semantic manifold via equidistant semantic anchors, so as to guide the calibration of local semantic manifolds. For structural heterogeneity, we construct a global structural manifold by building global structural templates, so as to guide the calibration of local structural manifolds. Since FedGMC effectively balances global commonality and local personalization, it outperforms thirteen state-of-the-art methods on eleven datasets.

\bibliographystyle{unsrt}
\bibliography{cite}

\newpage
\section*{NeurIPS Paper Checklist}

\begin{enumerate}

\item {\bf Claims}
    \item[] Question: Do the main claims made in the abstract and introduction accurately reflect the paper's contributions and scope?
    \item[] Answer: \answerYes{} 
    \item[] Justification: We have summarized the position and key contributions of the paper in the abstract and introduction parts.

\item {\bf Limitations}
    \item[] Question: Does the paper discuss the limitations of the work performed by the authors?
    \item[] Answer: \answerYes{} 
    \item[] Justification: The limitations are discussed in Appendix~\ref{appendix: further discussion}.

\item {\bf Theory assumptions and proofs}
    \item[] Question: For each theoretical result, does the paper provide the full set of assumptions and a complete (and correct) proof?
    \item[] Answer: \answerYes{} 
    \item[] Justification: The assumptions and proof are provided in Appendix~\ref{Theoretical_Analysis}.

    \item {\bf Experimental result reproducibility}
    \item[] Question: Does the paper fully disclose all the information needed to reproduce the main experimental results of the paper to the extent that it affects the main claims and/or conclusions of the paper (regardless of whether the code and data are provided or not)?
    \item[] Answer: \answerYes{} 
    \item[] Justification: The technical details of implementation are introduced in Appendix~\ref{implementation_details}.

\item {\bf Open access to data and code}
    \item[] Question: Does the paper provide open access to the data and code, with sufficient instructions to faithfully reproduce the main experimental results, as described in supplemental material?
    \item[] Answer: \answerYes{} 
    \item[] Justification: The source files are publicly available.

\item {\bf Experimental setting/details}
    \item[] Question: Does the paper specify all the training and test details (e.g., data splits, hyperparameters, how they were chosen, type of optimizer) necessary to understand the results?
    \item[] Answer: \answerYes{} 
    \item[] Justification: The experiment setups and the technical details of implementation are introduced in Appendix~\ref{implementation_details}. 

\item {\bf Experiment statistical significance}
    \item[] Question: Does the paper report error bars suitably and correctly defined or other appropriate information about the statistical significance of the experiments?
    \item[] Answer: \answerYes{} 
    \item[] Justification: We repeat each experiment ten times to obtain consistent and reliable results. The experimental results with mean and standard deviation are provided in Tab.~\ref{table1} and Tab.~\ref{table3}.

\item {\bf Experiments compute resources}
    \item[] Question: For each experiment, does the paper provide sufficient information on the computer resources (type of compute workers, memory, time of execution) needed to reproduce the experiments?
    \item[] Answer: \answerYes{} 
    \item[] Justification: All experiments are conducted on a Linux server with a 2.90\,GHz Intel Xeon Gold 6326 CPU, 64\,GB RAM, and two NVIDIA GeForce RTX 4090 GPUs with 48\,GB memory. Our proposed FedGMC is implemented in Python 3.8.8, PyTorch 1.12.0, and PyTorch Geometric (PyG) 2.5.1.
    
\item {\bf Code of ethics}
    \item[] Question: Does the research conducted in the paper conform, in every respect, with the NeurIPS Code of Ethics \url{https://neurips.cc/public/EthicsGuidelines}?
    \item[] Answer: \answerYes{} 
    \item[] Justification: We have carefully checked the NeurIPS Code of Ethics and confirmed that our paper obeys it. 

\item {\bf Broader impacts}
    \item[] Question: Does the paper discuss both potential positive societal impacts and negative societal impacts of the work performed?
    \item[] Answer: \answerYes{} 
    \item[] Justification: The broader impacts are introduced in Appendix~\ref{appendix: further discussion}. We believe that this paper does not raise any negative societal impacts or ethical concerns.

\item {\bf Safeguards}
    \item[] Question: Does the paper describe safeguards that have been put in place for responsible release of data or models that have a high risk for misuse (e.g., pre-trained language models, image generators, or scraped datasets)?
    \item[] Answer: \answerNA{} 
    \item[] Justification: This paper does not release high-risk models. The proposed FedGMC does not involve such risks.

\item {\bf Licenses for existing assets}
    \item[] Question: Are the creators or original owners of assets (e.g., code, data, models), used in the paper, properly credited and are the license and terms of use explicitly mentioned and properly respected?
    \item[] Answer: \answerYes{} 
    \item[] Justification: In the paper, we have introduced the resources of models and datasets.

\item {\bf New assets}
    \item[] Question: Are new assets introduced in the paper well documented and is the documentation provided alongside the assets?
    \item[] Answer: \answerNA{} 
    \item[] Justification: We did not release any new assets such as datasets, benchmarks, or standalone software packages.

\item {\bf Crowdsourcing and research with human subjects}
    \item[] Question: For crowdsourcing experiments and research with human subjects, does the paper include the full text of instructions given to participants and screenshots, if applicable, as well as details about compensation (if any)? 
    \item[] Answer: \answerNA{} 
    \item[] Justification: The paper is not about crowdsourcing experiments or research with human subjects.

\item {\bf Institutional review board (IRB) approvals or equivalent for research with human subjects}
    \item[] Question: Does the paper describe potential risks incurred by study participants, whether such risks were disclosed to the subjects, and whether Institutional Review Board (IRB) approvals (or an equivalent approval/review based on the requirements of your country or institution) were obtained?
    \item[] Answer: \answerNA{} 
    \item[] Justification: The paper is not about research with human subjects.

\item {\bf Declaration of LLM usage}
    \item[] Question: Does the paper describe the usage of LLMs if it is an important, original, or non-standard component of the core methods in this research? Note that if the LLM is used only for writing, editing, or formatting purposes and does \emph{not} impact the core methodology, scientific rigor, or originality of the research, declaration is not required.
    \item[] Answer: \answerNA{} 
    \item[] Justification: The proposed method development in this research does not involve LLMs as any important, original, or non-standard components.

\end{enumerate}

\newpage
\appendix
\section{Theoretical Analysis}\label{Theoretical_Analysis}
In this section, we provide rigorous theoretical analysis for the convergence properties of our proposed dual manifold calibration mechanism. We first establish the convergence guarantees for semantic manifold calibration, followed by the analysis of structural manifold calibration.

\subsection{Convergence Analysis of Semantic Manifold Calibration}
\subsubsection{Preliminaries and Assumptions}
Before presenting our main theoretical results, we introduce the necessary assumptions and definitions.
\begin{assumption}[Bounded Semantic Representations]
\label{assump:bounded_semantic}
For each client $m \in \{1, 2, \cdots, M\}$ and any node $v \in \mathcal{V}_m$, the ego-embedding satisfies $\|\mathbf{h}_{m,v}^{\mathrm{ego}}\|_2 \leq L_{\mathrm{ego}}$ for some constant $L_{\mathrm{ego}} > 0$.
\end{assumption}
\begin{assumption}[Lipschitz Continuity of Local Models]
\label{assump:lipschitz_local}
The local model that generates ego-embeddings is $L_{\mathrm{lip}}$-Lipschitz continuous with respect to its parameters $\mathbf{W}_m^{\mathrm{ego}}$, i.e., for any two parameter matrices $\mathbf{W}_1, \mathbf{W}_2$:
\begin{equation}
\left\| \sigma(\mathbf{X}_m \mathbf{W}_1) - \sigma(\mathbf{X}_m \mathbf{W}_2) \right\|_\mathrm{F} \leq L_{\mathrm{lip}} \left\| \mathbf{W}_1 - \mathbf{W}_2 \right\|_\mathrm{F}.
\end{equation}
\end{assumption}
\begin{assumption}[Bounded Gradient Variance]
\label{assump:bounded_variance}
The variance of stochastic gradients computed on each client is bounded, i.e., for any client $m$:
\begin{equation}
\mathbb{E}\left[ \left\| \nabla \mathcal{L}_m(\mathbf{W}_m^{\mathrm{ego}}) - \nabla \mathcal{L}(\mathbf{W}_m^{\mathrm{ego}}) \right\|_2^2 \right] \leq \sigma^2,
\end{equation}
where $\mathcal{L}_m$ denotes the local loss on client $m$ and $\mathcal{L}$ denotes the global loss.
\end{assumption}
\begin{definition}[Semantic Manifold Alignment Error]
\label{def:semantic_alignment_error}
For client $m$, we define the semantic manifold alignment error as:
\begin{equation}
\mathcal{E}_m^{\mathrm{sem}} = \left\| \mathbf{R}_m \mathbf{P}_m  - \Delta \right\|_\mathrm{F}^2,
\end{equation}
where $\mathbf{R}_m$ is the optimal orthogonal transformation, $\mathbf{P}_m$ is the local semantic manifold, and $\Delta$ is the global semantic manifold.
\end{definition}

\subsubsection{Properties of Orthogonal Procrustes Transformation}
We first establish fundamental properties of the orthogonal Procrustes transformation used in semantic manifold calibration.
\begin{lemma}[Optimality of Orthogonal Procrustes Solution]
\label{lemma:procrustes_optimal}
Let $\mathbf{P}_m \in \mathbb{R}^{d \times C}$ and $\Delta \in \mathbb{R}^{d \times C}$ be two matrices. The orthogonal matrix $\mathbf{R}_m^* = \mathbf{U}_m \mathbf{V}_m^{\top}$ obtained from the SVD $\Delta \mathbf{P}_m^{\top} = \mathbf{U}_m \mathbf{\Sigma}_m \mathbf{V}_m^{\top}$ is the unique solution to:
\begin{equation}
\mathbf{R}_m^* = \arg\min_{\mathbf{R}_m^{\top} \mathbf{R}_m = \mathbf{I}} \left\| \mathbf{R}_m \mathbf{P}_m - \Delta \right\|_\mathrm{F}^2.
\end{equation}
Moreover, the minimum alignment error satisfies:
\begin{equation}
\left\| \mathbf{R}_m^* \mathbf{P}_m - \Delta \right\|_\mathrm{F}^2 = \|\mathbf{P}_m\|_\mathrm{F}^2 + \|\Delta\|_\mathrm{F}^2 - 2\mathrm{tr}(\mathbf{\Sigma}_m).
\end{equation}
\end{lemma}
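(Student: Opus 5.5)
The plan is to reduce the Procrustes problem to maximizing a linear trace functional over the orthogonal group, and then solve that maximization with the SVD of $\Delta \mathbf{P}_m^{\top}$. First I expand the objective. For any orthogonal $\mathbf{R}_m$,
\begin{equation*}
\left\| \mathbf{R}_m \mathbf{P}_m - \Delta \right\|_\mathrm{F}^2 = \mathrm{tr}(\mathbf{P}_m^{\top}\mathbf{R}_m^{\top}\mathbf{R}_m\mathbf{P}_m) + \|\Delta\|_\mathrm{F}^2 - 2\,\mathrm{tr}(\Delta^{\top}\mathbf{R}_m\mathbf{P}_m).
\end{equation*}
The constraint $\mathbf{R}_m^{\top}\mathbf{R}_m=\mathbf{I}$ makes the first term equal to $\|\mathbf{P}_m\|_\mathrm{F}^2$, so it does not depend on $\mathbf{R}_m$. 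By the cyclic property of the trace, $\mathrm{tr}(\Delta^{\top}\mathbf{R}_m\mathbf{P}_m) = \mathrm{tr}(\mathbf{R}_m\mathbf{P}_m\Delta^{\top}) = \mathrm{tr}(\mathbf{R}_m^{\top}\Delta\mathbf{P}_m^{\top})$. Minimizing the error is therefore equivalent to maximizing $f(\mathbf{R}_m)=\mathrm{tr}(\mathbf{R}_m^{\top}\Delta\mathbf{P}_m^{\top})$ over orthogonal $\mathbf{R}_m$.

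Next I substitute the SVD $\Delta\mathbf{P}_m^{\top}=\mathbf{U}_m\mathbf{\Sigma}_m\mathbf{V}_m^{\top}$, where $\mathbf{U}_m,\mathbf{V}_m\in\mathbb{R}^{d\times d}$ are orthogonal and $\mathbf{\Sigma}_m=\mathrm{diag}(\sigma_1,\dots,\sigma_d)\succeq 0$. This gives $f(\mathbf{R}_m)=\mathrm{tr}(\mathbf{Z}\mathbf{\Sigma}_m)$ with $\mathbf{Z}=\mathbf{V}_m^{\top}\mathbf{R}_m^{\top}\mathbf{U}_m$, and $\mathbf{Z}$ is again orthogonal. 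Every entry of an orthogonal matrix satisfies $|Z_{ii}|\le 1$, so
\begin{equation*}
\mathrm{tr}(\mathbf{Z}\mathbf{\Sigma}_m)=\sum_i Z_{ii}\sigma_i\le\sum_i\sigma_i=\mathrm{tr}(\mathbf{\Sigma}_m).
\end{equation*}
Equality holds at $\mathbf{Z}=\mathbf{I}$, which corresponds to $\mathbf{R}_m=\mathbf{U}_m\mathbf{V}_m^{\top}$. Hence $\mathbf{R}_m^*=\mathbf{U}_m\mathbf{V}_m^{\top}$ is a minimizer. Plugging $f(\mathbf{R}_m^*)=\mathrm{tr}(\mathbf{\Sigma}_m)$ back into the expansion yields the minimum-error formula $\|\mathbf{P}_m\|_\mathrm{F}^2+\|\Delta\|_\mathrm{F}^2-2\,\mathrm{tr}(\mathbf{\Sigma}_m)$. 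This formula is valid for every minimizer, since all minimizers attain the same value of $f$.

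The main obstacle is the word ``unique''. If all $\sigma_i>0$, equality forces $Z_{ii}=1$ for every $i$. Because each row of $\mathbf{Z}$ has unit norm, $Z_{ii}=1$ forces all off-diagonal entries of that row to vanish, so $\mathbf{Z}=\mathbf{I}$ and $\mathbf{R}_m^*$ is unique. In the paper's setting, however, $\Delta\mathbf{P}_m^{\top}$ is $d\times d$ with rank at most $C$. In fact $\Delta\mathbf{1}_C=0$ by construction, so the rank is at most $C-1$. Whenever $d\ge C$, some singular values are zero. Then $\mathbf{Z}$ can be any matrix of the form $\mathrm{diag}(\mathbf{I},\mathbf{Q})$ with $\mathbf{Q}$ orthogonal on the null block, and the minimizer is not unique.

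I would therefore prove uniqueness under an explicit nondegeneracy hypothesis, namely $\mathrm{rank}(\Delta\mathbf{P}_m^{\top})=d$, or at least $\mathrm{rank}\ge d-1$ when the problem is restricted to $SO(d)$. In the degenerate case I would state the weaker conclusion: $\mathbf{U}_m\mathbf{V}_m^{\top}$ is a minimizer, the minimal value is the stated formula, and the product $\mathbf{R}_m^*\mathbf{P}_m$ is determined on the range of $\mathbf{P}_m$. The last claim follows because $\mathbf{R}_m^*$ is fixed on the span of the nonzero singular directions, and the calibration loss only depends on $\mathbf{R}_m^*$ through those directions.
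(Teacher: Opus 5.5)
Your argument follows the paper's route: expand the norm, reduce to maximizing $\mathrm{tr}(\mathbf{R}_m^{\top}\Delta\mathbf{P}_m^{\top})$, insert the SVD, and pass to $\mathbf{Z}=\mathbf{V}_m^{\top}\mathbf{R}_m^{\top}\mathbf{U}_m$. The one difference is that you bound $\mathrm{tr}(\mathbf{Z}\mathbf{\Sigma}_m)$ using $|Z_{ii}|\le 1$ instead of von Neumann's trace inequality. This is more elementary and makes the equality case transparent.

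Your objection to uniqueness is correct, and it locates the actual error in the paper's proof. The paper asserts $\mathrm{tr}(\mathbf{\Sigma}_m\mathbf{Z})=\mathrm{tr}(\mathbf{\Sigma}_m)$ if and only if $\mathbf{Z}=\mathbf{I}$. That is true only when every $\sigma_i>0$. Since $\Delta\mathbf{1}_C=\mathbf{0}$, we have $\mathrm{rank}(\Delta\mathbf{P}_m^{\top})\le C-1$, so $\Delta\mathbf{P}_m^{\top}$ is singular whenever $d\ge C$; on the binary datasets its rank is at most $1$. The uniqueness clause therefore fails in the paper's own setting. Your hypothesis $\mathrm{rank}(\Delta\mathbf{P}_m^{\top})=d$ is the right repair. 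Under it, your unit-row-norm argument proves uniqueness, and it also shows that $\mathbf{U}_m\mathbf{V}_m^{\top}$ does not depend on which SVD is chosen.

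What does not hold is your fallback claim that $\mathbf{R}_m^{*}\mathbf{P}_m$ is determined in the degenerate case. Let $r=\mathrm{rank}(\Delta\mathbf{P}_m^{\top})$. Your own equality analysis shows the minimizers are exactly $\mathbf{R}=\mathbf{U}_m\,\mathrm{diag}(\mathbf{I}_r,\mathbf{Q})\,\mathbf{V}_m^{\top}$ with $\mathbf{Q}$ any orthogonal $(d-r)\times(d-r)$ matrix. So $\mathbf{R}$ is pinned down only on $\mathrm{span}\{\mathbf{v}_1,\dots,\mathbf{v}_r\}=\mathrm{range}(\mathbf{P}_m\Delta^{\top})\subseteq\mathbf{P}_m(\mathbf{1}_C^{\perp})$. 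That subspace is spanned by prototype differences, so in general it misses the direction of the prototype mean and is a proper subspace of $\mathrm{range}(\mathbf{P}_m)$.

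Concretely, take $C=d=2$, $\Phi=\mathbf{I}_2$ and $\mathbf{P}_m=\mathbf{I}_2$. Both $\mathbf{R}=\mathbf{I}_2$ and $\mathbf{R}=\begin{pmatrix}0&-1\\-1&0\end{pmatrix}$ are orthogonal and attain the minimum $4-2\sqrt{2}$. Yet they give different calibrated prototypes $\mathbf{R}\mathbf{P}_m$.

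What is invariant is $\mathbf{R}^{\top}$ on $\mathrm{range}(\Delta\mathbf{P}_m^{\top})$, since $\mathbf{R}^{\top}\mathbf{u}_i=\mathbf{v}_i$ for $i\le r$. Read the loss in the column convention of $\mathbf{R}_m\mathbf{P}_m$. Then $\|\mathbf{R}\mathbf{h}-\bm{\delta}\|_2^2=\|\mathbf{h}\|_2^2+\|\bm{\delta}\|_2^2-2\langle\mathbf{h},\mathbf{R}^{\top}\bm{\delta}\rangle$, so the calibration loss is the same for every minimizer whenever all anchors lie in $\mathrm{range}(\Delta\mathbf{P}_m^{\top})$, for example when $\mathrm{rank}\,\mathbf{P}_m=C$. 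Your conclusion about the loss can thus be kept, but the justification has to go through $\mathbf{R}^{\top}$ acting on the anchors, not through $\mathbf{R}$ acting on the prototypes.
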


\begin{proof}
We expand the objective function:
\begin{align}
\left\| \mathbf{R}_m \mathbf{P}_m - \Delta \right\|_\mathrm{F}^2 &= \mathrm{tr}\left[(\mathbf{R}_m \mathbf{P}_m - \Delta)^{\top} (\mathbf{R}_m \mathbf{P}_m - \Delta)\right] \\
&= \mathrm{tr}(\mathbf{P}_m^{\top} \mathbf{R}_m^{\top} \mathbf{R}_m \mathbf{P}_m) - 2\mathrm{tr}(\Delta \mathbf{P}_m^{\top} \mathbf{R}_m^{\top}) + \mathrm{tr}(\Delta^{\top} \Delta) \\
&= \mathrm{tr}(\mathbf{P}_m^{\top} \mathbf{P}_m) - 2\mathrm{tr}(\Delta \mathbf{P}_m^{\top} \mathbf{R}_m^{\top}) + \mathrm{tr}(\Delta^{\top} \Delta),
\end{align}
where the last equality uses the orthogonality constraint $\mathbf{R}_m^{\top} \mathbf{R}_m = \mathbf{I}$ and the cyclic property of trace.

Minimizing the above expression is equivalent to maximizing $\mathrm{tr}(\Delta \mathbf{P}_m^{\top} \mathbf{R}_m^{\top})$. Let $\Delta \mathbf{P}_m^{\top} = \mathbf{U}_m \mathbf{\Sigma}_m \mathbf{V}_m^{\top}$ be the SVD. Then, we have
\begin{align}
\mathrm{tr}(\Delta \mathbf{P}_m^{\top} \mathbf{R}_m^{\top}) &= \mathrm{tr}(\mathbf{U}_m \mathbf{\Sigma}_m \mathbf{V}_m^{\top} \mathbf{R}_m^{\top} ) \\
&= \mathrm{tr}(\mathbf{\Sigma}_m \mathbf{V}_m^{\top} \mathbf{R}_m^{\top} \mathbf{U}_m).
\end{align}

Let $\mathbf{Z} = \mathbf{V}_m^{\top} \mathbf{R}_m^{\top} \mathbf{U}_m$. Since $\mathbf{R}_m$ is orthogonal, $\mathbf{Z}$ is also orthogonal. By the von Neumann trace inequality, for any orthogonal matrix $\mathbf{Z}$ and diagonal matrix $\mathbf{\Sigma}_m$ with non-negative entries:
\begin{equation}
\mathrm{tr}(\mathbf{\Sigma}_m \mathbf{Z}) \leq \mathrm{tr}(\mathbf{\Sigma}_m),
\end{equation}
with equality if and only if $\mathbf{Z} = \mathbf{I}$. This occurs when $\mathbf{R}_m = \mathbf{U}_m \mathbf{V}_m^{\top}$, completing the proof.
\end{proof}

\begin{lemma}[Isometric Property of Orthogonal Transformation]
\label{lemma:isometric}
The orthogonal Procrustes transformation $\mathbf{R}_m$ preserves all pairwise distances within the local semantic manifold, i.e., for any two class prototypes $\mathbf{p}_{m,i}, \mathbf{p}_{m,j} \in \mathbb{R}^d$ (columns of $\mathbf{P}_m$):
\begin{equation}
\left\| \mathbf{R}_m \mathbf{p}_{m,i} - \mathbf{R}_m \mathbf{p}_{m,j} \right\|_2 = \left\| \mathbf{p}_{m,i} - \mathbf{p}_{m,j} \right\|_2.
\end{equation}
\end{lemma}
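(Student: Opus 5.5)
The plan is to reduce the statement to the elementary fact that an orthogonal matrix preserves the Euclidean norm of every vector, and then apply that fact to the difference of the two prototypes. We use only the constraint $\mathbf{R}_m^{\top}\mathbf{R}_m = \mathbf{I}$ imposed in the Procrustes problem. Which orthogonal matrix Lemma~\ref{lemma:procrustes_optimal} selects is irrelevant, so the argument holds for $\mathbf{R}_m^* = \mathbf{U}_m\mathbf{V}_m^{\top}$ in particular. One should first confirm that $\mathbf{U}_m\mathbf{V}_m^{\top}$ is orthogonal. Since $\Delta\mathbf{P}_m^{\top}$ is a square $d\times d$ matrix, its full SVD has $\mathbf{U}_m,\mathbf{V}_m\in\mathbb{R}^{d\times d}$ orthogonal, and hence $(\mathbf{U}_m\mathbf{V}_m^{\top})^{\top}\mathbf{U}_m\mathbf{V}_m^{\top} = \mathbf{V}_m\mathbf{U}_m^{\top}\mathbf{U}_m\mathbf{V}_m^{\top} = \mathbf{I}$.

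The first step uses linearity of matrix multiplication: $\mathbf{R}_m\mathbf{p}_{m,i} - \mathbf{R}_m\mathbf{p}_{m,j} = \mathbf{R}_m(\mathbf{p}_{m,i}-\mathbf{p}_{m,j})$. Write $\mathbf{x} = \mathbf{p}_{m,i}-\mathbf{p}_{m,j}$. The second step computes the squared norm directly:
\begin{equation*}
\|\mathbf{R}_m\mathbf{x}\|_2^2 = \mathbf{x}^{\top}\mathbf{R}_m^{\top}\mathbf{R}_m\mathbf{x} = \mathbf{x}^{\top}\mathbf{x} = \|\mathbf{x}\|_2^2 .
\end{equation*}
Both sides are nonnegative, so taking square roots gives the claimed equality of distances. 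The same computation with $\mathbf{x}^{\top}\mathbf{R}_m^{\top}\mathbf{R}_m\mathbf{y}$ shows that inner products are preserved as well. This stronger fact means the whole Gram matrix $\mathbf{P}_m^{\top}\mathbf{P}_m$, and hence the intrinsic geometry of the local semantic manifold, is invariant under calibration.

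The argument has no real obstacle. The only point needing care is dimensional consistency: $\mathbf{R}_m$ acts on $\mathbb{R}^d$ and the prototypes are columns of $\mathbf{P}_m\in\mathbb{R}^{d\times C}$, so left multiplication is well defined. The semantic calibration loss in \eqref{eq:semantic_calibration_loss} uses the row-vector convention $\mathbf{h}^{\text{ego}}_{m,v}\mathbf{R}_m$. To cover that case, one remarks that $\|\mathbf{x}^{\top}\mathbf{R}_m\|_2^2 = \mathbf{x}^{\top}\mathbf{R}_m\mathbf{R}_m^{\top}\mathbf{x}$. For a square matrix, $\mathbf{R}_m^{\top}\mathbf{R}_m=\mathbf{I}$ implies $\mathbf{R}_m\mathbf{R}_m^{\top}=\mathbf{I}$, so the isometry holds under either convention.
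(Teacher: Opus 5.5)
Your proposal is correct and matches the paper's proof: you factor out $\mathbf{R}_m$ by linearity and use $\mathbf{R}_m^{\top}\mathbf{R}_m=\mathbf{I}$ to get $\|\mathbf{R}_m\mathbf{x}\|_2^2=\mathbf{x}^{\top}\mathbf{x}$, exactly as the paper does. Your side remarks are valid additions the paper does not make: that $\mathbf{U}_m\mathbf{V}_m^{\top}$ is orthogonal, that the isometry also holds under the row-vector convention $\mathbf{h}^{\text{ego}}_{m,v}\mathbf{R}_m$ used in the loss, and that the Gram matrix $\mathbf{P}_m^{\top}\mathbf{P}_m$ is invariant.
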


\begin{proof}
Since \(\mathbf{R}_m\) is orthogonal, we have
\begin{align}
\left\| \mathbf{R}_m \mathbf{p}_{m,i} - \mathbf{R}_m \mathbf{p}_{m,j} \right\|_2^2 
&= \left\| \mathbf{R}_m (\mathbf{p}_{m,i} - \mathbf{p}_{m,j}) \right\|_2^2 \\
&= (\mathbf{p}_{m,i} - \mathbf{p}_{m,j})^\top \mathbf{R}_m^\top \mathbf{R}_m (\mathbf{p}_{m,i} - \mathbf{p}_{m,j}) \\
&= (\mathbf{p}_{m,i} - \mathbf{p}_{m,j})^\top (\mathbf{p}_{m,i} - \mathbf{p}_{m,j}) \\
&= \left\| \mathbf{p}_{m,i} - \mathbf{p}_{m,j} \right\|_2^2.
\end{align}
This completes the proof.
\end{proof}

\subsubsection{Convergence of Semantic Calibration Loss}
We now establish the convergence of semantic calibration loss under the proposed calibration mechanism.
\begin{theorem}[Convergence of Semantic Calibration]
\label{thm:semantic_convergence}
Under Assumptions~\ref{assump:bounded_semantic},~\ref{assump:lipschitz_local}, and~\ref{assump:bounded_variance}, let \(\{\mathbf{W}_m^{\mathrm{ego}}(t)\}_{t=0}^{\infty}\) be the sequence of local model parameters updated via deterministic gradient descent with learning rates \(\{\eta_t\}\) satisfying
\begin{equation}
\eta_t > 0, \quad \sum_{t=0}^{\infty} \eta_t = \infty, \quad \sum_{t=0}^{\infty} \eta_t^2 < \infty.
\end{equation}
Then the semantic calibration loss converges to a finite limit:
\begin{equation}
\lim_{t \to \infty} \mathcal{L}_{\mathrm{se\_calibration}}(t) = \mathcal{L}_{\mathrm{se\_calibration}}^*,
\end{equation}
and the gradient norm satisfies
\begin{equation}
\liminf_{t \to \infty} \left\| \nabla \mathcal{L}_{\mathrm{se\_calibration}}(t) \right\|_\mathrm{F} = 0.
\end{equation}
\end{theorem}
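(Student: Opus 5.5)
We will prove Theorem~\ref{thm:semantic_convergence} by the standard descent-lemma argument for gradient descent on a function that is bounded below and has Lipschitz gradient. Viewing $\mathcal{L}_{\mathrm{se\_calibration}}$ as a function of the stacked parameters $\mathbf{W}=\{\mathbf{W}_m^{\mathrm{ego}}\}_m$, it is a sum of squared norms and hence bounded below by $0$. If we show it is $\beta$-smooth, i.e.
\begin{equation}
\mathcal{L}(\mathbf{W}') \le \mathcal{L}(\mathbf{W}) + \langle \nabla \mathcal{L}(\mathbf{W}), \mathbf{W}'-\mathbf{W}\rangle + \tfrac{\beta}{2}\|\mathbf{W}'-\mathbf{W}\|_\mathrm{F}^2,
\end{equation}
then substituting the update $\mathbf{W}(t+1)=\mathbf{W}(t)-\eta_t\nabla\mathcal{L}(t)$ gives
\begin{equation}
\mathcal{L}(t+1)\le \mathcal{L}(t)-\eta_t\bigl(1-\tfrac{\beta\eta_t}{2}\bigr)\|\nabla\mathcal{L}(t)\|_\mathrm{F}^2 .
\end{equation}

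The smoothness constant $\beta$ comes from the assumptions, treating $\mathbf{R}_m$ as fixed within a local step (it is recomputed by Lemma~\ref{lemma:procrustes_optimal} only at calibration rounds). By Lemma~\ref{lemma:isometric}, $\|\mathbf{h}\mathbf{R}_m-\bm{\delta}\|_2 \le L_{\mathrm{ego}}+1$, since the anchors have unit norm and $\mathbf{R}_m$ preserves norms. Hence the gradient of each summand with respect to $\mathbf{h}_{m,v}^{\mathrm{ego}}$, namely $2(\mathbf{h}\mathbf{R}_m-\bm{\delta})\mathbf{R}_m^{\top}$, is bounded by $2(L_{\mathrm{ego}}+1)$ (Assumption~\ref{assump:bounded_semantic}) and is $2$-Lipschitz in $\mathbf{h}$. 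Composing with the $L_{\mathrm{lip}}$-Lipschitz map $\mathbf{W}\mapsto\sigma(\mathbf{X}_m\mathbf{W})$ (Assumption~\ref{assump:lipschitz_local}) yields a bound $\beta$ in terms of $L_{\mathrm{ego}}$, $L_{\mathrm{lip}}$ and the node counts.

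With $\beta$ in hand, the convergence argument runs as follows. Since $\sum_t\eta_t^2<\infty$, we have $\eta_t\to 0$, so $\beta\eta_t\le 1$ for all $t\ge t_0$. From then on the losses are nonincreasing, bounded below by $0$, and therefore converge to a finite limit $\mathcal{L}^*$. Telescoping the descent inequality gives
\begin{equation}
\sum_{t\ge t_0}\tfrac{\eta_t}{2}\|\nabla\mathcal{L}(t)\|_\mathrm{F}^2\le \mathcal{L}(t_0)-\mathcal{L}^*<\infty .
\end{equation}
If $\liminf_t\|\nabla\mathcal{L}(t)\|_\mathrm{F}>0$, then $\|\nabla\mathcal{L}(t)\|_\mathrm{F}^2\ge c>0$ eventually, and the sum would dominate $c\sum_t\eta_t/2=\infty$, a contradiction. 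Hence $\liminf_t\|\nabla\mathcal{L}(t)\|_\mathrm{F}=0$. Assumption~\ref{assump:bounded_variance} is not needed in this deterministic setting; we only remark that in the stochastic case it would add an $\eta_t^2\sigma^2$ term, which is summable.

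The main obstacle is the smoothness step, which has two difficulties. First, Assumption~\ref{assump:lipschitz_local} controls only the embedding map itself, not its Jacobian. We will therefore additionally require, or absorb into the assumptions, that $\sigma$ has Lipschitz derivative and that $\mathbf{X}_m$ is bounded, so that the Jacobian is Lipschitz. Second, $\mathbf{R}_m$ depends on $\mathbf{W}$ through $\mathbf{P}_m$, and the SVD solution is nonsmooth when singular values coincide. We plan to handle this by treating $\mathbf{R}_m$ as a block held fixed during gradient steps and updated only at calibration rounds. Each Procrustes update can only decrease $\mathcal{L}$ by Lemma~\ref{lemma:procrustes_optimal}, interpreting the per-node loss through the class-mean decomposition, so monotonicity and the telescoping bound survive the interleaving of the two kinds of updates.
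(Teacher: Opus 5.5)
Your core argument is the paper's own. The appendix proof also postulates $L_{\mathrm{smooth}}$-smoothness, uses $\eta_t\to 0$ to get $1-L_{\mathrm{smooth}}\eta_t/2\ge 1/2$ eventually, and telescopes to $\sum_t\eta_t\|\nabla\mathcal{L}_{\mathrm{se\_calibration}}(t)\|_\mathrm{F}^2<\infty$. It then gets the $\liminf$ by contradiction with $\sum_t\eta_t=\infty$, gets convergence of the loss from monotonicity and nonnegativity, and never uses Assumption~\ref{assump:bounded_variance}. You are right that smoothness does not follow from Assumptions~\ref{assump:bounded_semantic} and~\ref{assump:lipschitz_local}: the paper simply asserts $L_{\mathrm{smooth}}=2L_{\mathrm{lip}}^2(L_{\mathrm{ego}}+1)$, and with ReLU the map $\mathbf{W}\mapsto\sigma(\mathbf{X}_m\mathbf{W})$ is Lipschitz but has a discontinuous Jacobian. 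Your extra hypotheses on $\sigma'$ and $\mathbf{X}_m$ are therefore a genuine repair. The paper's proof never addresses $\mathbf{R}_m$ at all, effectively holding it fixed.

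The step that fails is your claim that re-solving the Procrustes problem at calibration rounds can only decrease $\mathcal{L}_{\mathrm{se\_calibration}}$. Carry out the class-mean decomposition for an orthogonal $\mathbf{R}$ acting on column vectors:
\begin{equation*}
\sum_{v\in\mathcal{V}_m}\bigl\|\mathbf{R}\mathbf{h}_{m,v}^{\mathrm{ego}}-\bm{\delta}_{y_v}\bigr\|_2^2=\sum_{c=1}^{C}\sum_{v\in\mathcal{V}_{m,c}}\bigl\|\mathbf{h}_{m,v}^{\mathrm{ego}}-\bar{\mathbf{h}}_{m,c}\bigr\|_2^2+\sum_{c=1}^{C}|\mathcal{V}_{m,c}|\,\bigl\|\mathbf{R}\bar{\mathbf{h}}_{m,c}-\bm{\delta}_c\bigr\|_2^2 .
\end{equation*}
The $\mathbf{R}$-dependent part is the \emph{class-size-weighted} objective $\|(\mathbf{R}\mathbf{P}_m-\Delta)\mathbf{N}_m^{1/2}\|_\mathrm{F}^2$, with $\mathbf{N}_m=\mathrm{diag}(|\mathcal{V}_{m,1}|,\dots,|\mathcal{V}_{m,C}|)$. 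Its minimizer comes from the SVD of $\Delta\mathbf{N}_m\mathbf{P}_m^{\top}$, not of $\Delta\mathbf{P}_m^{\top}$ as in Lemma~\ref{lemma:procrustes_optimal}. Under class imbalance, which is typical for METIS-partitioned clients, the Lemma's $\mathbf{R}_m$ can strictly increase the per-node loss. Since calibration recurs every communication round, these increases can accumulate and break both the monotonicity and the telescoping bound your argument relies on.

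There is also a convention mismatch. The loss is written as $\mathbf{h}\mathbf{R}_m$ with row vectors, i.e.\ $\mathbf{R}_m^{\top}$ acting on columns, whereas the Lemma optimizes $\mathbf{R}_m\mathbf{P}_m$; you must fix one convention before invoking the Lemma at all.

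To close the gap you have two options. You can keep $\mathbf{R}_m$ fixed throughout, which is exactly the setting of the theorem and of the paper's proof. Alternatively, make the $\mathbf{R}$-step an exact block minimization by using the weighted Procrustes solution (or assume balanced classes). In that case your smoothness constant is uniform over orthogonal $\mathbf{R}$, and the interleaved descent argument goes through.
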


\begin{proof}
Assume the step sizes \(\{\eta_t\}_{t \ge 0}\) satisfy the standard Robbins--Monro conditions:
\begin{align}
\eta_t > 0, \qquad
\sum_{t=0}^\infty \eta_t = \infty, \qquad
\sum_{t=0}^\infty \eta_t^2 < \infty.
\end{align}
We analyze deterministic gradient descent on the semantic calibration loss \(\mathcal{L}_{\mathrm{se\_calibration}}\) with respect to the ego-embedding parameters \(\mathbf{W}^{\mathrm{ego}}\).

\textbf{Step 1: Descent Lemma.}
Assume that \(\mathcal{L}_{\mathrm{se\_calibration}}\) is \(L_{\mathrm{smooth}}\)-smooth with respect to \(\mathbf{W}^{\mathrm{ego}}\). Under Assumptions~\ref{assump:bounded_semantic} and~\ref{assump:lipschitz_local}, the smoothness constant is
\begin{align}
L_{\mathrm{smooth}} = 2 L_{\mathrm{lip}}^2 (L_{\mathrm{ego}} + 1).
\end{align}
The descent lemma yields
\begin{align}
\mathcal{L}_{\mathrm{se\_calibration}}(t+1)
&\leq \mathcal{L}_{\mathrm{se\_calibration}}(t)
- \eta_t \left(1 - \frac{L_{\mathrm{smooth}} \eta_t}{2}\right) \|\nabla \mathcal{L}_{\mathrm{se\_calibration}}(t)\|_\mathrm{F}^2,
\end{align}
provided \(\eta_t < \frac{2}{L_{\mathrm{smooth}}}\).

\textbf{Step 2: Bounded Gradient Norm.}
By the chain rule and Assumptions~\ref{assump:bounded_semantic} and~\ref{assump:lipschitz_local},
\begin{equation}
\left\| \nabla_{\mathbf{W}^{\mathrm{ego}}} \mathcal{L}_{\mathrm{se\_calibration}}(t) \right\|_\mathrm{F} \le G_{\max} := 2 M N L_{\mathrm{ego}} (L_{\mathrm{ego}} + 1) L_{\mathrm{lip}}.
\end{equation}

\textbf{Step 3: Summability of Descent.}
From \(\sum_t \eta_t^2 < \infty\), we have \(\eta_t \to 0\). Therefore, there exists \(T_0\) such that for all \(t \ge T_0\),
\begin{equation}
\eta_t \le \frac{1}{L_{\mathrm{smooth}}}.
\end{equation}
Consequently,
\begin{equation}
1 - \frac{L_{\mathrm{smooth}} \eta_t}{2} \ge \frac{1}{2}.
\end{equation}
The descent inequality simplifies to
\begin{equation}
\mathcal{L}_{\mathrm{se\_calibration}}(t+1) \le \mathcal{L}_{\mathrm{se\_calibration}}(t) - \frac{\eta_t}{2} \|\nabla \mathcal{L}_{\mathrm{se\_calibration}}(t)\|_\mathrm{F}^2, \quad t \ge T_0.
\end{equation}
Summing from \(t = T_0\) to \(T-1\) and letting \(T \to \infty\) (using \(\mathcal{L}_{\mathrm{se\_calibration}} \ge 0\)) gives
\begin{equation}
\sum_{t=T_0}^\infty \eta_t \|\nabla \mathcal{L}_{\mathrm{se\_calibration}}(t)\|_\mathrm{F}^2 < \infty.
\end{equation}

\textbf{Step 4: Liminf of Gradient Norm.}
Suppose for contradiction that
\begin{equation}
\liminf_{t \to \infty} \|\nabla \mathcal{L}_{\mathrm{se\_calibration}}(t)\|_\mathrm{F} \ge \delta > 0.
\end{equation}
Afterwards, there exists \(T_1 \ge T_0\) such that
\begin{equation}
\|\nabla \mathcal{L}_{\mathrm{se\_calibration}}(t)\|_\mathrm{F} \ge \frac{\delta}{2} \quad \text{for all } t \ge T_1.
\end{equation}
This implies
\begin{equation}
\sum_{t=T_1}^\infty \eta_t \|\nabla \mathcal{L}_{\mathrm{se\_calibration}}(t)\|_\mathrm{F}^2 \ge \left(\frac{\delta}{2}\right)^2 \sum_{t=T_1}^\infty \eta_t = \infty,
\end{equation}
where the last equality uses the assumption \(\sum_t \eta_t = \infty\). This contradicts the summability result of Step 3. Therefore,
\begin{equation}
\liminf_{t \to \infty} \|\nabla \mathcal{L}_{\mathrm{se\_calibration}}(t)\|_\mathrm{F} = 0.
\end{equation}

\textbf{Step 5: Convergence of the Loss Value.}
For all \(t \ge T_0\), the inequality in Step 3 shows that \(\{\mathcal{L}_{\mathrm{se\_calibration}}(t)\}_{t \ge T_0}\) is monotonically non-increasing and bounded from below by 0. By the monotone convergence theorem,
\begin{equation}
\mathcal{L}_{\mathrm{se\_calibration}}(t) \to \mathcal{L}_{\mathrm{se\_calibration}}^*.
\end{equation}
\end{proof}

\subsubsection{Convergence Rate of Global Semantic Manifold Refinement}
We now analyze the convergence rate of the dynamic refinement process for the global semantic manifold.

\begin{theorem}[Convergence Rate on the Global Semantic Manifold]
\label{thm:semantic_manifold_rate}
Let \(\Delta(t)=\{\delta_1(t), \delta_2(t), \dots,\delta_C(t)\}\) with \(\|\delta_i(t)\|_2=1\) for all \(i,t\). Assume:
\begin{itemize}
    \item[(A1)] \(\mathcal{L}_{\mathrm{global}}(\Delta)\ge\mathcal{L}_{\mathrm{global}}^*\) for some finite \(\mathcal{L}_{\mathrm{global}}^* > -\infty\);
    \item[(A2)] The stochastic Riemannian gradient is unbiased and tangent:
    \begin{equation}
    \mathbb{E}_t\bigl[\widehat{\operatorname{grad}}_{\mathcal{S}}\mathcal{L}_{\mathrm{global}}(\Delta(t))\bigr]=\operatorname{grad}_{\mathcal{S}}\mathcal{L}_{\mathrm{global}}(\Delta(t)),
    \end{equation}
    and lies in the tangent space \(T_{\Delta(t)}\mathcal{S}\);
    \item[(A3)] Bounded variance: \(\mathbb{E}_t\bigl[\|\widehat{\operatorname{grad}}_{\mathcal{S}}\mathcal{L}_{\mathrm{global}}(\Delta(t))-\operatorname{grad}_{\mathcal{S}}\mathcal{L}_{\mathrm{global}}(\Delta(t))\|_\mathrm{F}^2\bigr]\le\sigma_{\mathrm{agg}}^2\);
    \item[(A4)] Retraction-smoothness: there exists \(L_{\mathcal{S}}>0\) such that for all \(\Delta\in\mathcal{S}\) and all tangent vectors \(\xi\in T_\Delta\mathcal{S}\) in the domain of the retraction, namely
    \begin{equation}
    \mathcal{L}_{\mathrm{global}}\bigl(\operatorname{Retr}_\Delta(\xi)\bigr)
    \le\mathcal{L}_{\mathrm{global}}(\Delta)
    +\langle\operatorname{grad}_{\mathcal{S}}\mathcal{L}_{\mathrm{global}}(\Delta),\xi\rangle
    +\frac{L_{\mathcal{S}}}{2}\|\xi\|_\mathrm{F}^2.
    \end{equation}
\end{itemize}

Consider the Riemannian stochastic gradient update
\begin{equation}
\Delta(t+1)=\operatorname{Retr}_{\Delta(t)}\bigl(-\eta\,\widehat{\operatorname{grad}}_{\mathcal{S}}\mathcal{L}_{\mathrm{global}}(\Delta(t))\bigr),
\end{equation}
with step size
\begin{equation}
\eta=\min\left\{\frac{1}{L_{\mathcal{S}}},\frac{c}{\sqrt{T}}\right\},
\end{equation}
for some constant \(c>0\).

Subsequently, we have
\begin{equation}
\min_{0\le t<T}\mathbb{E}\Bigl[\bigl\|\operatorname{grad}_{\mathcal{S}}\mathcal{L}_{\mathrm{global}}(\Delta(t))\bigr\|_\mathrm{F}^2\Bigr]
\le\frac{2\bigl(\mathbb{E}[\mathcal{L}_{\mathrm{global}}(\Delta(0))]-\mathcal{L}_{\mathrm{global}}^*\bigr)}{\eta T}+L_{\mathcal{S}}\eta\sigma_{\mathrm{agg}}^2.
\end{equation}
This yields the convergence rate \(\mathcal{O}(1/\sqrt{T})\).
\end{theorem}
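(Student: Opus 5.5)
The plan is the standard nonconvex SGD argument, transported to the manifold $\mathcal{S}$ (the product of unit spheres) through the retraction-smoothness inequality (A4). Write $g_t=\operatorname{grad}_{\mathcal{S}}\mathcal{L}_{\mathrm{global}}(\Delta(t))$ and $\hat g_t=\widehat{\operatorname{grad}}_{\mathcal{S}}\mathcal{L}_{\mathrm{global}}(\Delta(t))$.

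\textbf{Step 1: one-step descent.} By (A2), $\xi=-\eta\hat g_t$ lies in $T_{\Delta(t)}\mathcal{S}$, so (A4) applies and gives
\begin{equation}
\mathcal{L}_{\mathrm{global}}(\Delta(t+1))\le\mathcal{L}_{\mathrm{global}}(\Delta(t))-\eta\langle g_t,\hat g_t\rangle+\frac{L_{\mathcal{S}}\eta^2}{2}\|\hat g_t\|_\mathrm{F}^2 .
\end{equation}

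\textbf{Step 2: take conditional expectation.} Unbiasedness (A2) gives $\mathbb{E}_t\langle g_t,\hat g_t\rangle=\|g_t\|_\mathrm{F}^2$. Expanding $\hat g_t=g_t+(\hat g_t-g_t)$, the cross term has zero mean, so
\begin{equation}
\mathbb{E}_t\|\hat g_t\|_\mathrm{F}^2=\|g_t\|_\mathrm{F}^2+\mathbb{E}_t\|\hat g_t-g_t\|_\mathrm{F}^2\le\|g_t\|_\mathrm{F}^2+\sigma_{\mathrm{agg}}^2
\end{equation}
by (A3). Substituting these into Step 1 yields
\begin{equation}
\mathbb{E}_t\mathcal{L}_{\mathrm{global}}(\Delta(t+1))\le\mathcal{L}_{\mathrm{global}}(\Delta(t))-\eta\Bigl(1-\frac{L_{\mathcal{S}}\eta}{2}\Bigr)\|g_t\|_\mathrm{F}^2+\frac{L_{\mathcal{S}}\eta^2}{2}\sigma_{\mathrm{agg}}^2 .
\end{equation}
Since $\eta\le 1/L_{\mathcal{S}}$, the coefficient satisfies $1-L_{\mathcal{S}}\eta/2\ge 1/2$.

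\textbf{Step 3: telescope and average.} Take total expectations (tower property) and sum over $t=0,\dots,T-1$. Use (A1) to bound $\mathbb{E}\mathcal{L}_{\mathrm{global}}(\Delta(T))\ge\mathcal{L}^*_{\mathrm{global}}$. This gives
\begin{equation}
\frac{\eta}{2}\sum_{t<T}\mathbb{E}\|g_t\|_\mathrm{F}^2\le\mathbb{E}\mathcal{L}_{\mathrm{global}}(\Delta(0))-\mathcal{L}^*_{\mathrm{global}}+\frac{TL_{\mathcal{S}}\eta^2}{2}\sigma_{\mathrm{agg}}^2 .
\end{equation}
Divide by $\eta T/2$ and bound the minimum by the average; this is exactly the displayed inequality.

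\textbf{Step 4: the rate.} Write $D=\mathbb{E}\mathcal{L}_{\mathrm{global}}(\Delta(0))-\mathcal{L}^*_{\mathrm{global}}$. If $\eta=c/\sqrt T$, the bound equals $2D/(c\sqrt T)+L_{\mathcal{S}}c\sigma_{\mathrm{agg}}^2/\sqrt T$. If instead $\eta=1/L_{\mathcal{S}}$, this happens only when $1/L_{\mathcal{S}}\le c/\sqrt T$. Then $1/(\eta T)=L_{\mathcal{S}}/T\le L_{\mathcal{S}}/\sqrt T$ (since $T\ge1$), and the noise term $L_{\mathcal{S}}\eta\sigma^2_{\mathrm{agg}}\le L_{\mathcal{S}}c\sigma^2_{\mathrm{agg}}/\sqrt T$. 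In both cases the bound is $\mathcal{O}(1/\sqrt T)$.

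\textbf{Main obstacle.} The only delicate point is justifying that (A4) may be applied at $\xi=-\eta\hat g_t$. This needs tangency, which (A2) supplies, and that $\xi$ lies in the retraction's domain. For the normalization retraction on spheres that domain is the whole tangent space, so I would note this explicitly. I would also verify measurability so that the conditional-expectation step is legitimate, with $\Delta(t)$ being $\mathcal{F}_t$-measurable.
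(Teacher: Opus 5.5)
Your proof is correct and follows essentially the same route as the paper: apply (A4) at $\xi=-\eta\hat g_t$, use unbiasedness and the variance decomposition under $\mathbb{E}_t$, use $\eta\le 1/L_{\mathcal{S}}$ to get the factor $1/2$, telescope with (A1), divide by $\eta T/2$, and bound the minimum by the average. Your Step 4, which checks both branches of the step-size rule to justify the $\mathcal{O}(1/\sqrt{T})$ claim, and your remarks on the retraction domain and measurability, are small additions that the paper leaves implicit.
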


\begin{proof}
By retraction-smoothness (A4) with update direction \(\xi=-\eta\hat g_t\) (\(\hat g_t=\widehat{\operatorname{grad}}_{\mathcal{S}}\mathcal{L}_{\mathrm{global}}(\Delta(t))\) and \(g_t=\operatorname{grad}_{\mathcal{S}}\mathcal{L}_{\mathrm{global}}(\Delta(t))\)),
\begin{equation}
\mathcal{L}_{\mathrm{global}}(\Delta(t+1))
\le\mathcal{L}_{\mathrm{global}}(\Delta(t))
-\eta\langle g_t,\hat g_t\rangle
+\frac{L_{\mathcal{S}}\eta^2}{2}\|\hat g_t\|_\mathrm{F}^2.
\end{equation}

Taking conditional expectation,
\begin{equation}
\mathbb{E}_t\bigl[\mathcal{L}_{\mathrm{global}}(\Delta(t+1))\bigr]
\le\mathcal{L}_{\mathrm{global}}(\Delta(t))
-\eta\mathbb{E}_t[\langle g_t,\hat g_t\rangle]
+\frac{L_{\mathcal{S}}\eta^2}{2}\mathbb{E}_t[\|\hat g_t\|_\mathrm{F}^2].
\end{equation}

By unbiasedness (A2), we have
\begin{align}
\mathbb{E}_t[\langle g_t,\hat g_t\rangle]=\|g_t\|_\mathrm{F}^2.
\end{align}
By the variance bound (A3) and unbiasedness,
\begin{align}
\mathbb{E}_t[\|\hat g_t\|_\mathrm{F}^2]=\|g_t\|_\mathrm{F}^2+\mathbb{E}_t[\|\hat g_t-g_t\|_\mathrm{F}^2]\le\|g_t\|_\mathrm{F}^2+\sigma_{\mathrm{agg}}^2.
\end{align}

Substituting these relations yields
\begin{align}
\mathbb{E}_t\bigl[\mathcal{L}_{\mathrm{global}}(\Delta(t+1))\bigr]
&\le\mathcal{L}_{\mathrm{global}}(\Delta(t))
-\eta\Bigl(1-\frac{L_{\mathcal{S}}\eta}{2}\Bigr)\|g_t\|_\mathrm{F}^2
+\frac{L_{\mathcal{S}}\eta^2}{2}\sigma_{\mathrm{agg}}^2.
\end{align}

Since \(\eta\le \frac{1}{L_{\mathcal{S}}}\), we have \(1-\frac{L_{\mathcal{S}}\eta}{2}\ge \frac{1}{2}\),
\begin{equation}
\mathbb{E}_t\bigl[\mathcal{L}_{\mathrm{global}}(\Delta(t+1))\bigr]
\le\mathcal{L}_{\mathrm{global}}(\Delta(t))
-\frac{\eta}{2}\|g_t\|_\mathrm{F}^2
+\frac{L_{\mathcal{S}}\eta^2}{2}\sigma_{\mathrm{agg}}^2.
\end{equation}

Taking full expectation, summing from \(t=0\) to \(T-1\), and using the lower bound (A1), we obtain after telescoping and dividing by \(\frac{\eta T}{2}\):
\begin{equation}
\frac{1}{T}\sum_{t=0}^{T-1}\mathbb{E}\bigl[\|g_t\|_\mathrm{F}^2\bigr]
\le\frac{2\bigl(\mathbb{E}[\mathcal{L}_{\mathrm{global}}(\Delta(0))]-\mathcal{L}_{\mathrm{global}}^*\bigr)}{\eta T}+L_{\mathcal{S}}\eta\sigma_{\mathrm{agg}}^2.
\end{equation}

Finally, we have
\begin{equation}
\min_{0\le t<T}\mathbb{E}\bigl[\|g_t\|_\mathrm{F}^2\bigr]
\le\frac{1}{T}\sum_{t=0}^{T-1}\mathbb{E}\bigl[\|g_t\|_\mathrm{F}^2\bigr].
\end{equation}
This completes the proof.
\end{proof}

\subsection{Convergence Analysis of Structural Manifold Calibration}
\begin{theorem}[Convergence to a Critical Point]
\label{thm:structural_convergence}
Let \( x^t = (\mathcal{T}^t, \mathbf{F}_m^t) \) and define the extended-value function
\begin{equation}
\Psi(x) = \Phi(x) + \delta_{\mathcal{C}}(x),
\end{equation}
where
\begin{equation}
\Phi(\mathcal{T}, \mathbf{F}_m)
= \mathcal{L}_{\mathrm{str\_calibration}}(\mathcal{T}, \mathbf{F}_m)
+ \frac{1}{\tau} \sum_{m=1}^{M} \sum_{b=1}^{B} \sum_{q=1}^{Q} (\mathbf{F}_m)_{b,q} \log (\mathbf{F}_m)_{b,q}
\end{equation}
is the entropy-regularized structural calibration objective, and \(\mathcal{C}\) is the feasible set (\textit{i.e.}, product of the template space and the transport polytope).
Assume the following conditions hold:
\begin{enumerate}
    \item[(i)] The feasible set \(\mathcal{C}\) is compact;
    \item[(ii)] \(\Psi\) is proper and lower semicontinuous, and satisfies the Kurdyka-Łojasiewicz (KL) property;
    \item[(iii)] The sequence satisfies the sufficient decrease condition: there exists \( a > 0 \) such that
    \begin{equation}
    \Psi(x^{t+1}) \leq \Psi(x^t) - a \|x^{t+1} - x^t\|^2;
    \end{equation}
    \item[(iv)] The relative error condition holds: there exist \( b > 0 \) and \( w^{t+1} \in \partial \Psi(x^{t+1}) \) such that
    \begin{equation}
    \|w^{t+1}\| \leq b \|x^{t+1} - x^t\|;
    \end{equation}
    \item[(v)] \(\Phi\) is continuous on \(\mathcal{C}\) and continuously differentiable on the relative interior of \(\mathcal{C}\).
\end{enumerate}
Then the sequence \(\{x^t\}\) has finite length and converges to a critical point \(x^*\) of \(\Psi\), i.e.,
\begin{equation}
0 \in \partial \Psi(x^*).
\end{equation}
Moreover, if \(\Phi\) is continuously differentiable in a neighborhood of \(x^*\), then
\begin{equation}
0 \in \nabla \Phi(x^*) + N_{\mathcal{C}}(x^*),
\end{equation}
where \(N_{\mathcal{C}}(x^*)\) denotes the normal cone to \(\mathcal{C}\) at \(x^*\).
\end{theorem}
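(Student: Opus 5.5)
The plan follows the standard Attouch--Bolte--Svaiter argument for descent sequences of KL functions, using conditions (i)--(iv) directly. The characterization of the limit then comes from (v) and the subdifferential calculus for $\Phi+\delta_{\mathcal{C}}$.

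\textbf{Step 1: boundedness, decrease and square-summability.} By (iii), $\{\Psi(x^t)\}$ is non-increasing. Since every iterate lies in $\mathcal{C}$ (otherwise $\Psi(x^t)=+\infty$), it is finite. By (i) and the lower semicontinuity in (ii), $\Psi$ is bounded below on the compact set $\mathcal{C}$, so $\Psi(x^t)\downarrow \bar\Psi$ for some finite $\bar\Psi$. Summing (iii) gives
\begin{equation}
\sum_{t}\|x^{t+1}-x^t\|^2\le \frac{\Psi(x^0)-\bar\Psi}{a}<\infty,
\end{equation}
so $\|x^{t+1}-x^t\|\to 0$, and by (iv) also $w^{t+1}\to 0$.

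\textbf{Step 2: the limit point set.} Let $\omega(x^0)$ be the set of cluster points. By compactness it is nonempty and compact, and since $\|x^{t+1}-x^t\|\to 0$ it is also connected. On $\omega(x^0)$ the value of $\Psi$ equals $\bar\Psi$. To show this, take a subsequence $x^{t_k}\to \bar x$. Lower semicontinuity gives $\Psi(\bar x)\le\bar\Psi$. For the reverse inequality, $\bar x\in\mathcal{C}$ because $\mathcal{C}$ is closed, and $\Phi$ is continuous on $\mathcal{C}$ by (v), so $\Psi(x^{t_k})=\Phi(x^{t_k})\to\Phi(\bar x)$. Closedness of the limiting subdifferential, together with $w^{t_k}\to0$ and $\Psi(x^{t_k})\to\Psi(\bar x)$, then gives $0\in\partial\Psi(\bar x)$.

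\textbf{Step 3: finite length (the main obstacle).} If $\Psi(x^{T})=\bar\Psi$ for some $T$, then (iii) forces the sequence to be stationary from $T$ on, and we are done. Otherwise $\Psi(x^t)>\bar\Psi$ for all $t$, and the argument proceeds as follows.
\begin{enumerate}
\item Invoke the uniformized KL property on the compact set $\omega(x^0)$, on which $\Psi$ is constant. This gives a concave desingularizer $\varphi$ with $\varphi'(\Psi(x)-\bar\Psi)\,\mathrm{dist}(0,\partial\Psi(x))\ge 1$ for all $x$ near $\omega(x^0)$ with $\bar\Psi<\Psi(x)<\bar\Psi+\epsilon$. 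All $x^t$ with $t$ large enough lie in this region.
\item By concavity of $\varphi$, combined with (iii) and (iv),
\begin{equation}
\varphi(\Psi(x^t)-\bar\Psi)-\varphi(\Psi(x^{t+1})-\bar\Psi)\ge \frac{a\|x^{t+1}-x^t\|^2}{b\|x^t-x^{t-1}\|}.
\end{equation}
\item Apply the AM--GM inequality $2\sqrt{uv}\le u+v$ to this bound and telescope. This yields $\sum_t\|x^{t+1}-x^t\|<\infty$, so $\{x^t\}$ is Cauchy and converges to some $x^*\in\omega(x^0)$, which is critical by Step 2.
\end{enumerate}
The delicate points are the uniformization of KL over $\omega(x^0)$ and the index bookkeeping that keeps the iterates inside the KL neighborhood.

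\textbf{Step 4: normal-cone form.} If $\Phi$ is $C^1$ near $x^*$, the sum rule for a smooth function plus a proper lsc function gives $\partial\Psi(x^*)=\nabla\Phi(x^*)+\partial\delta_{\mathcal{C}}(x^*)$. Since $\mathcal{C}$ is a product of convex sets (the template space, assumed convex, and the transport polytope), $\partial\delta_{\mathcal{C}}(x^*)=N_{\mathcal{C}}(x^*)$. Therefore $0\in\nabla\Phi(x^*)+N_{\mathcal{C}}(x^*)$.
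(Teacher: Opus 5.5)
Your proposal is correct and follows the paper's route: sufficient decrease and compactness give square-summable steps; cluster points are critical and share the value $\bar\Psi$; the KL property of Attouch--Bolte--Svaiter gives finite length; and the sum rule yields the normal-cone condition. The only differences are that you unpack the finite-length step (uniformized KL on $\omega(x^0)$, concavity of $\varphi$, AM--GM telescoping), which the paper cites as a black box, and that you add a convexity assumption on the template space, which is unnecessary if $N_{\mathcal{C}}$ is read as the limiting normal cone, since $\partial\delta_{\mathcal{C}} = N_{\mathcal{C}}$ then holds for any closed $\mathcal{C}$.
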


\begin{proof}
We verify the hypotheses of the KL convergence theorem of Attouch, Bolte and Svaiter.
By the sufficient decrease condition (iii),
\begin{equation}
\Psi(x^{t+1}) \leq \Psi(x^t) - a \|x^{t+1} - x^t\|^2.
\end{equation}
Since \(\mathcal{C}\) is compact and \(\Phi\) is continuous on \(\mathcal{C}\), \(\Psi = \Phi\) is bounded from below on \(\mathcal{C}\). Therefore, \(\{\Psi(x^t)\}\) is nonincreasing and converges to some finite value \(\Psi_\infty\).
Summing the sufficient decrease inequality yields
\begin{equation}
a \sum_{t=0}^{\infty} \|x^{t+1} - x^t\|^2 \leq \Psi(x^0) - \Psi_\infty < \infty,
\end{equation}
which immediately implies
\begin{equation}
\|x^{t+1} - x^t\| \to 0.
\end{equation}
Since \(\mathcal{C}\) is compact, the sequence \(\{x^t\}\) admits at least one cluster point. Let \(\bar x\) be such a cluster point and let \(x^{t_j} \to \bar x\). By lower semicontinuity of \(\Psi\),
\begin{equation}
\Psi(\bar x) \leq \liminf_{j\to\infty} \Psi(x^{t_j}) = \Psi_\infty.
\end{equation}
Moreover, since \(\Phi\) is continuous on the compact set \(\mathcal{C}\), we have \(\Psi(x^{t_j}) \to \Psi(\bar x)\). Therefore,
\begin{equation}
\Psi(\bar x) = \Psi_\infty.
\end{equation}
Afterwards, by the relative error condition (iv), there exists \(w^{t+1} \in \partial \Psi(x^{t+1})\) such that
\begin{equation}
\|w^{t+1}\| \leq b \|x^{t+1} - x^t\|.
\end{equation}
Since \(\|x^{t+1} - x^t\| \to 0\), it follows that \(\|w^{t+1}\| \to 0\).
Because \(\|x^{t+1} - x^t\| \to 0\) and \(x^{t_j} \to \bar x\), we also have
\begin{equation}
x^{t_j+1} \to \bar x.
\end{equation}
Passing to the limit along this subsequence and using the closed graph property of the limiting subdifferential, we obtain
\begin{equation}
0 \in \partial \Psi(\bar x).
\end{equation}
Therefore, every cluster point is a critical point of \(\Psi\). Since \(\Psi\) satisfies the KL property and the sequence satisfies both the sufficient decrease condition and the relative error condition, the standard KL convergence theorem of Attouch-Bolte-Svaiter implies that \(\{x^t\}\) has finite length:
\begin{equation}
\sum_{t=0}^{\infty} \|x^{t+1} - x^t\| < \infty.
\end{equation}
Consequently, \(\{x^t\}\) is a Cauchy sequence. Since \(\mathcal{C}\) is compact, there exists \(x^* \in \mathcal{C}\) such that \(x^t \to x^*\).
By the closedness of the limiting subdifferential and \(\|w^{t+1}\| \to 0\), we conclude
\begin{equation}
0 \in \partial \Psi(x^*).
\end{equation}
Therefore, \(x^*\) is a critical point of \(\Psi\).
Finally, suppose that \(\Phi\) is continuously differentiable in a neighborhood of \(x^*\). Since
\begin{equation}
\Psi = \Phi + \delta_{\mathcal{C}},
\end{equation}
the standard subdifferential sum rule gives
\begin{equation}
\partial \Psi(x^*) = \nabla \Phi(x^*) + \partial \delta_{\mathcal{C}}(x^*) = \nabla \Phi(x^*) + N_{\mathcal{C}}(x^*).
\end{equation}
Therefore, we have
\begin{equation}
0 \in \nabla \Phi(x^*) + N_{\mathcal{C}}(x^*),
\end{equation}
which is the desired KKT-type stationarity condition. This completes the proof.
\end{proof}

\section{Implementation Details}\label{implementation_details}
This section details our experimental setup, encompassing the local training objective, computational platform, graph datasets, subgraph partitioning, baseline methods, and training procedures.
\subsection{Local Training Objective}\label{ap_local_objective}
For each client $m$, the local training objective integrates the cross-entropy loss for node classification with the semantic manifold calibration loss and the structural manifold calibration loss. Specifically, each client minimizes
\begin{equation}
\mathcal{L}_m = \mathcal{L}_\mathrm{ce}^{(m)} + \mathcal{L}_\mathrm{se\_calibration}^{(m)} + \mathcal{L}_\mathrm{str\_calibration}^{(m)},
\end{equation}
where $\mathcal{L}_\mathrm{ce}^{(m)}$ denotes the cross-entropy loss, $\mathcal{L}_\mathrm{se\_calibration}^{(m)}$ is the semantic manifold calibration loss, and $\mathcal{L}_\mathrm{str\_calibration}^{(m)}$ is the structural manifold calibration loss.
\subsection{Experimental Platform}\label{experimental_platform}
All experiments are conducted on a Linux server equipped with a 2.90\,GHz Intel Xeon Gold 6326 CPU, 64\,GB of RAM, and two NVIDIA GeForce RTX 4090 GPUs with 48\,GB memory. Our FedGMC is implemented in Python 3.8.8, PyTorch 1.12.0, and PyTorch Geometric (PyG) 2.5.1.
\subsection{Datasets}\label{dataset_info}
We evaluate our proposed FedGMC on eleven widely adopted benchmark datasets, comprising six homophilic graphs and five heterophilic graphs. The homophilic datasets include four citation networks (\textit{Cora}, \textit{CiteSeer}, \textit{PubMed}, and \textit{ogbn-arxiv}) as well as two Amazon product co-purchasing graphs (\textit{Amazon-Computer} and \textit{Amazon-Photo}). The heterophilic datasets consist of \textit{Roman-empire}, \textit{Amazon-ratings}, \textit{Minesweeper}, \textit{Tolokers}, and \textit{Questions}~\cite{platonov2023a}. The statistical characteristics of all datasets are summarized in Tab.~\ref{datasets_statistics}. Following prior work~\cite{platonov2023a, wentao2025fediih, yu2026heterogeneity}, we employ Area Under the ROC Curve (AUC) as the evaluation metric for the binary classification tasks on \textit{Minesweeper}, \textit{Tolokers}, and \textit{Questions}. For the remaining multi-class datasets, classification accuracy is adopted as the evaluation metric.

\begin{table*}[t]
    \centering
    \scriptsize
    \caption{The statistical information of eleven graph datasets.}
    \label{datasets_statistics}
    \renewcommand{\arraystretch}{1.1}
    \begin{tabular}{lrrrr}
    \toprule
    \textbf{Dataset} & \textbf{\# Nodes} & \textbf{\# Edges} & \textbf{\# Classes} & \textbf{\# Features} \\ 
    \midrule
    \rowcolor{gray!50}
    \multicolumn{5}{l}{\textbf{Homophilic Graph Datasets}} \\
    \textit{Cora}            & 2,708    & 5,429     & 7  & 1,433 \\
    \textit{CiteSeer}        & 3,327    & 4,732     & 6  & 3,703 \\
    \textit{PubMed}          & 19,717   & 44,324    & 3  & 500   \\
    \textit{Amazon-Computer} & 13,752   & 491,722   & 10 & 767   \\
    \textit{Amazon-Photo}    & 7,650    & 238,162   & 8  & 745   \\
    \textit{ogbn-arxiv}      & 169,343  & 1,166,243 & 40 & 128   \\
    \hline
    \rowcolor{gray!50}
    \multicolumn{5}{l}{\textbf{Heterophilic Graph Datasets}} \\
    \textit{Roman-empire}    & 22,662   & 32,927    & 18 & 300   \\
    \textit{Amazon-ratings}  & 24,492   & 93,050    & 5  & 300   \\
    \textit{Minesweeper}     & 10,000   & 39,402    & 2  & 7     \\
    \textit{Tolokers}        & 11,758   & 519,000   & 2  & 10    \\
    \textit{Questions}       & 48,921   & 153,540   & 2  & 301   \\
    \bottomrule
    \end{tabular}
\end{table*}

For all datasets except \textit{ogbn-arxiv}, we randomly partition the nodes into training, validation, and test sets using the ratios 20\%, 40\%, and 40\%, respectively. For \textit{ogbn-arxiv}, which contains over 0.1 million nodes and more than 1 million edges, we follow prior work~\cite{baek2023personalized, wentao2025fediih, yu2026heterogeneity} and allocate only 5\% of nodes for training, 47.5\% for validation, and 47.5\% for testing.

\subsection{Subgraph Partitioning}\label{subgraph_partitioning_detail}
To simulate realistic federated scenarios with varying degrees of graph data isolation, we consider two subgraph partitioning schemes following prior work~\cite{baek2023personalized, wentao2025fediih, yu2026heterogeneity}: non-overlapping and overlapping. In the non-overlapping setting, the global node set $\mathcal{V}$ is partitioned into $M$ disjoint client-specific subsets $\{\mathcal{V}_m\}_{m=1}^M$ such that $\bigcup_{m=1}^M \mathcal{V}_m = \mathcal{V}$ and $\mathcal{V}_m \cap \mathcal{V}_n = \emptyset$ for all $m \neq n$. Any partitioning scheme that fails to satisfy this strict disjointness condition is referred to as overlapping. The concrete procedures for generating both types of subgraphs are detailed below.

\subsubsection{Non-overlapping Partitioning}
Given $M$ clients, we employ the METIS graph partitioning algorithm~\cite{karypis1997metis} to divide the global graph into exactly $M$ non-overlapping subgraphs. Each client is then assigned a unique subgraph corresponding to one of the partitions produced by METIS. This scheme effectively enforces graph data partition across clients, reflecting many real-world privacy-sensitive applications.

\subsubsection{Overlapping Partitioning}
Given $M$ clients, we first apply METIS graph partitioning algorithm~\cite{karypis1997metis} to partition the global graph into $\lfloor M/5 \rfloor$ temporary subgraphs. For each temporary subgraph, we randomly sample half of its nodes together with their incident edges and repeat this sampling process independently five times. This yields five distinct but partially overlapping subgraphs per coarse partition. As a result, the total number of generated subgraphs precisely equals the number of clients $M$.

\subsection{Baseline Methods}\label{baseline_methods_info}
We compare our proposed FedGMC with thirteen baseline methods, comprising one classic federated learning (FL) method (\textit{i.e.}, FedAvg~\cite{mcmahan2017communication}), two personalized FL methods (\textit{i.e.}, FedProx~\cite{MLSYS2020_1f5fe839} and FedPer~\cite{Arivazhagan2019}), three general graph federated learning (GFL) methods (\textit{i.e.}, GCFL~\cite{NEURIPS2021_9c6947bd}, FedGNN~\cite{wu2021fedgnn}, and FedSage+~\cite{NEURIPS2021_34adeb8e}), and seven personalized GFL approaches (\textit{i.e.}, FED-PUB~\cite{baek2023personalized}, FedGTA~\cite{li2023fedgta}, AdaFGL~\cite{li2024adafgl}, FedTAD~\cite{zhu2024fedtad}, FedIIH~\cite{wentao2025fediih}, FedICI~\cite{yuintegrating}, and FedSSA~\cite{yu2026heterogeneity}). In addition, we include a purely local training baseline in which each client trains independently without any federated aggregation. The details of these methods are summarized below.

\textbf{FedAvg}~\cite{mcmahan2017communication}: A foundational FL method in which clients train local models independently and periodically upload updates to a central server. The server aggregates the received model parameters via simple averaging and broadcasts the resulting global model back to all clients.

\textbf{FedProx}~\cite{MLSYS2020_1f5fe839}: A personalized FL method that augments the local objective with a proximal term penalizing divergence between local and global model parameters. This regularization stabilizes local updates and enables clients to learn personalized models while still benefiting from global knowledge.

\textbf{FedPer}~\cite{Arivazhagan2019}: A personalized FL method that aggregates only the backbone network parameters across clients during federated rounds, while the classification layer remains personalized and is updated locally on each client.

\textbf{GCFL}~\cite{NEURIPS2021_9c6947bd}: A representative GFL baseline method originally designed for vertical federated scenarios such as molecular property prediction. GCFL employs a bi-partitioning strategy that recursively divides clients into two disjoint groups based on gradient similarity. This procedure is analogous to clustered federated learning~\cite{sattler2021clustered}. Model aggregation is subsequently performed independently within each group.

\textbf{FedGNN}~\cite{wu2021fedgnn}: A general GFL method that improves local performance by exchanging node embeddings across clients. When nodes in different clients share identical neighborhoods, FedGNN transfers the corresponding embeddings to expand local subgraphs, thereby enriching local information through cross-client knowledge transfer.

\textbf{FedSage+}~\cite{NEURIPS2021_34adeb8e}: A GFL baseline method that reconstructs missing edges between subgraphs. Each client receives node representations from other clients and computes gradients based on the distance between local features and received representations. These gradients are sent back to train the neighbor generators on other clients, facilitating reconstruction of missing inter-subgraph edges.

\textbf{FED-PUB}~\cite{baek2023personalized}: A personalized GFL method that performs similarity-aware model aggregation. It estimates inter-subgraph similarity by evaluating local model outputs on a shared test graph and performs weighted aggregation accordingly. In addition, each client learns a personalized sparse mask to selectively update only parameters relevant to its local subgraph.

\textbf{FedGTA}~\cite{li2023fedgta}: A personalized GFL method that estimates inter-subgraph similarity through topology-aware local smoothing confidence and mixed moments of neighbor features. In each communication round, clients upload these statistics together with their model parameters to the server, which then performs client-specific weighted aggregation based on the estimated similarities.

\textbf{AdaFGL}~\cite{li2024adafgl}: A personalized GFL method that adopts a decoupled two-stage personalization strategy. In the first stage, standard federated training is performed to obtain a global knowledge extractor via parameter aggregation in the final round. In the second stage, each client conducts personalized training on its local subgraph by leveraging the extracted federated knowledge.

\textbf{FedTAD}~\cite{zhu2024fedtad}: A personalized GFL method that introduces a generator to synthesize pseudo-graphs for data-free knowledge distillation. This enables effective knowledge transfer from local models to the global model, thereby alleviating the adverse effects of data heterogeneity.

\textbf{FedIIH}~\cite{wentao2025fediih}: A personalized GFL method that jointly models inter-heterogeneity and intra-heterogeneity. Inter-heterogeneity is captured from a multi-level global perspective via hierarchical variational inference, which enables accurate estimation of inter-subgraph similarity based on graph data distributions. Intra-heterogeneity is addressed by disentangling each subgraph into multiple latent factors, which allows for fine-grained personalization.

\textbf{FedICI}~\cite{yuintegrating}: A personalized GFL method that integrates commonality and individuality to address heterogeneity by exploiting the spectral properties of graphs. Low-frequency components are aggregated in a low-rank space to maximize commonality consistency across clients, while high-frequency components are projected onto an orthogonal space to minimize correlations among client-specific individualities.

\textbf{FedSSA}~\cite{yu2026heterogeneity}: A personalized GFL method that mitigates both node-feature and structural heterogeneity through semantic and structural alignment. On one hand, it construct cluster-level representative distributions to minimize the divergence between local and cluster-level distributions, thereby enabling semantic knowledge sharing. On the other hand, local spectral characteristics are aligned with cluster-level counterparts for structural knowledge sharing.

\textbf{Local}: A non-federated baseline in which each client trains its local model independently using only its own graph data, without any federated aggregation or inter-client collaboration.
\subsection{Training Details}
For all baseline methods except FedAvg, FedSage+, FedGTA, FedIIH, FedICI, FedSSA, and our proposed FedGMC, we employ a two-layer Graph Convolutional Network (GCN)~\cite{kipf2017semisupervised} followed by a classifier as the backbone architecture. The hyperparameters of each baseline are configured exactly as reported in their original papers to ensure faithful reproduction of results. For FedSage+, we adopt GraphSAGE~\cite{NIPS2017_5dd9db5e} as the encoder and additionally train a missing-neighbor generator to reconstruct edges across subgraphs. FedGTA utilizes a Graph Attention Multi-Layer Perceptron (GAMLP)~\cite{35346783539121} as its backbone, paired with a linear classifier. FedIIH leverages a node feature projection layer from DisenGCN~\cite{pmlrv97ma19a} to obtain disentangled node representations, which are subsequently classified by a Multi-Layer Perceptron (MLP). Furthermore, both FedICI and FedSSA employ a spectral GNN (\textit{i.e.}, UniFilter~\cite{huanguniversal}) to extract node representations before feeding them into an MLP for node classification. In contrast, our proposed FedGMC utilizes H\textsubscript{2}GCN~\cite{zhu2020beyond} to simultaneously extract ego-embeddings and multi-hop neighbor embeddings within a unified framework. To guarantee a fair comparison, we adopt the identical GNN backbone for both the local training baseline and FedAvg as the one used in FedGMC. All experiments are conducted under identical conditions, including the same embedding dimension, optimizer settings, and learning rate based on the validation performance.

\section{Additional Experiments}
In this section, we present supplementary experiments to further validate the effectiveness and robustness of our proposed FedGMC. Specifically, we provide (i) additional experimental results on eleven datasets under the overlapping partitioning setting, (ii) comprehensive ablation studies across eleven datasets under both non-overlapping and overlapping partitioning settings, (iii) additional convergence curves on representative datasets under both non-overlapping and overlapping partitioning settings, (iv) extended sensitivity analyses on four key hyperparameters, (v) further case studies on additional datasets, and (vi) computational efficiency analysis by reporting time consumption (seconds) per communication round.

\subsection{Additional Experiments on Overlapping Subgraph Partitioning Setting}\label{additional_tables}
Owing to space constraints, the main paper reports results exclusively under the non-overlapping partitioning setting. Here, we present the corresponding experimental results on the same eleven datasets under the overlapping partitioning setting in Tab.~\ref{table2} and Tab.~\ref{table4}. As shown in Tab.~\ref{table2} and Tab.~\ref{table4}, FedGMC consistently achieves the competitive performance across all datasets and varying numbers of clients. For instance, in Tab.~\ref{table4}, FedGMC obtains an average accuracy of 64.92\%, obviously surpassing the second-best method (\textit{i.e.}, FedICI). These results further confirm the strong capability of FedGMC in addressing heterogeneity within federated graph learning scenarios.

\begin{table*}[t]
      \centering
          \scriptsize
          \caption{Accuracy (\%) of methods on six \textbf{homophilic} graph datasets under \textbf{overlapping} subgraph partitioning setting.}
          \label{table2}
    \renewcommand{\arraystretch}{0.9} 
           \scalebox{0.75}{
      \begin{tabular}{lcccccccccc}
      \hline
      \rowcolor{gray!50}
      \multicolumn{1}{c}{} & \multicolumn{3}{c}{Cora}                                                    & \multicolumn{3}{c}{CiteSeer}                                                      & \multicolumn{3}{c}{PubMed}                                                  & -              \\ \cline{2-11} 
      Methods              & 10 Clients              & 30 Clients              & 50 Clients              & 10 Clients                    & 30 Clients              & 50 Clients              & 10 Clients              & 30 Clients              & 50 Clients              & -              \\ \hline
      \rowcolor{gray!20}
      Local                & 73.98$\pm$0.25          & 71.65$\pm$0.12          & 76.63$\pm$0.10          & 65.12$\pm$0.08                & 64.54$\pm$0.42          & 66.68$\pm$0.44          & 82.32$\pm$0.07          & 80.72$\pm$0.16          & 80.54$\pm$0.11          & -              \\ \hline
      FedAvg~\cite{mcmahan2017communication}               & 76.48$\pm$0.36          & 53.99$\pm$0.98          & 53.99$\pm$4.53          & 69.48$\pm$0.15                & 66.15$\pm$0.64          & 66.51$\pm$1.00          & 82.67$\pm$0.11          & 82.05$\pm$0.12          & 80.24$\pm$0.35          & -              \\
      \rowcolor{gray!20}
      FedProx~\cite{MLSYS2020_1f5fe839}              & 77.85$\pm$0.50          & 51.38$\pm$1.74          & 56.27$\pm$9.04          & 69.39$\pm$0.35                & 66.11$\pm$0.75          & 66.53$\pm$0.43          & 82.63$\pm$0.17          & 82.13$\pm$0.13          & 80.50$\pm$0.46          & -              \\
      FedPer~\cite{Arivazhagan2019}               & 78.73$\pm$0.31          & 74.18$\pm$0.24          & 74.42$\pm$0.37          & 69.81$\pm$0.28                & 65.19$\pm$0.81          & 67.64$\pm$0.44          & 85.31$\pm$0.06          & 84.35$\pm$0.38          & 83.94$\pm$0.10          & -              \\
      \rowcolor{gray!20}
      GCFL~\cite{NEURIPS2021_9c6947bd}                 & 78.84$\pm$0.26          & 73.41$\pm$0.27          & 76.63$\pm$0.16          & 69.48$\pm$0.39                & 64.92$\pm$0.18          & 65.98$\pm$0.30          & 83.59$\pm$0.25          & 80.77$\pm$0.12          & 81.36$\pm$0.11          & -              \\
      FedGNN~\cite{wu2021fedgnn}               & 70.63$\pm$0.83          & 61.38$\pm$2.33          & 56.91$\pm$0.82          & 68.72$\pm$0.39                & 59.98$\pm$1.52          & 58.98$\pm$0.98          & 84.25$\pm$0.07          & 82.02$\pm$0.22          & 81.85$\pm$0.10          & -              \\
      \rowcolor{gray!20}
      FedSage+\cite{NEURIPS2021_34adeb8e}             & 77.52$\pm$0.46          & 51.99$\pm$0.42          & 55.48$\pm$11.5          & 68.75$\pm$0.48                & 65.97$\pm$0.02          & 65.93$\pm$0.30          & 82.77$\pm$0.08          & 82.14$\pm$0.11          & 80.31$\pm$0.68          & -              \\
      FED-PUB~\cite{baek2023personalized}              & 79.60$\pm$0.12          & 75.40$\pm$0.54          & 77.84$\pm$0.23          & 70.58$\pm$0.20                & 68.33$\pm$0.45          & 69.21$\pm$0.30          & 85.70$\pm$0.08          & 85.16$\pm$0.10          & 84.84$\pm$0.12          & -              \\
      \rowcolor{gray!20}
      FedGTA~\cite{li2023fedgta}               & 76.42$\pm$0.62          & 75.63$\pm$0.33          & 77.69$\pm$0.14          & 70.43$\pm$0.08 & 71.71$\pm$0.33          & 69.19$\pm$0.32          & 85.34$\pm$0.42          & 84.99$\pm$0.05          & 84.47$\pm$0.06          & -              \\
      AdaFGL~\cite{li2024adafgl}               & 78.50$\pm$0.19          & 75.80$\pm$0.23          & 74.41$\pm$0.11          & 72.63$\pm$0.15 & 68.18$\pm$0.31          & 62.90$\pm$0.75          & 85.58$\pm$0.23          & 85.85$\pm$0.41          & 84.45$\pm$0.07          & -              \\
	    \rowcolor{gray!20}
      FedTAD~\cite{zhu2024fedtad}              & 79.29$\pm$0.78          & 60.92$\pm$2.17          & 68.08$\pm$0.44          & \textbf{73.47$\pm$0.16}            & 67.74$\pm$0.57          & 63.51$\pm$0.68          & 82.98$\pm$0.20          & 82.11$\pm$0.15          & 81.63$\pm$0.19          & -              \\ 
      FedIIH~\cite{wentao2025fediih}        & 80.57$\pm$0.23  & 76.82$\pm$0.24 & \textbf{78.58$\pm$0.25} & \underline{73.16$\pm$0.18}       & 72.27$\pm$0.21    & 69.56$\pm$0.11 & 85.87$\pm$0.03 & 86.65$\pm$0.11 & \underline{85.65$\pm$0.12} & -              \\
      \rowcolor{gray!20}
      FedICI~\cite{yuintegrating} & \textbf{81.05$\pm$0.11}          & \textbf{76.92$\pm$0.09}          & \underline{78.46$\pm$0.11}          & 72.27$\pm$0.06          & \textbf{72.66$\pm$0.14}          & \underline{69.83$\pm$0.14}          & \underline{87.30$\pm$0.08}          & \underline{86.90$\pm$0.16}          & 85.33$\pm$0.12         & -              \\
      FedSSA~\cite{yu2026heterogeneity}                & \underline{81.02$\pm$0.09}    & \underline{76.89$\pm$0.06} & \textbf{78.58$\pm$0.15}          & 72.19$\pm$0.08                & 72.38$\pm$0.13    & 69.68$\pm$0.10 & 86.40$\pm$0.09 & 86.83$\pm$0.06 & 85.20$\pm$0.04 & -              \\ \hline  
      \rowcolor{yellow!30}
      FedGMC (Ours)                         & 80.20$\pm$0.09    & 76.25$\pm$0.09 & 78.21$\pm$0.10          & 72.23$\pm$0.13                & \underline{72.43$\pm$0.12}    & \textbf{69.88$\pm$0.16} & \textbf{87.42$\pm$0.07} & \textbf{87.11$\pm$0.07} & \textbf{85.70$\pm$0.17} & -              \\ \hline
      \rowcolor{gray!50}
                           & \multicolumn{3}{c}{Amazon-Computer}                                         & \multicolumn{3}{c}{Amazon-Photo}                                                  & \multicolumn{3}{c}{ogbn-arxiv}                                              & Avg.            \\ \cline{2-11} 
      Methods              & 10 Clients              & 30 Clients              & 50 Clients              & 10 Clients                    & 30 Clients              & 50 Clients              & 10 Clients              & 30 Clients              & 50 Clients              & All              \\ \hline
      \rowcolor{gray!20}
      Local                & 88.50$\pm$0.20          & 86.66$\pm$0.09          & 87.04$\pm$0.02          & 92.17$\pm$0.12                & 90.16$\pm$0.12          & 90.42$\pm$0.15          & 62.52$\pm$0.07          & 61.32$\pm$0.04          & 60.04$\pm$0.04          & 76.72          \\ \hline
      FedAvg~\cite{mcmahan2017communication}               & 88.99$\pm$0.19          & 83.37$\pm$0.47          & 76.34$\pm$0.12          & 92.91$\pm$0.07                & 89.30$\pm$0.22          & 74.19$\pm$0.57          & 63.56$\pm$0.02          & 59.72$\pm$0.06          & 60.94$\pm$0.24          & 73.38          \\
      \rowcolor{gray!20}
      FedProx~\cite{MLSYS2020_1f5fe839}              & 88.84$\pm$0.20          & 83.84$\pm$0.89          & 76.60$\pm$0.47          & 92.67$\pm$0.19                & 89.17$\pm$0.40          & 72.36$\pm$2.06          & 63.52$\pm$0.11          & 59.86$\pm$0.16          & 61.12$\pm$0.04          & 73.38          \\
      FedPer~\cite{Arivazhagan2019}               & 89.30$\pm$0.04          & 87.99$\pm$0.23          & 88.22$\pm$0.27          & 92.88$\pm$0.24                & 91.23$\pm$0.16          & 90.92$\pm$0.38          & 63.97$\pm$0.08          & 62.29$\pm$0.04          & 61.24$\pm$0.11          & 78.42          \\
      \rowcolor{gray!20}
      GCFL~\cite{NEURIPS2021_9c6947bd}                 & 89.01$\pm$0.22          & 87.24$\pm$0.09          & 87.02$\pm$0.22          & 92.45$\pm$0.10                & 90.58$\pm$0.11          & 90.54$\pm$0.08          & 63.24$\pm$0.02          & 61.66$\pm$0.10          & 60.32$\pm$0.01          & 77.61          \\
      FedGNN~\cite{wu2021fedgnn}               & 88.15$\pm$0.09          & 87.00$\pm$0.10          & 83.96$\pm$0.88          & 91.47$\pm$0.11                & 87.91$\pm$1.34          & 78.90$\pm$6.46          & 63.08$\pm$0.19          & 60.09$\pm$0.04          & 60.51$\pm$0.11          & 73.66          \\
      \rowcolor{gray!20}
      FedSage+\cite{NEURIPS2021_34adeb8e}             & 89.24$\pm$0.15          & 81.33$\pm$1.20          & 76.72$\pm$0.39          & 92.76$\pm$0.05                & 88.69$\pm$0.99          & 72.41$\pm$1.36          & 63.24$\pm$0.02          & 59.90$\pm$0.12          & 60.95$\pm$0.09          & 73.12          \\
      FED-PUB~\cite{baek2023personalized}              & 89.98$\pm$0.08          & 89.15$\pm$0.06          & 88.76$\pm$0.14          & 93.22$\pm$0.07                & 92.01$\pm$0.07          & 91.71$\pm$0.11          & 64.18$\pm$0.04          & 63.34$\pm$0.12          & 62.55$\pm$0.12          & 79.53          \\
      \rowcolor{gray!20}
      FedGTA~\cite{li2023fedgta}               & 90.10$\pm$0.18          & 88.79$\pm$0.27          & 88.15$\pm$0.21          & 93.13$\pm$0.14                & 92.49$\pm$0.06          & 91.77$\pm$0.06          & 55.98$\pm$0.09          & 56.76$\pm$0.07          & 57.89$\pm$0.09          & 78.39          \\
      AdaFGL~\cite{li2024adafgl}        & 80.49$\pm$0.17          & 80.42$\pm$0.20          & 82.12$\pm$0.10          & 89.24$\pm$0.18          & 88.34$\pm$0.22          & 87.68$\pm$0.10          & 56.81$\pm$0.06                   & 55.17$\pm$0.18                   & 54.82$\pm$0.26                         & 75.74          \\ 
      \rowcolor{gray!20}
      FedTAD~\cite{zhu2024fedtad}        & 79.09$\pm$5.63          & 79.48$\pm$0.85          & 77.05$\pm$0.07          & 81.94$\pm$3.09          & 86.58$\pm$1.75          & 84.38$\pm$1.33          & 58.45$\pm$0.15                   & 57.75$\pm$0.54                   & 56.52$\pm$0.14                         & 73.39          \\
      FedIIH~\cite{wentao2025fediih}        & 90.15$\pm$0.04 & 89.56$\pm$0.19                 & \textbf{89.99$\pm$0.11} & 93.38$\pm$0.08                & 94.17$\pm$0.04 & 93.25$\pm$0.16 & 66.69$\pm$0.09          & 66.10$\pm$0.03         & 65.67$\pm$0.06 & 81.01 \\
      \rowcolor{gray!20}
      FedICI~\cite{yuintegrating} & 90.39$\pm$0.08          & \underline{89.73$\pm$0.13}          & 89.41$\pm$0.05          & 93.68$\pm$0.07          & 94.38$\pm$0.09          & 93.32$\pm$0.09          & 67.36$\pm$0.06          & 66.25$\pm$0.02          & 65.69$\pm$0.09         & \underline{81.16}              \\
      FedSSA~\cite{yu2026heterogeneity}                & \underline{90.41$\pm$0.07} & 89.65$\pm$0.12 & 89.34$\pm$0.11 & \underline{93.72$\pm$0.15}       & \underline{94.44$\pm$0.11} & \underline{93.36$\pm$0.14} & \underline{67.44$\pm$0.12} & \underline{66.38$\pm$0.13} & \underline{65.73$\pm$0.07} & 81.09 \\ \hline
      \rowcolor{yellow!30}
      FedGMC (Ours)                       & \textbf{91.21$\pm$0.10} & \textbf{89.80$\pm$0.09} & \underline{89.67$\pm$0.10} & \textbf{93.78$\pm$0.15}       & \textbf{94.79$\pm$0.13} & \textbf{93.54$\pm$0.16} & \textbf{67.73$\pm$0.17} & \textbf{66.44$\pm$0.13} & \textbf{65.87$\pm$0.09} & \textbf{81.24} \\ \hline
      \end{tabular}
      }
  \end{table*}

\begin{table*}[t]
      \centering
      \scriptsize
      \caption{Comparisons on five \textbf{heterophilic} graph datasets under \textbf{overlapping} subgraph partitioning setting. Accuracy (\%) is reported for \textit{Roman-empire} and \textit{Amazon-ratings}, and AUC (\%) is reported for \textit{Minesweeper}, \textit{Tolokers}, and \textit{Questions}.}
      \label{table4}
    \renewcommand{\arraystretch}{0.9} 
       \scalebox{0.75}{
  \begin{tabular}{lcccccccccc}
  \hline
  \rowcolor{gray!50}
  \textbf{}     & \multicolumn{3}{c}{Roman-empire}                                            & \multicolumn{3}{c}{Amazon-ratings}                                          & \multicolumn{3}{c}{Minesweeper}                                                   & -              \\ \cline{2-11} 
  Methods       & 10 Clients              & 30 Clients              & 50 Clients              & 10 Clients              & 30 Clients              & 50 Clients              & 10 Clients              & 30 Clients              & 50 Clients                    & -              \\ \hline
  \rowcolor{gray!20}
  Local         & 39.47$\pm$0.03          & 34.43$\pm$0.14          & 31.28$\pm$0.18          & 41.43$\pm$0.04          & 41.81$\pm$0.14          & 42.57$\pm$0.12          & 67.98$\pm$0.07          & 64.39$\pm$0.10          & 62.73$\pm$0.23                & -              \\ \hline
  FedAvg~\cite{mcmahan2017communication}        & 40.89$\pm$0.25          & 38.66$\pm$0.08          & 36.71$\pm$0.20          & 39.86$\pm$0.06          & 41.40$\pm$0.02          & 41.02$\pm$0.16          & 69.06$\pm$0.07          & 67.95$\pm$0.04          & 66.89$\pm$0.08                & -              \\
  \rowcolor{gray!20}
  FedProx~\cite{MLSYS2020_1f5fe839}       & 36.63$\pm$0.14          & 35.31$\pm$0.17          & 33.61$\pm$0.59          & 37.53$\pm$0.09          & 37.43$\pm$0.08          & 37.40$\pm$0.07          & 68.27$\pm$0.05          & 66.75$\pm$0.19          & 66.03$\pm$0.16                & -              \\
  FedPer~\cite{Arivazhagan2019}        & 23.66$\pm$3.27          & 23.27$\pm$3.09          & 22.23$\pm$3.58          & 32.33$\pm$4.23          & 31.58$\pm$0.54          & 34.48$\pm$2.25          & 61.85$\pm$1.02          & 60.13$\pm$1.38          & 60.06$\pm$3.61                & -              \\
  \rowcolor{gray!20}
  GCFL~\cite{NEURIPS2021_9c6947bd}          & 39.97$\pm$0.89          & 38.63$\pm$0.49          & 36.87$\pm$0.31          & 39.54$\pm$0.41          & 42.12$\pm$0.11          & 41.27$\pm$0.22          & 69.16$\pm$0.13          & 68.02$\pm$0.10          & 66.93$\pm$1.34               & -              \\
  FedGNN~\cite{wu2021fedgnn}        & 37.46$\pm$0.12          & 36.47$\pm$0.24          & 34.92$\pm$0.26          & 36.58$\pm$0.16          & 36.77$\pm$0.12          & 36.95$\pm$0.15          & 68.59$\pm$0.21          & 67.30$\pm$0.17          & 66.41$\pm$0.23                & -              \\
  \rowcolor{gray!20}
  FedSage+\cite{NEURIPS2021_34adeb8e}      & 57.48$\pm$0.15          & 42.55$\pm$0.20          & 37.13$\pm$0.18          & 36.86$\pm$0.21          & 36.71$\pm$0.19          & 37.03$\pm$0.09          & \underline{76.64$\pm$0.08} & 70.56$\pm$0.25 & 70.34$\pm$0.27 & -              \\
  FED-PUB~\cite{baek2023personalized}       & 43.80$\pm$0.25          & 40.46$\pm$0.16          & 37.73$\pm$0.09          & 42.25$\pm$0.25    &  42.30$\pm$0.06    & 42.88$\pm$0.34 & 69.11$\pm$0.13          & 67.76$\pm$0.24          & 67.52$\pm$0.14                & -              \\
  \rowcolor{gray!20}
  FedGTA~\cite{li2023fedgta}        & 59.86$\pm$0.04    & 58.32$\pm$0.09    & 57.57$\pm$0.21    & 40.81$\pm$0.24          & 39.44$\pm$0.06          & 39.37$\pm$0.04          & 70.64$\pm$0.40          & 67.99$\pm$1.60          & 67.20$\pm$1.35                & -              \\
  AdaFGL~\cite{li2024adafgl}        & 64.44$\pm$0.03          & 61.77$\pm$0.02          & 59.55$\pm$0.01          & 39.39$\pm$0.05          & 41.19$\pm$0.15          & 40.71$\pm$0.25          & 69.07$\pm$0.72                   & 68.34$\pm$1.82                   & 66.80$\pm$1.31                         & -          \\
  \rowcolor{gray!20}
  FedTAD~\cite{zhu2024fedtad}        & 44.14$\pm$0.13          & 41.94$\pm$0.18          & 40.82$\pm$0.01          & 39.53$\pm$0.17          & 40.69$\pm$0.13          & 40.58$\pm$0.26          & 69.27$\pm$0.33                   & 68.43$\pm$0.05                   & 67.23$\pm$0.08                         & -          \\
  FedIIH~\cite{wentao2025fediih} & 65.48$\pm$0.12 & 63.32$\pm$0.06 & 62.42$\pm$0.10          & 42.63$\pm$0.02           & 42.40$\pm$0.05          & 42.65$\pm$0.21              & 69.35$\pm$0.25    &  68.09$\pm$0.26    &  67.37$\pm$0.14 & -         \\
  \rowcolor{gray!20}
  FedICI~\cite{yuintegrating} & \underline{65.83$\pm$0.15}          & \underline{63.99$\pm$0.17}          & \textbf{62.97$\pm$0.09}          & \underline{42.84$\pm$0.08}          & \underline{42.63$\pm$0.10}          & 42.96$\pm$0.11          & 75.74$\pm$0.15          & \underline{75.14$\pm$0.16}          & \underline{71.68$\pm$0.08}         & -              \\
  FedSSA~\cite{yu2026heterogeneity}                & 65.66$\pm$0.07 & 63.80$\pm$0.09 & 62.75$\pm$0.14 & 42.83$\pm$0.06 & 42.52$\pm$0.10 & \underline{42.97$\pm$0.15}    & 75.65$\pm$0.11    & 72.60$\pm$0.06    & 71.31$\pm$0.05 & -              \\ \hline
  \rowcolor{yellow!30}
 FedGMC (Ours) & \textbf{67.22$\pm$0.10} & \textbf{64.64$\pm$0.07} & \underline{62.91$\pm$0.10} & \textbf{42.93$\pm$0.09} & \textbf{42.69$\pm$0.12} & \textbf{43.37$\pm$0.16}    & \textbf{77.44$\pm$0.15}    & \textbf{75.24$\pm$0.15}    & \textbf{72.40$\pm$0.07} & -              \\ \hline
 \rowcolor{gray!50}
                & \multicolumn{3}{c}{Tolokers}                                                & \multicolumn{3}{c}{Questions}                                               & \multicolumn{4}{c}{Avg.}                                                                           \\ \cline{2-11} 
  Methods       & 10 Clients              & 30 Clients              & 50 Clients              & 10 Clients              & 30 Clients              & 50 Clients              & 10 Clients              & 30 Clients              & 50 Clients                    & All            \\ \hline
  \rowcolor{gray!20}
  Local         & 73.83$\pm$0.03          & 69.01$\pm$0.31          & 66.63$\pm$0.20          & 63.17$\pm$0.02          & 57.17$\pm$0.08          & 56.13$\pm$0.02          & 57.18                   & 53.36                   & 51.87                         & 54.14          \\ \hline
  FedAvg~\cite{mcmahan2017communication}        & 72.99$\pm$0.40          & 58.51$\pm$0.27          & 55.47$\pm$0.42          & 62.80$\pm$0.63          & 58.88$\pm$0.18          & 60.78$\pm$0.27          & 57.12                   & 53.08                   & 52.17                         & 54.12          \\
  \rowcolor{gray!20}
  FedProx~\cite{MLSYS2020_1f5fe839}       & 54.49$\pm$1.69          & 45.59$\pm$0.41          & 41.49$\pm$0.45          & 52.53$\pm$0.34          & 51.54$\pm$0.41          & 50.72$\pm$0.40          & 49.89                   & 47.32                   & 45.85                         & 47.69          \\
  FedPer~\cite{Arivazhagan2019}        & 39.60$\pm$0.11          & 59.44$\pm$0.79          & 41.92$\pm$0.06          & 61.31$\pm$0.29          & 53.41$\pm$1.53          & 50.29$\pm$0.10          & 43.75                   & 45.57                   & 41.80                         & 43.70          \\
  \rowcolor{gray!20}
  GCFL~\cite{NEURIPS2021_9c6947bd}          & 70.61$\pm$0.55          & 59.72$\pm$0.50          & 57.64$\pm$0.71          & 62.84$\pm$0.60          & 59.46$\pm$0.68          & 60.24$\pm$0.41          & 56.42                   & 53.59                   & 52.59                         & 54.20          \\
  FedGNN~\cite{wu2021fedgnn}        & 56.21$\pm$1.20          & 46.85$\pm$0.31          & 42.18$\pm$0.45          & 53.25$\pm$0.15          & 51.90$\pm$0.15          & 51.22$\pm$0.14          & 50.42                   & 47.86                   & 46.34                         & 48.20          \\
  \rowcolor{gray!20}
  FedSage+\cite{NEURIPS2021_34adeb8e}      & \underline{74.54$\pm$0.26} &  70.88$\pm$0.17    &  69.61$\pm$0.13    & 64.22$\pm$0.10          &  65.34$\pm$0.20    &  62.76$\pm$0.28    & 61.95             & 57.21             & 55.37              & 58.18    \\
  FED-PUB~\cite{baek2023personalized}       & 74.17$\pm$0.29    & 70.35$\pm$0.54          & 66.80$\pm$0.85          &  65.39$\pm$2.44    & 58.38$\pm$1.19          & 60.73$\pm$0.74          & 58.94                   & 55.85                   & 55.13                         & 56.64          \\
  \rowcolor{gray!20}
  FedGTA~\cite{li2023fedgta}        & 70.34$\pm$1.53          & 59.59$\pm$1.10          & 56.12$\pm$1.48          & 63.20$\pm$1.27          & 58.11$\pm$1.06          & 60.99$\pm$0.77          & 60.97                   & 56.69                   & 56.25                         & 57.97          \\
  AdaFGL~\cite{li2024adafgl}        & 70.01$\pm$1.91          & 58.94$\pm$1.11          & 56.25$\pm$1.35          & 61.90$\pm$0.80          & 58.93$\pm$2.18          & 60.68$\pm$0.63          & 60.96                   & 57.83                   & 56.80                         & 58.53          \\
  \rowcolor{gray!20}
  FedTAD~\cite{zhu2024fedtad}        & 69.34$\pm$1.26          & 62.11$\pm$0.27          & 56.39$\pm$0.52          & 61.96$\pm$0.54         & 59.24$\pm$0.36          & 60.24$\pm$0.92          & 56.85                   & 54.48                   & 53.05                         & 54.79          \\ 
  FedIIH~\cite{wentao2025fediih}     & 71.67$\pm$0.02          & 71.69$\pm$0.12          & 69.99$\pm$0.03      & 68.79$\pm$0.09  & \underline{66.98$\pm$0.04} & 64.73$\pm$0.35          & 63.58          & 62.50          & 61.43                & 62.50 \\
  \rowcolor{gray!20}
  FedICI~\cite{yuintegrating} & 74.42$\pm$0.06          & 72.23$\pm$0.14          & 70.55$\pm$0.12          & \underline{69.41$\pm$0.09}          & 66.68$\pm$0.06          & \underline{65.57$\pm$0.09}          & \underline{65.65}          & \underline{64.13}          & \underline{62.75}         & \underline{64.18}              \\
  FedSSA~\cite{yu2026heterogeneity}                & 74.33$\pm$0.09          & \underline{72.27$\pm$0.07} & \underline{71.03$\pm$0.09}         & 69.39$\pm$0.05    & 66.43$\pm$0.13  & 64.94$\pm$0.12 & 65.57          & 63.52          & 62.60                & 63.90             \\ \hline
  \rowcolor{yellow!30}
  FedGMC (Ours)        & \textbf{75.43$\pm$0.08}          & \textbf{73.97$\pm$0.09} & \textbf{72.60$\pm$0.14}         & \textbf{69.81$\pm$0.08}    & \textbf{67.38$\pm$0.10}  & \textbf{65.84$\pm$0.14} & \textbf{66.57}          & \textbf{64.78}          & \textbf{63.42}                & \textbf{64.92} \\ \hline
      \end{tabular}
       }
\end{table*}

\subsection{Additional Ablation Studies}\label{ap_ablation_study}
To further evaluate the contribution of each key component in our proposed FedGMC, we carry out ablation studies on eleven datasets under both non-overlapping and overlapping partitioning settings. Specifically, since there are three essential components in our FedGMC (\textit{i.e.}, semantic manifold calibration, structural manifold calibration, and global
manifold refinement), we employ `FedGMC (w/o Semantic)', `FedGMC (w/o Structural)', and `FedGMC (w/o Refinement)' to represent the reduced models, respectively. Therefore, we consider ablation studies, which are shown in Tab.~\ref{table_ablation}. We can observe that the performances consistently degrade across all datasets when any individual component is removed. This demonstrates that each component is essential and contributes significantly to the overall effectiveness of our proposed FedGMC.

\begin{table*}[t]
    \centering
    \scriptsize
    \caption{Ablation studies are conducted under both non-overlapping and overlapping partitioning settings on eleven datasets.}
    \label{table_ablation}
    \renewcommand{\arraystretch}{0.9} 
    \scalebox{0.75}{
    \begin{tabular}{lcccccc}
      \hline
  \rowcolor{gray!50}
  & \multicolumn{6}{c}{Cora}                                                                                                                                \\ \hline
  Methods & \begin{tabular}[c]{@{}c@{}}non-overlapping\\ 5 clients\end{tabular} & \begin{tabular}[c]{@{}c@{}}non-overlapping\\ 10 clients\end{tabular} & \begin{tabular}[c]{@{}c@{}}non-overlapping\\ 20 clients\end{tabular} & \begin{tabular}[c]{@{}c@{}}overlapping\\ 10 clients\end{tabular} & \begin{tabular}[c]{@{}c@{}}overlapping\\ 30 clients\end{tabular} & \begin{tabular}[c]{@{}c@{}}overlapping\\ 50 clients\end{tabular} \\ \hline
  FedGMC (w/o Semantic)    & 80.77$\pm$0.69 ($\downarrow$ 3.39)                                  & 79.18$\pm$0.72 ($\downarrow$ 4.47)                                   & 82.09$\pm$0.45 ($\downarrow$ 2.46)                              & 77.57$\pm$0.13 ($\downarrow$ 2.63)                               & 74.36$\pm$0.19 ($\downarrow$ 1.89)                    & 76.60$\pm$0.06 ($\downarrow$ 1.61)                               \\ \hline
  FedGMC (w/o Structural)     & 74.23$\pm$0.05 ($\downarrow$ 9.93)                                 & 75.26$\pm$0.17 ($\downarrow$ 8.39)                                   & 79.79$\pm$0.09 ($\downarrow$ 4.76)                                   & 78.10$\pm$0.05 ($\downarrow$ 2.10)                               & 74.26$\pm$0.08 ($\downarrow$ 1.99)                                   & 74.33$\pm$0.35 ($\downarrow$ 3.88)                               \\ \hline
  FedGMC (w/o Refinement)    & 81.31$\pm$0.40 ($\downarrow$ 2.85)                      & 80.56$\pm$0.95 ($\downarrow$ 3.09)                                   & 81.35$\pm$0.56 ($\downarrow$ 3.20)                                   & 78.53$\pm$0.18 ($\downarrow$ 1.67)                               & 75.20$\pm$0.06 ($\downarrow$ 1.05)                                   & 77.16$\pm$0.09 ($\downarrow$ 1.05)     \\ \hline
  FedGMC    & \textbf{84.16$\pm$0.11}                               & \textbf{83.65$\pm$0.16}                                 & \textbf{84.55$\pm$0.13}                                 & \textbf{80.20$\pm$0.09}                               & \textbf{76.25$\pm$0.09}                                   & \textbf{78.21$\pm$0.10}     \\ \hline
  \rowcolor{gray!50}
  & \multicolumn{6}{c}{CiteSeer}                                                                                                                                \\ \hline
  Methods & \begin{tabular}[c]{@{}c@{}}non-overlapping\\ 5 clients\end{tabular} & \begin{tabular}[c]{@{}c@{}}non-overlapping\\ 10 clients\end{tabular} & \begin{tabular}[c]{@{}c@{}}non-overlapping\\ 20 clients\end{tabular} & \begin{tabular}[c]{@{}c@{}}overlapping\\ 10 clients\end{tabular} & \begin{tabular}[c]{@{}c@{}}overlapping\\ 30 clients\end{tabular} & \begin{tabular}[c]{@{}c@{}}overlapping\\ 50 clients\end{tabular} \\ \hline
  FedGMC (w/o Semantic)    & 70.19$\pm$0.56 ($\downarrow$ 3.69)                                  & 74.10$\pm$0.36 ($\downarrow$ 3.49)                                   & 71.82$\pm$0.18 ($\downarrow$ 2.53)                              & 69.27$\pm$0.64 ($\downarrow$ 2.96)                               & 69.67$\pm$0.11 ($\downarrow$ 2.76)                    & 67.87$\pm$0.05 ($\downarrow$ 2.01)                               \\ \hline
  FedGMC (w/o Structural)     & 66.36$\pm$0.17 ($\downarrow$ 7.52)                                 & 74.11$\pm$0.51 ($\downarrow$ 3.48)                                   & 71.61$\pm$0.28 ($\downarrow$ 2.74)                                   & 67.58$\pm$0.66 ($\downarrow$ 4.65)                               & 69.78$\pm$0.20 ($\downarrow$ 2.65)                                   & 66.75$\pm$0.02 ($\downarrow$ 3.13)                               \\ \hline
  FedGMC (w/o Refinement)    & 70.09$\pm$0.10 ($\downarrow$ 3.79)                      & 74.80$\pm$0.66 ($\downarrow$ 2.79)                                   & 72.97$\pm$0.81 ($\downarrow$ 1.38)                                   & 69.81$\pm$0.53 ($\downarrow$ 2.42)                               & 70.52$\pm$0.50 ($\downarrow$ 1.91)                                   & 68.14$\pm$0.36 ($\downarrow$ 1.74)     \\ \hline
  FedGMC    & \textbf{73.88$\pm$0.11}                               & \textbf{77.59$\pm$0.06}                                 & \textbf{74.35$\pm$0.10}                                 & \textbf{72.23$\pm$0.13}                               & \textbf{72.43$\pm$0.12}                                   & \textbf{69.88$\pm$0.16}     \\ \hline
  \rowcolor{gray!50}
  & \multicolumn{6}{c}{PubMed}                                                                                                                                \\ \hline
  Methods & \begin{tabular}[c]{@{}c@{}}non-overlapping\\ 5 clients\end{tabular} & \begin{tabular}[c]{@{}c@{}}non-overlapping\\ 10 clients\end{tabular} & \begin{tabular}[c]{@{}c@{}}non-overlapping\\ 20 clients\end{tabular} & \begin{tabular}[c]{@{}c@{}}overlapping\\ 10 clients\end{tabular} & \begin{tabular}[c]{@{}c@{}}overlapping\\ 30 clients\end{tabular} & \begin{tabular}[c]{@{}c@{}}overlapping\\ 50 clients\end{tabular} \\ \hline
  FedGMC (w/o Semantic)    & 86.99$\pm$0.64 ($\downarrow$ 1.40)                                  & 86.20$\pm$0.60 ($\downarrow$ 2.47)                                   & 85.63$\pm$0.17 ($\downarrow$ 2.48)                              & 86.20$\pm$0.07 ($\downarrow$ 1.22)                               & 86.31$\pm$0.12 ($\downarrow$ 0.80)                    & 84.34$\pm$0.31 ($\downarrow$ 1.36)                               \\ \hline
  FedGMC (w/o Structural)     & 86.62$\pm$0.17 ($\downarrow$ 1.77)                                 & 85.36$\pm$0.07 ($\downarrow$ 3.31)                                   & 84.76$\pm$0.27 ($\downarrow$ 3.35)                                   & 85.29$\pm$0.15 ($\downarrow$ 2.13)                               & 85.52$\pm$0.28 ($\downarrow$ 1.59)                                   & 83.54$\pm$0.17 ($\downarrow$ 2.16)                               \\ \hline
  FedGMC (w/o Refinement)    & 86.78$\pm$0.29 ($\downarrow$ 1.61)                      & 86.50$\pm$0.08 ($\downarrow$ 2.17)                                   & 86.24$\pm$0.14 ($\downarrow$ 1.87)                                   & 86.17$\pm$0.06 ($\downarrow$ 1.25)                               & 86.26$\pm$0.16 ($\downarrow$ 0.85)                                   & 84.74$\pm$0.10 ($\downarrow$ 0.96)     \\ \hline
  FedGMC    & \textbf{88.39$\pm$0.09}                               & \textbf{88.67$\pm$0.12}                                 & \textbf{88.11$\pm$0.11}                                 & \textbf{87.42$\pm$0.07}                               & \textbf{87.11$\pm$0.07}                                   & \textbf{85.70$\pm$0.17}     \\ \hline
  \rowcolor{gray!50}
  & \multicolumn{6}{c}{Amazon-Computer}                                                                                                                                \\ \hline
  Methods & \begin{tabular}[c]{@{}c@{}}non-overlapping\\ 5 clients\end{tabular} & \begin{tabular}[c]{@{}c@{}}non-overlapping\\ 10 clients\end{tabular} & \begin{tabular}[c]{@{}c@{}}non-overlapping\\ 20 clients\end{tabular} & \begin{tabular}[c]{@{}c@{}}overlapping\\ 10 clients\end{tabular} & \begin{tabular}[c]{@{}c@{}}overlapping\\ 30 clients\end{tabular} & \begin{tabular}[c]{@{}c@{}}overlapping\\ 50 clients\end{tabular} \\ \hline
  FedGMC (w/o Semantic)    & 89.82$\pm$0.61 ($\downarrow$ 1.56)                                  & 90.06$\pm$0.66 ($\downarrow$ 1.79)                                   & 88.98$\pm$0.34 ($\downarrow$ 1.94)                              & 89.70$\pm$0.04 ($\downarrow$ 1.51)                               & 88.50$\pm$0.24 ($\downarrow$ 1.30)                    & 88.19$\pm$0.08 ($\downarrow$ 1.48)                               \\ \hline
  FedGMC (w/o Structural)     & 86.49$\pm$0.46 ($\downarrow$ 4.89)                                 & 86.53$\pm$0.12 ($\downarrow$ 5.32)                                   & 87.15$\pm$0.16 ($\downarrow$ 3.77)                                   & 85.79$\pm$0.15 ($\downarrow$ 5.42)                               & 84.65$\pm$0.05 ($\downarrow$ 5.15)                                   & 85.86$\pm$0.09 ($\downarrow$ 3.81)                               \\ \hline
  FedGMC (w/o Refinement)    & 89.44$\pm$0.27 ($\downarrow$ 1.94)                      & 90.18$\pm$0.31 ($\downarrow$ 1.67)                                   & 89.19$\pm$0.25 ($\downarrow$ 1.73)                                   & 90.31$\pm$0.09 ($\downarrow$ 0.90)                               & 88.84$\pm$0.13 ($\downarrow$ 0.96)                                   & 88.35$\pm$0.15 ($\downarrow$ 1.32)     \\ \hline
  FedGMC    & \textbf{91.38$\pm$0.10}                               & \textbf{91.85$\pm$0.14}                                 & \textbf{90.92$\pm$0.09}                                 & \textbf{91.21$\pm$0.10}                               & \textbf{89.80$\pm$0.09}                                   & \textbf{89.67$\pm$0.10}     \\ \hline
  \rowcolor{gray!50}
  & \multicolumn{6}{c}{Amazon-Photo}                                                                                                                                \\ \hline
  Methods & \begin{tabular}[c]{@{}c@{}}non-overlapping\\ 5 clients\end{tabular} & \begin{tabular}[c]{@{}c@{}}non-overlapping\\ 10 clients\end{tabular} & \begin{tabular}[c]{@{}c@{}}non-overlapping\\ 20 clients\end{tabular} & \begin{tabular}[c]{@{}c@{}}overlapping\\ 10 clients\end{tabular} & \begin{tabular}[c]{@{}c@{}}overlapping\\ 30 clients\end{tabular} & \begin{tabular}[c]{@{}c@{}}overlapping\\ 50 clients\end{tabular} \\ \hline
  FedGMC (w/o Semantic)    & 92.52$\pm$0.08 ($\downarrow$ 1.46)                                  & 92.53$\pm$0.06 ($\downarrow$ 2.22)                                   & 91.53$\pm$0.30 ($\downarrow$ 2.98)                              & 92.61$\pm$0.12 ($\downarrow$ 1.17)                               & 93.53$\pm$0.10 ($\downarrow$ 1.26)                    & 92.31$\pm$0.18 ($\downarrow$ 1.23)                               \\ \hline
  FedGMC (w/o Structural)     & 91.44$\pm$0.36 ($\downarrow$ 2.54)                                 & 92.95$\pm$0.25 ($\downarrow$ 1.80)                                   & 92.78$\pm$0.16 ($\downarrow$ 1.73)                                   & 90.41$\pm$0.19 ($\downarrow$ 3.37)                               & 92.77$\pm$0.08 ($\downarrow$ 2.02)                                   & 91.72$\pm$0.09 ($\downarrow$ 1.82)                               \\ \hline
  FedGMC (w/o Refinement)    & 91.86$\pm$0.04 ($\downarrow$ 2.12)                      & 92.97$\pm$0.14 ($\downarrow$ 1.78)                                   & 93.40$\pm$0.22 ($\downarrow$ 1.11)                                   & 92.68$\pm$0.04 ($\downarrow$ 1.10)                               & 93.77$\pm$0.06 ($\downarrow$ 1.02)                                   & 92.57$\pm$0.06 ($\downarrow$ 0.97)     \\ \hline
  FedGMC    & \textbf{93.98$\pm$0.06}                               & \textbf{94.75$\pm$0.07}                                 & \textbf{94.51$\pm$0.10}                                 & \textbf{93.78$\pm$0.15}                               & \textbf{94.79$\pm$0.13}                                   & \textbf{93.54$\pm$0.16}     \\ \hline
  \rowcolor{gray!50}
  & \multicolumn{6}{c}{ogbn-arxiv}                                                                                                                                \\ \hline
  Methods & \begin{tabular}[c]{@{}c@{}}non-overlapping\\ 5 clients\end{tabular} & \begin{tabular}[c]{@{}c@{}}non-overlapping\\ 10 clients\end{tabular} & \begin{tabular}[c]{@{}c@{}}non-overlapping\\ 20 clients\end{tabular} & \begin{tabular}[c]{@{}c@{}}overlapping\\ 10 clients\end{tabular} & \begin{tabular}[c]{@{}c@{}}overlapping\\ 30 clients\end{tabular} & \begin{tabular}[c]{@{}c@{}}overlapping\\ 50 clients\end{tabular} \\ \hline
  FedGMC (w/o Semantic)    & 68.53$\pm$0.07 ($\downarrow$ 2.42)                                  & 67.45$\pm$0.21 ($\downarrow$ 2.23)                                   & 66.46$\pm$0.18 ($\downarrow$ 2.42)                              & 65.46$\pm$0.13 ($\downarrow$ 2.27)                               & 64.69$\pm$0.09 ($\downarrow$ 1.75)                    & 63.28$\pm$0.18 ($\downarrow$ 2.59)                               \\ \hline
  FedGMC (w/o Structural)     & 68.56$\pm$0.16 ($\downarrow$ 2.39)                                 & 67.79$\pm$0.13 ($\downarrow$ 1.89)                                   & 67.02$\pm$0.09 ($\downarrow$ 1.86)                                   & 65.32$\pm$0.09 ($\downarrow$ 2.41)                               & 64.07$\pm$0.10 ($\downarrow$ 2.37)                                   & 63.64$\pm$0.18 ($\downarrow$ 2.23)                               \\ \hline
  FedGMC (w/o Refinement)    & 68.50$\pm$0.38 ($\downarrow$ 2.45)                      & 67.25$\pm$0.09 ($\downarrow$ 2.43)                                   & 67.22$\pm$0.10 ($\downarrow$ 1.66)                                   & 65.78$\pm$0.08 ($\downarrow$ 1.95)                               & 65.50$\pm$0.26 ($\downarrow$ 0.94)                                   & 64.42$\pm$0.10 ($\downarrow$ 1.45)     \\ \hline
  FedGMC    & \textbf{70.95$\pm$0.08}                               & \textbf{69.68$\pm$0.15}                                 & \textbf{68.88$\pm$0.13}                                 & \textbf{67.73$\pm$0.17}                               & \textbf{66.44$\pm$0.13}                                   & \textbf{65.87$\pm$0.09}     \\ \hline
  \rowcolor{gray!50}
  & \multicolumn{6}{c}{Roman-empire}                                                                                                                                \\ \hline
  Methods & \begin{tabular}[c]{@{}c@{}}non-overlapping\\ 5 clients\end{tabular} & \begin{tabular}[c]{@{}c@{}}non-overlapping\\ 10 clients\end{tabular} & \begin{tabular}[c]{@{}c@{}}non-overlapping\\ 20 clients\end{tabular} & \begin{tabular}[c]{@{}c@{}}overlapping\\ 10 clients\end{tabular} & \begin{tabular}[c]{@{}c@{}}overlapping\\ 30 clients\end{tabular} & \begin{tabular}[c]{@{}c@{}}overlapping\\ 50 clients\end{tabular} \\ \hline
  FedGMC (w/o Semantic)    & 70.56$\pm$0.04 ($\downarrow$ 1.39)                                  & 68.08$\pm$0.26 ($\downarrow$ 2.63)                                   & 65.50$\pm$0.26 ($\downarrow$ 1.49)                              & 65.46$\pm$0.02 ($\downarrow$ 1.76)                               & 63.64$\pm$0.11 ($\downarrow$ 1.00)                    & 60.69$\pm$0.16 ($\downarrow$ 2.22)                               \\ \hline
  FedGMC (w/o Structural)     & 64.63$\pm$0.20 ($\downarrow$ 7.32)                                 & 63.04$\pm$0.13 ($\downarrow$ 7.67)                                   & 61.38$\pm$0.14 ($\downarrow$ 5.61)                                   & 64.00$\pm$0.10 ($\downarrow$ 3.22)                               & 61.43$\pm$0.04 ($\downarrow$ 3.21)                                   & 59.35$\pm$0.16 ($\downarrow$ 3.56)                               \\ \hline
  FedGMC (w/o Refinement)    & 70.89$\pm$0.26 ($\downarrow$ 1.06)                      & 69.48$\pm$0.08 ($\downarrow$ 1.23)                                   & 65.49$\pm$0.05 ($\downarrow$ 1.50)                                   & 65.58$\pm$0.21 ($\downarrow$ 1.64)                               & 63.44$\pm$0.15 ($\downarrow$ 1.20)                                   & 60.70$\pm$0.17 ($\downarrow$ 2.21)     \\ \hline
  FedGMC    & \textbf{71.95$\pm$0.07}                               & \textbf{70.71$\pm$0.15}                                 & \textbf{66.99$\pm$0.17}                                 & \textbf{67.22$\pm$0.10}                               & \textbf{64.64$\pm$0.07}                                   & \textbf{62.91$\pm$0.10}     \\ \hline
  \rowcolor{gray!50}
  & \multicolumn{6}{c}{Amazon-ratings}                                                                                                                 \\ \hline
  Methods & \begin{tabular}[c]{@{}c@{}}non-overlapping\\ 5 clients\end{tabular} & \begin{tabular}[c]{@{}c@{}}non-overlapping\\ 10 clients\end{tabular} & \begin{tabular}[c]{@{}c@{}}non-overlapping\\ 20 clients\end{tabular} & \begin{tabular}[c]{@{}c@{}}overlapping\\ 10 clients\end{tabular} & \begin{tabular}[c]{@{}c@{}}overlapping\\ 30 clients\end{tabular} & \begin{tabular}[c]{@{}c@{}}overlapping\\ 50 clients\end{tabular} \\ \hline
  FedGMC (w/o Semantic)    & 43.90$\pm$0.29 ($\downarrow$ 1.39)                                  & 43.94$\pm$0.34 ($\downarrow$ 2.63)                                   & 45.39$\pm$0.36 ($\downarrow$ 1.49)                              & 41.10$\pm$0.01 ($\downarrow$ 1.76)                               & 41.44$\pm$0.03 ($\downarrow$ 1.00)                    & 42.32$\pm$0.11 ($\downarrow$ 2.22)                               \\ \hline
  FedGMC (w/o Structural)     & 41.77$\pm$0.40 ($\downarrow$ 7.32)                                 & 42.25$\pm$0.13 ($\downarrow$ 7.67)                                   & 44.50$\pm$0.04 ($\downarrow$ 5.61)                                   & 39.62$\pm$0.20 ($\downarrow$ 3.22)                               & 40.50$\pm$0.03 ($\downarrow$ 3.21)                                   & 40.69$\pm$0.23 ($\downarrow$ 3.56)                               \\ \hline
  FedGMC (w/o Refinement)    & 44.28$\pm$0.16 ($\downarrow$ 1.06)                      & 44.54$\pm$0.66 ($\downarrow$ 1.23)                                   & 45.97$\pm$0.30 ($\downarrow$ 1.50)                                   & 41.36$\pm$0.20 ($\downarrow$ 1.64)                               & 41.34$\pm$0.01 ($\downarrow$ 1.20)                                   & 42.34$\pm$0.03 ($\downarrow$ 2.21)     \\ \hline
  FedGMC    & \textbf{46.11$\pm$0.10}                               & \textbf{46.09$\pm$0.12}                                 & \textbf{46.56$\pm$0.08}                                 & \textbf{42.93$\pm$0.09}                               & \textbf{42.69$\pm$0.12}                                   & \textbf{43.37$\pm$0.16}     \\ \hline
  \rowcolor{gray!50}
  & \multicolumn{6}{c}{Minesweeper}                                                                                                                 \\ \hline
  Methods & \begin{tabular}[c]{@{}c@{}}non-overlapping\\ 5 clients\end{tabular} & \begin{tabular}[c]{@{}c@{}}non-overlapping\\ 10 clients\end{tabular} & \begin{tabular}[c]{@{}c@{}}non-overlapping\\ 20 clients\end{tabular} & \begin{tabular}[c]{@{}c@{}}overlapping\\ 10 clients\end{tabular} & \begin{tabular}[c]{@{}c@{}}overlapping\\ 30 clients\end{tabular} & \begin{tabular}[c]{@{}c@{}}overlapping\\ 50 clients\end{tabular} \\ \hline
  FedGMC (w/o Semantic)    & 85.69$\pm$0.27 ($\downarrow$ 1.10)                                  & 84.72$\pm$0.24 ($\downarrow$ 1.32)                                   & 82.94$\pm$0.35 ($\downarrow$ 1.90)                              & 76.35$\pm$0.06 ($\downarrow$ 1.09)                               & 73.52$\pm$0.02 ($\downarrow$ 1.72)                    & 70.80$\pm$0.08 ($\downarrow$ 1.60)                               \\ \hline
  FedGMC (w/o Structural)     & 80.43$\pm$0.19 ($\downarrow$ 6.36)                                 & 80.62$\pm$0.32 ($\downarrow$ 5.42)                                   & 79.30$\pm$0.11 ($\downarrow$ 5.54)                                   & 71.13$\pm$0.14 ($\downarrow$ 6.31)                               & 70.63$\pm$0.28 ($\downarrow$ 4.61)                                   & 68.35$\pm$0.19 ($\downarrow$ 4.05)                               \\ \hline
  FedGMC (w/o Refinement)    & 85.48$\pm$0.13 ($\downarrow$ 1.31)                      & 84.83$\pm$0.25 ($\downarrow$ 1.21)                                   & 82.51$\pm$0.12 ($\downarrow$ 2.33)                                   & 76.39$\pm$0.02 ($\downarrow$ 1.05)                               & 73.50$\pm$0.10 ($\downarrow$ 1.74)                                   & 70.82$\pm$0.02 ($\downarrow$ 1.58)     \\ \hline
  FedGMC    & \textbf{86.79$\pm$0.07}                               & \textbf{86.04$\pm$0.08}                                 & \textbf{84.84$\pm$0.10}                                 & \textbf{77.44$\pm$0.15}                               & \textbf{75.24$\pm$0.15}                                   & \textbf{72.40$\pm$0.07}     \\ \hline
  \rowcolor{gray!50}
  & \multicolumn{6}{c}{Tolokers}                                                                                                                 \\ \hline
  Methods & \begin{tabular}[c]{@{}c@{}}non-overlapping\\ 5 clients\end{tabular} & \begin{tabular}[c]{@{}c@{}}non-overlapping\\ 10 clients\end{tabular} & \begin{tabular}[c]{@{}c@{}}non-overlapping\\ 20 clients\end{tabular} & \begin{tabular}[c]{@{}c@{}}overlapping\\ 10 clients\end{tabular} & \begin{tabular}[c]{@{}c@{}}overlapping\\ 30 clients\end{tabular} & \begin{tabular}[c]{@{}c@{}}overlapping\\ 50 clients\end{tabular} \\ \hline
  FedGMC (w/o Semantic)    & 75.18$\pm$0.34 ($\downarrow$ 1.31)                                  & 72.77$\pm$0.37 ($\downarrow$ 1.26)                                   & 70.97$\pm$0.68 ($\downarrow$ 1.36)                              & 73.89$\pm$0.27 ($\downarrow$ 1.54)                               & 72.16$\pm$0.03 ($\downarrow$ 1.81)                    & 71.52$\pm$0.08 ($\downarrow$ 1.08)                               \\ \hline
  FedGMC (w/o Structural)     & 71.73$\pm$0.05 ($\downarrow$ 4.76)                                 & 72.61$\pm$0.19 ($\downarrow$ 1.42)                                   & 71.18$\pm$0.34 ($\downarrow$ 1.15)                                   & 72.19$\pm$0.09 ($\downarrow$ 3.24)                               & 72.33$\pm$0.09 ($\downarrow$ 1.64)                                   & 71.09$\pm$0.29 ($\downarrow$ 1.51)                               \\ \hline
  FedGMC (w/o Refinement)    & 75.10$\pm$0.11 ($\downarrow$ 1.39)                      & 72.90$\pm$0.05 ($\downarrow$ 1.13)                                   & 71.16$\pm$0.21 ($\downarrow$ 1.17)                                   & 73.62$\pm$0.05 ($\downarrow$ 1.81)                               & 72.53$\pm$0.07 ($\downarrow$ 1.44)                                   & 71.48$\pm$0.04 ($\downarrow$ 1.12)     \\ \hline
  FedGMC    & \textbf{76.49$\pm$0.15}                               & \textbf{74.03$\pm$0.09}                                 & \textbf{72.33$\pm$0.06}                                 & \textbf{75.43$\pm$0.08}                               & \textbf{73.97$\pm$0.09}                                   & \textbf{72.60$\pm$0.14}     \\ \hline
  \rowcolor{gray!50}
  & \multicolumn{6}{c}{Questions}                                                                                                                 \\ \hline
  Methods & \begin{tabular}[c]{@{}c@{}}non-overlapping\\ 5 clients\end{tabular} & \begin{tabular}[c]{@{}c@{}}non-overlapping\\ 10 clients\end{tabular} & \begin{tabular}[c]{@{}c@{}}non-overlapping\\ 20 clients\end{tabular} & \begin{tabular}[c]{@{}c@{}}overlapping\\ 10 clients\end{tabular} & \begin{tabular}[c]{@{}c@{}}overlapping\\ 30 clients\end{tabular} & \begin{tabular}[c]{@{}c@{}}overlapping\\ 50 clients\end{tabular} \\ \hline
  FedGMC (w/o Semantic)    & 68.51$\pm$0.13 ($\downarrow$ 1.22)                                  & 68.48$\pm$0.41 ($\downarrow$ 1.24)                                   & 62.75$\pm$0.10 ($\downarrow$ 3.06)                              & 67.79$\pm$0.12 ($\downarrow$ 2.02)                               & 65.57$\pm$0.27 ($\downarrow$ 1.81)                    & 64.28$\pm$0.20 ($\downarrow$ 1.56)                               \\ \hline
  FedGMC (w/o Structural)     & 67.24$\pm$0.27 ($\downarrow$ 2.49)                                 & 66.33$\pm$0.23 ($\downarrow$ 3.39)                                   & 63.36$\pm$0.17 ($\downarrow$ 2.45)                                   & 67.57$\pm$0.34 ($\downarrow$ 2.24)                               & 65.73$\pm$0.38 ($\downarrow$ 1.65)                                   & 63.72$\pm$0.33 ($\downarrow$ 2.12)                               \\ \hline
  FedGMC (w/o Refinement)    & 67.89$\pm$0.20 ($\downarrow$ 1.84)                      & 66.47$\pm$0.45 ($\downarrow$ 3.25)                                   & 63.70$\pm$0.73 ($\downarrow$ 2.11)                                   & 67.65$\pm$0.15 ($\downarrow$ 2.16)                               & 66.24$\pm$0.07 ($\downarrow$ 1.14)                                   & 64.69$\pm$0.31 ($\downarrow$ 1.15)     \\ \hline
  FedGMC    & \textbf{69.73$\pm$0.08}                               & \textbf{69.72$\pm$0.14}                                 & \textbf{65.81$\pm$0.12}                                 & \textbf{69.81$\pm$0.08}                               & \textbf{67.38$\pm$0.10}                                   & \textbf{65.84$\pm$0.14}     \\ \hline
    \end{tabular}
    }
  \end{table*}

\subsection{Additional Convergence Curves}\label{ap_convergence_curves}
To further evaluate the convergence of our proposed FedGMC and the compared methods, we present convergence curves on eight datasets under non-overlapping partitioning setting (see Fig.~\ref{fig_additional_convergence1}), and on six datasets under overlapping partitioning setting (see Fig.~\ref{fig_additional_convergence2}). We can observe that our proposed FedGMC converges stably. In contrast, the convergence curves of typical methods such as GCFL (\textit{e.g.}, Fig.~\ref{fig_additional_convergence2}(f)) exhibit pronounced instability. This instability arises because GCFL relies on dynamic clustering to guide federated aggregation. However, under strong heterogeneity, local models can undergo substantial changes between communication rounds, leading to rapid fluctuations in clustering. These fluctuations result in an inconsistent aggregation process and hinder stable knowledge transfer across clients. Consequently, the convergence process becomes more erratic, resulting in the observed oscillations in the convergence curves.

\begin{figure*}[!t]
  \centering
  \subfloat[\footnotesize{\textit{Cora}}]{\includegraphics[width=0.25\columnwidth]{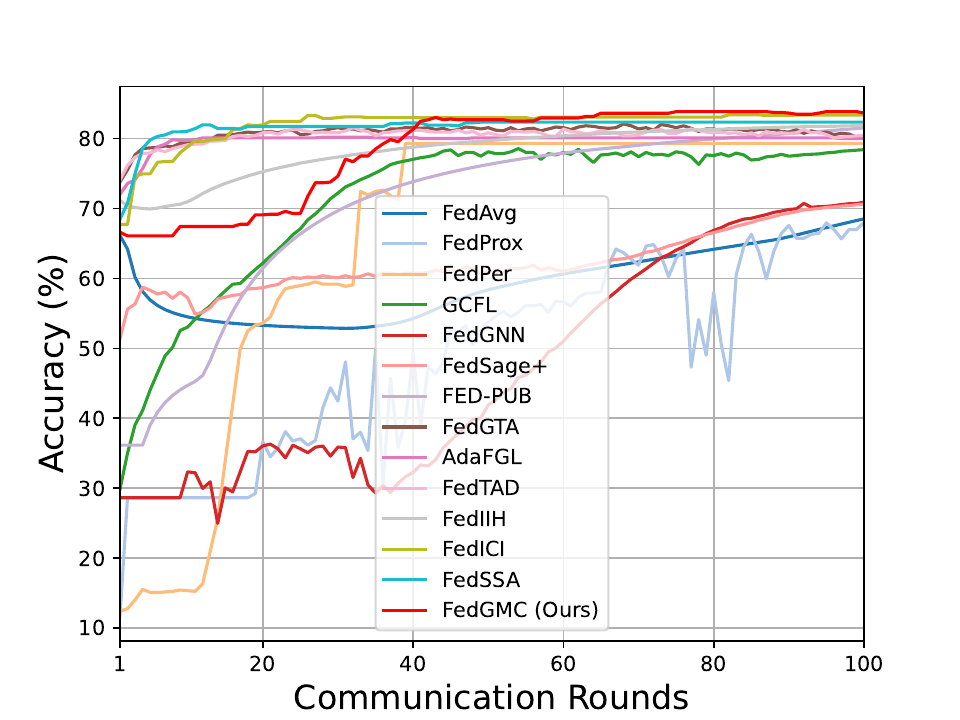}\label{fig_additional_convergence1_1}}
  \hfill
  \subfloat[\footnotesize{\textit{CiteSeer}}]{\includegraphics[width=0.25\columnwidth]{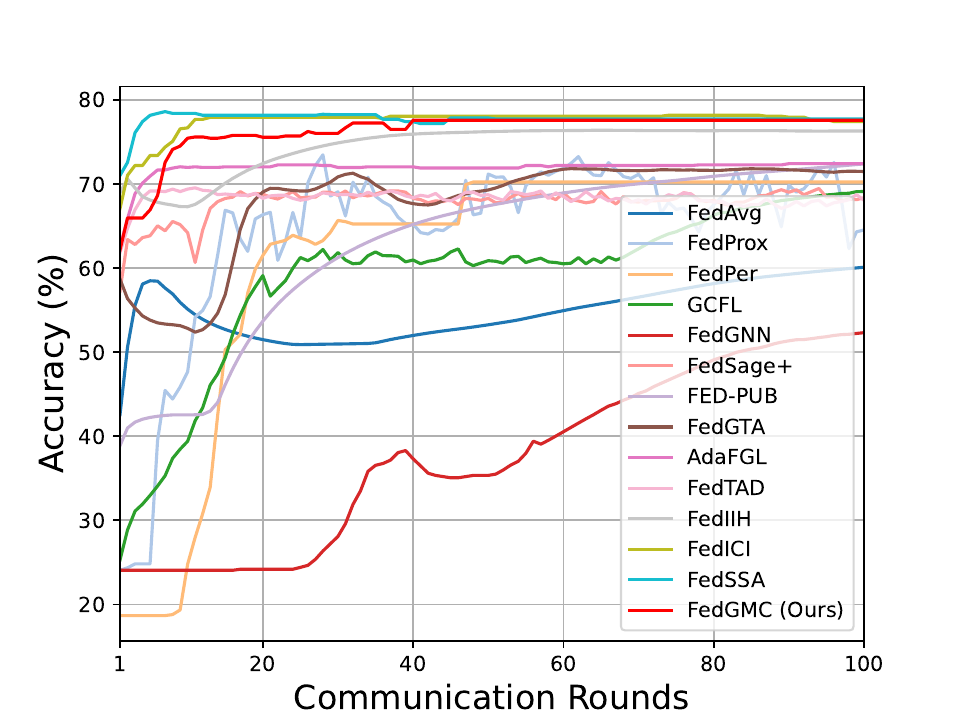}\label{fig_additional_convergence1_2}}
  \hfill
  \subfloat[\footnotesize{\textit{PubMed}}]{\includegraphics[width=0.25\columnwidth]{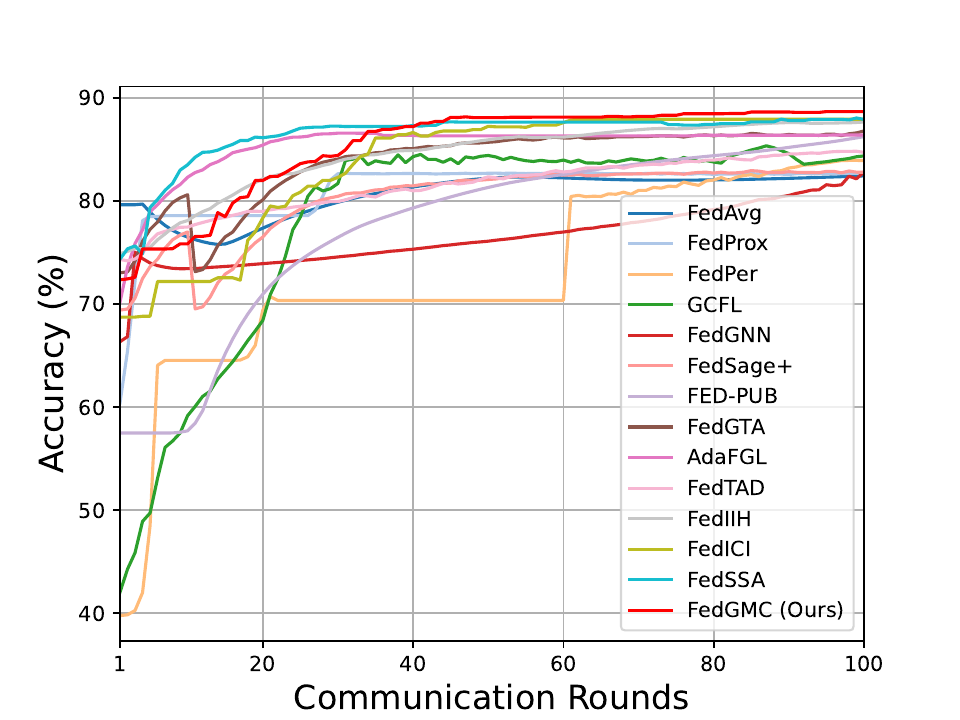}\label{fig_additional_convergence1_3}}
  \hfill
  \subfloat[\footnotesize{\textit{ogbn-arxiv}}]{\includegraphics[width=0.25\columnwidth]{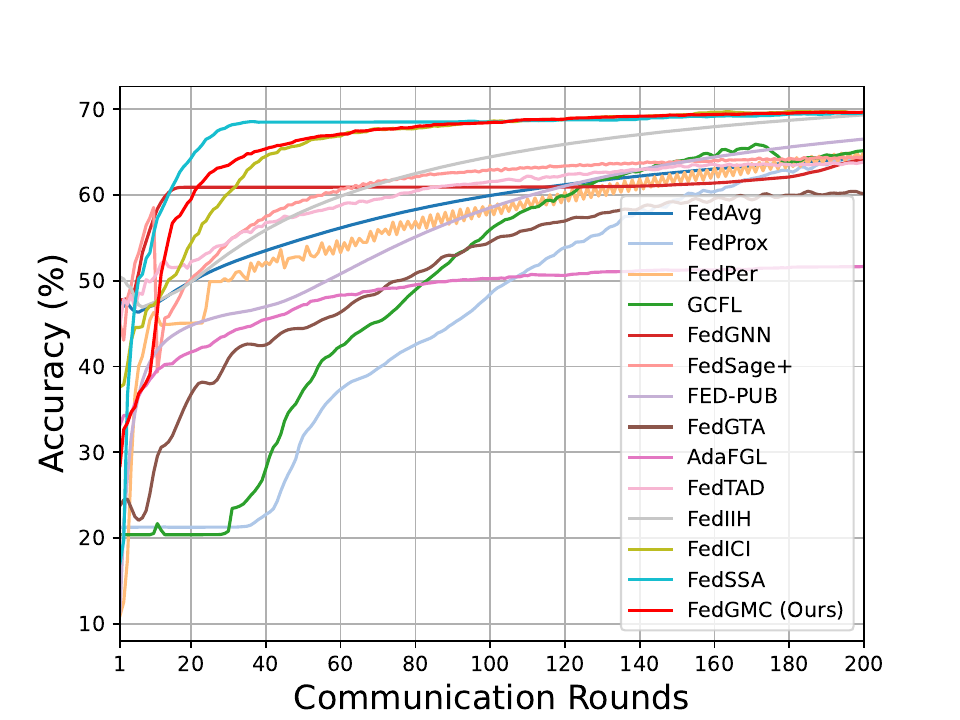}\label{fig_additional_convergence1_4}}
  \hfill
  \subfloat[\footnotesize{\textit{Roman-empire}}]{\includegraphics[width=0.25\columnwidth]{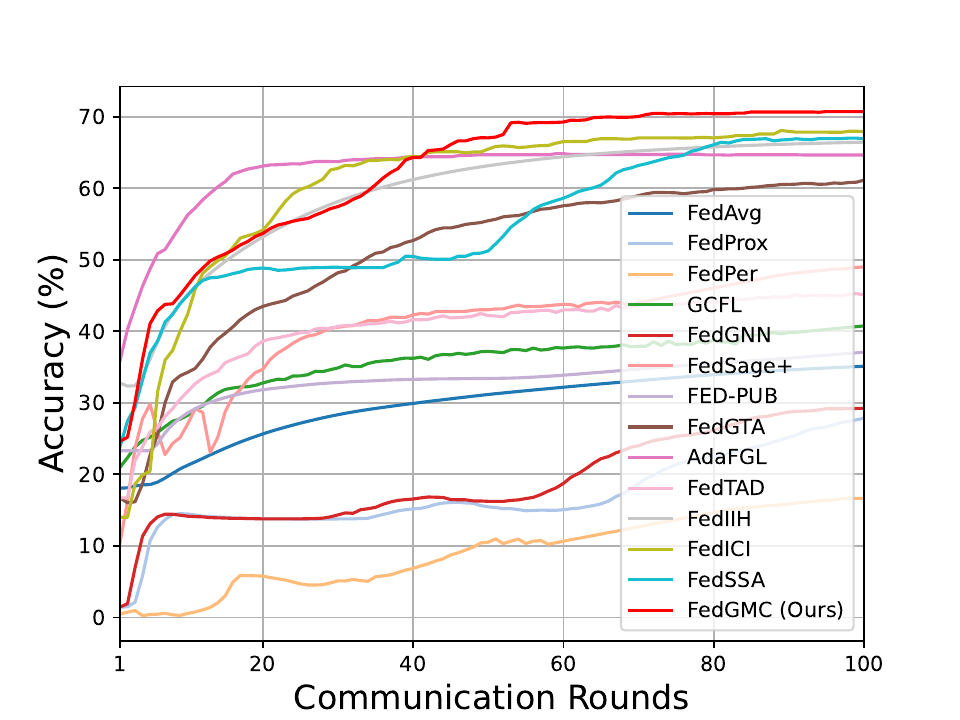}\label{fig_additional_convergence1_5}}
  \hfill
  \subfloat[\footnotesize{\textit{Minesweeper}}]{\includegraphics[width=0.25\columnwidth]{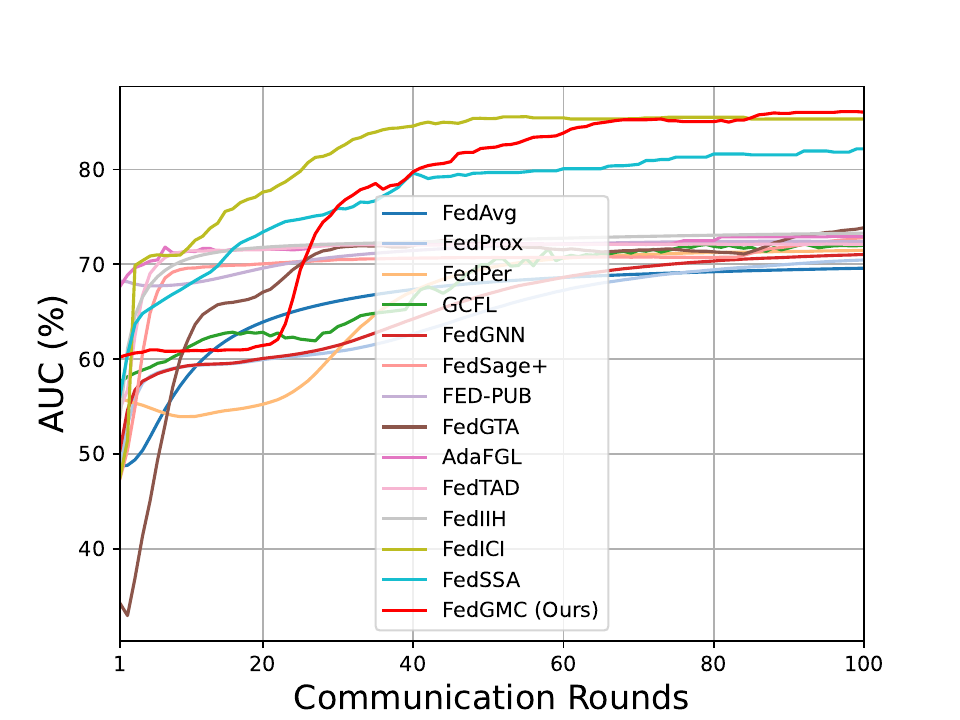}\label{fig_additional_convergence1_6}}
  \hfill
  \subfloat[\footnotesize{\textit{Tolokers}}]{\includegraphics[width=0.25\columnwidth]{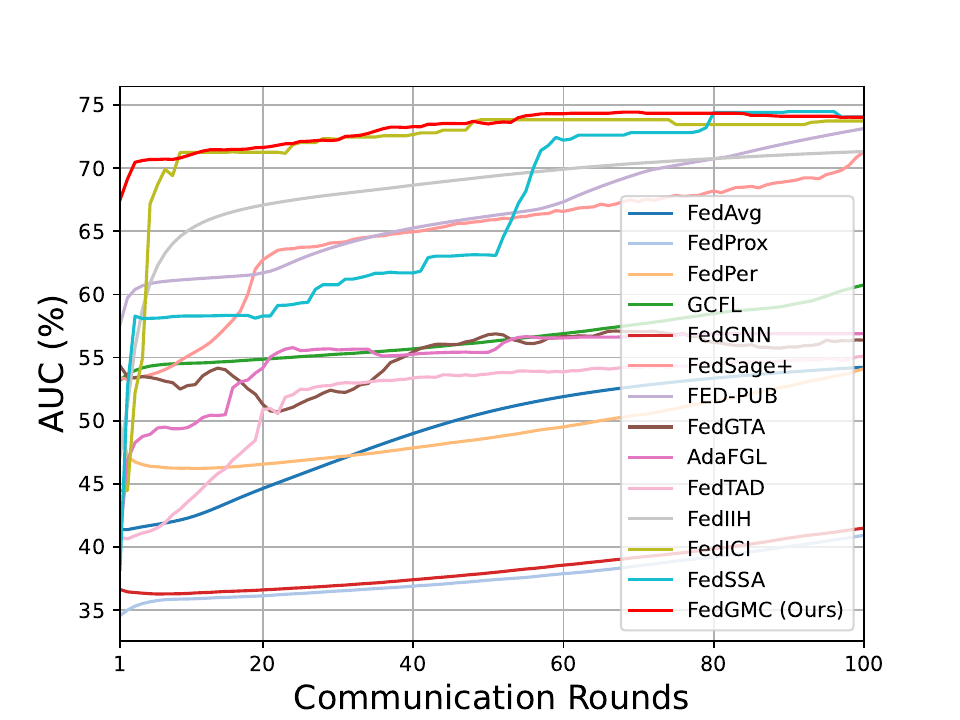}\label{fig_additional_convergence1_7}}
  \hfill
  \subfloat[\footnotesize{\textit{Questions}}]{\includegraphics[width=0.25\columnwidth]{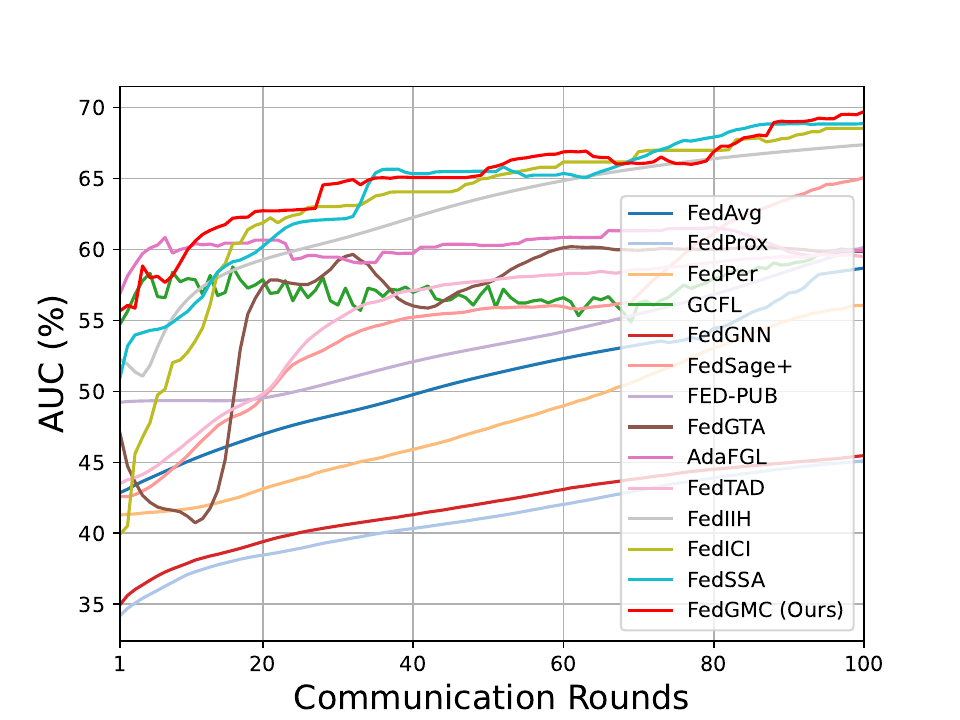}\label{fig_additional_convergence1_8}}
  \caption{Convergence curves on eight datasets under non-overlapping partitioning setting with 10 clients.}
  \label{fig_additional_convergence1}
\end{figure*}

\begin{figure*}[!t]
  \centering
  \subfloat[\footnotesize{\textit{Cora}}]{\includegraphics[width=0.33\columnwidth]{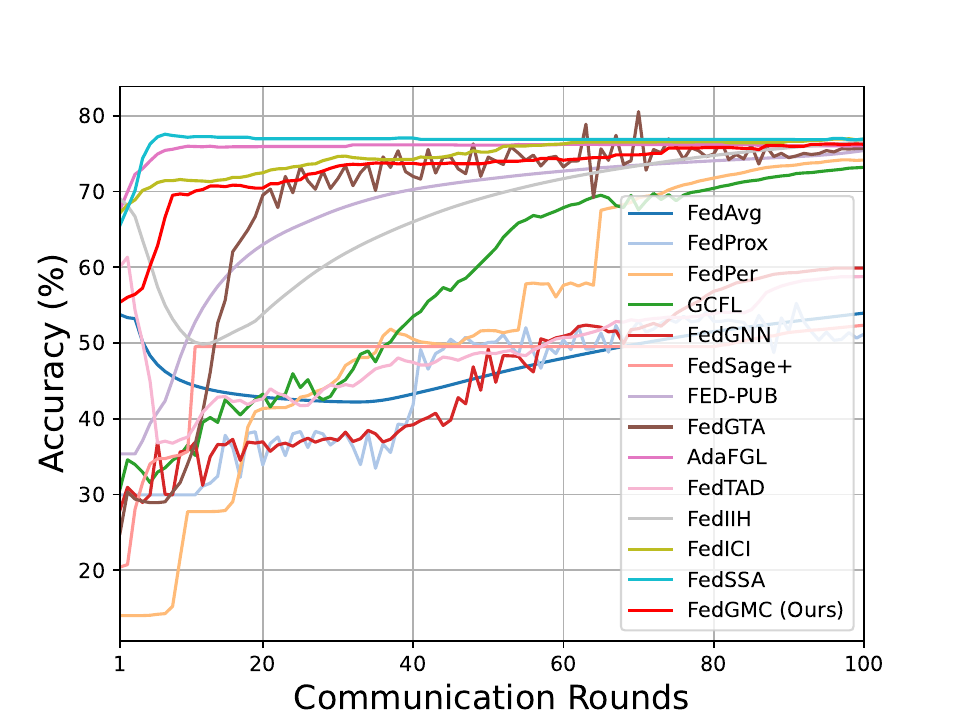}\label{fig_additional_convergence2_1}}
  \hfill
  \subfloat[\footnotesize{\textit{CiteSeer}}]{\includegraphics[width=0.33\columnwidth]{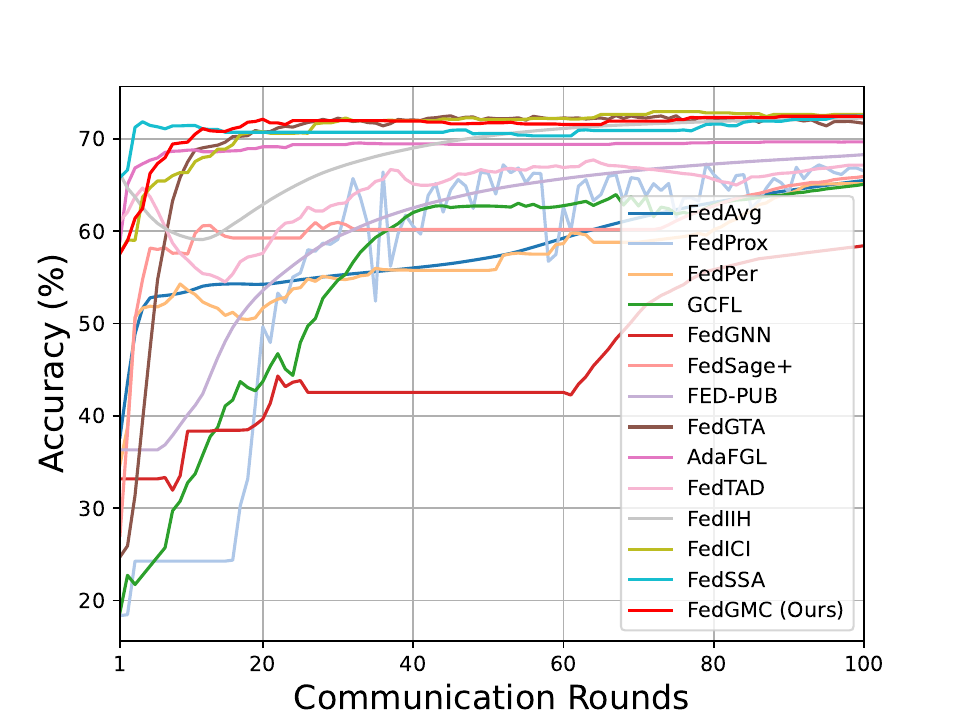}\label{fig_additional_convergence2_2}}
  \hfill
  \subfloat[\footnotesize{\textit{ogbn-arxiv}}]{\includegraphics[width=0.33\columnwidth]{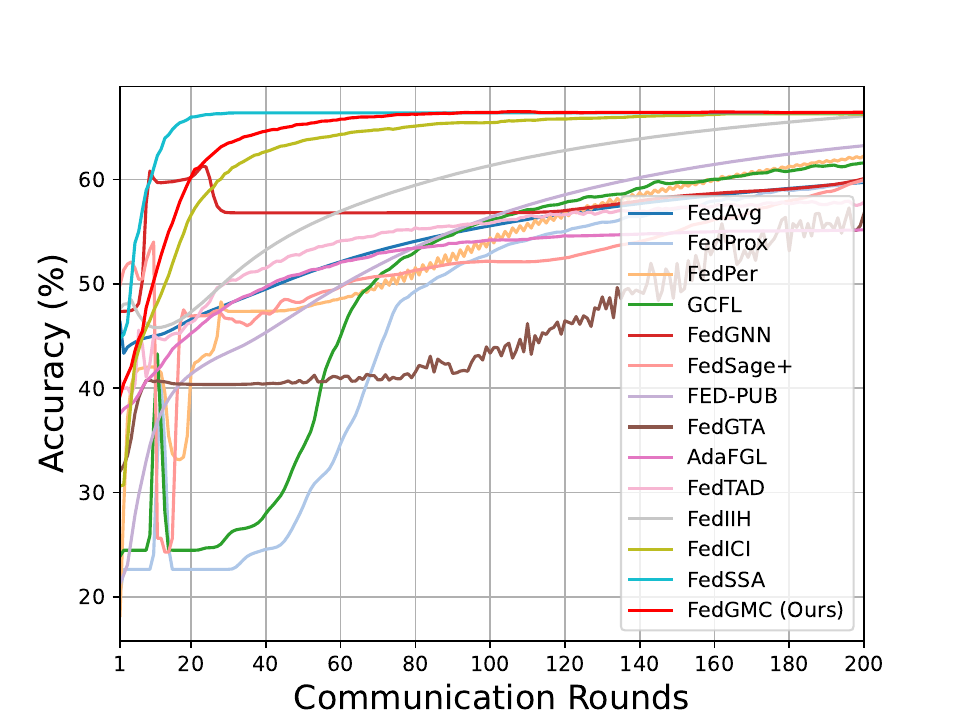}\label{fig_additional_convergence2_3}}
  \hfill
  \subfloat[\footnotesize{\textit{Roman-empire}}]{\includegraphics[width=0.33\columnwidth]{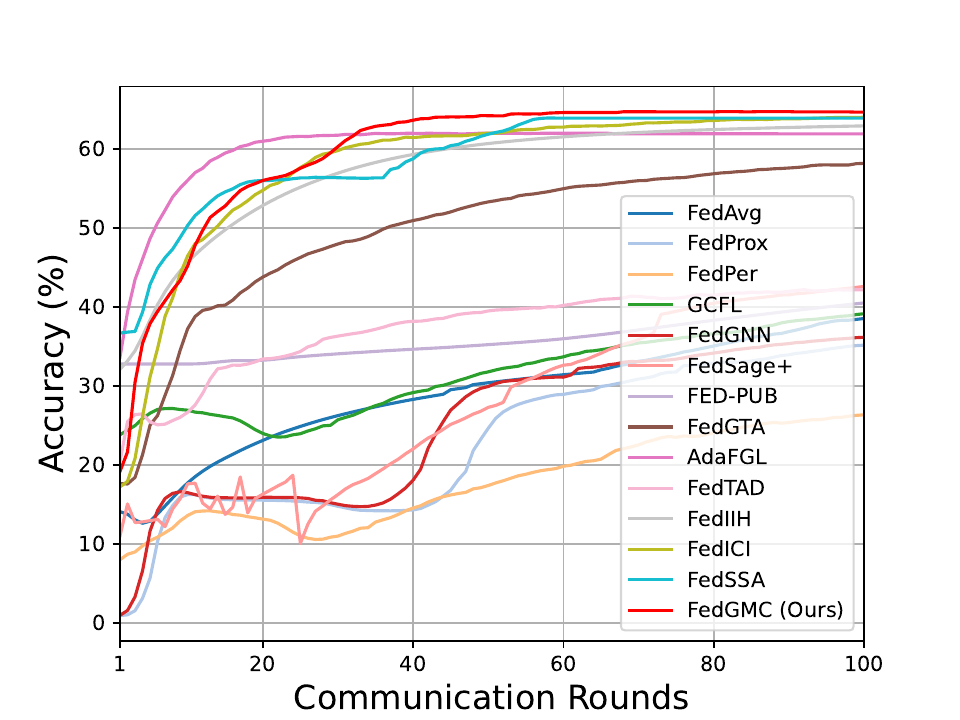}\label{fig_additional_convergence2_4}}
  \hfill
  \subfloat[\footnotesize{\textit{Minesweeper}}]{\includegraphics[width=0.33\columnwidth]{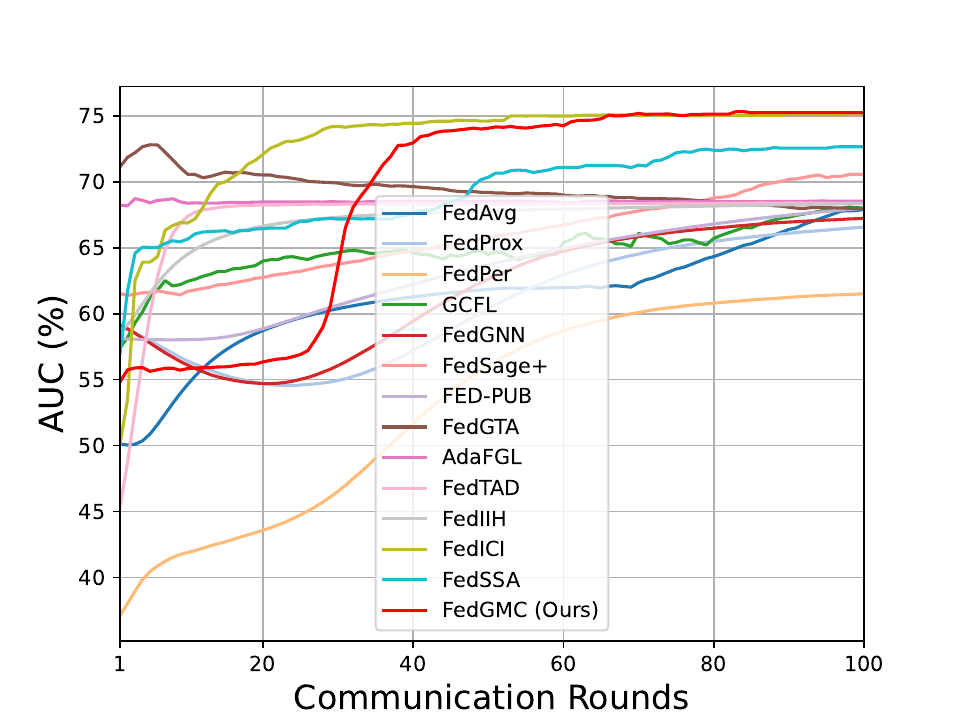}\label{fig_additional_convergence2_5}}
  \hfill
  \subfloat[\footnotesize{\textit{Questions}}]{\includegraphics[width=0.33\columnwidth]{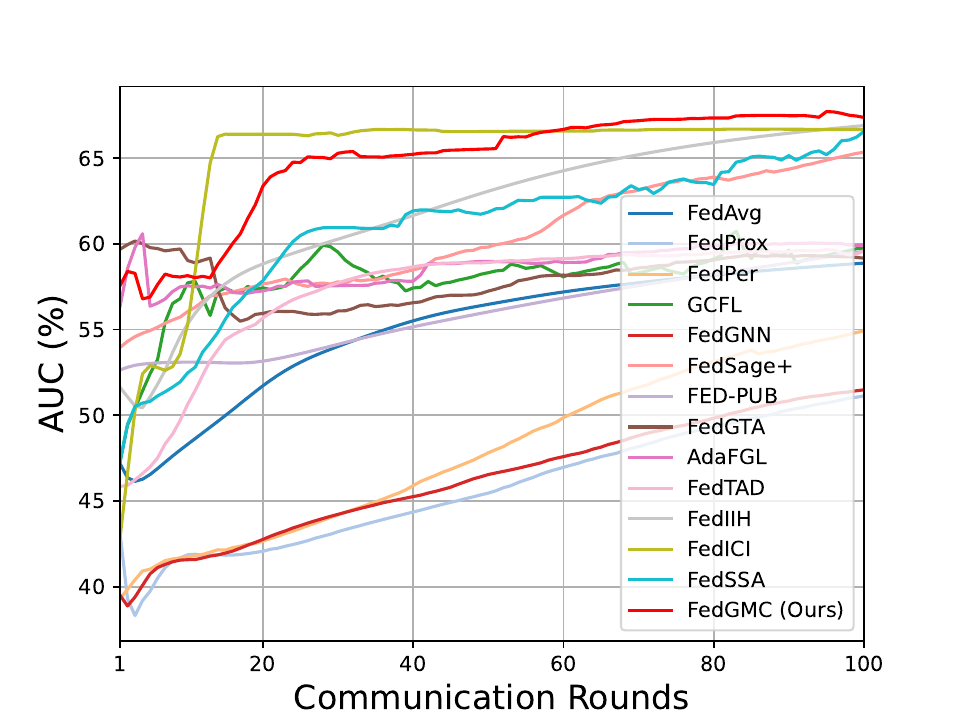}\label{fig_additional_convergence2_6}}
  \caption{Convergence curves on six datasets under overlapping partitioning settings with 30 clients.}
  \label{fig_additional_convergence2}
\end{figure*}

\subsection{Additional Sensitivity Analysis on Hyperparameters}\label{ap_sensitivity_analysis}
To further analyze the sensitivity of our proposed FedGMC to hyperparameters, we conduct additional sensitivity analyses on \textit{Cora} and \textit{Roman-empire} datasets. Specifically, we examine the impact of four key hyperparameters, namely the number of global structural manifold $Q$, the number of local structural manifold $B$, temperature hyperparameter $\tau$, and step size $\eta$. Fig.~\ref{fig_additional_sensitivity1} and Fig.~\ref{fig_additional_sensitivity2} present accuracy curves with variance bars under different values of hyperparameters. Experimental results show that our proposed FedGMC exhibits stable performance across a wide range of hyperparameters, which indicates that our FedGMC is not sensitive to the variation of hyperparameters.

\begin{figure}[t]
  \centering
  \subfloat[\footnotesize{\textit{Cora} under different $Q$}]{\includegraphics[width=0.2\columnwidth]{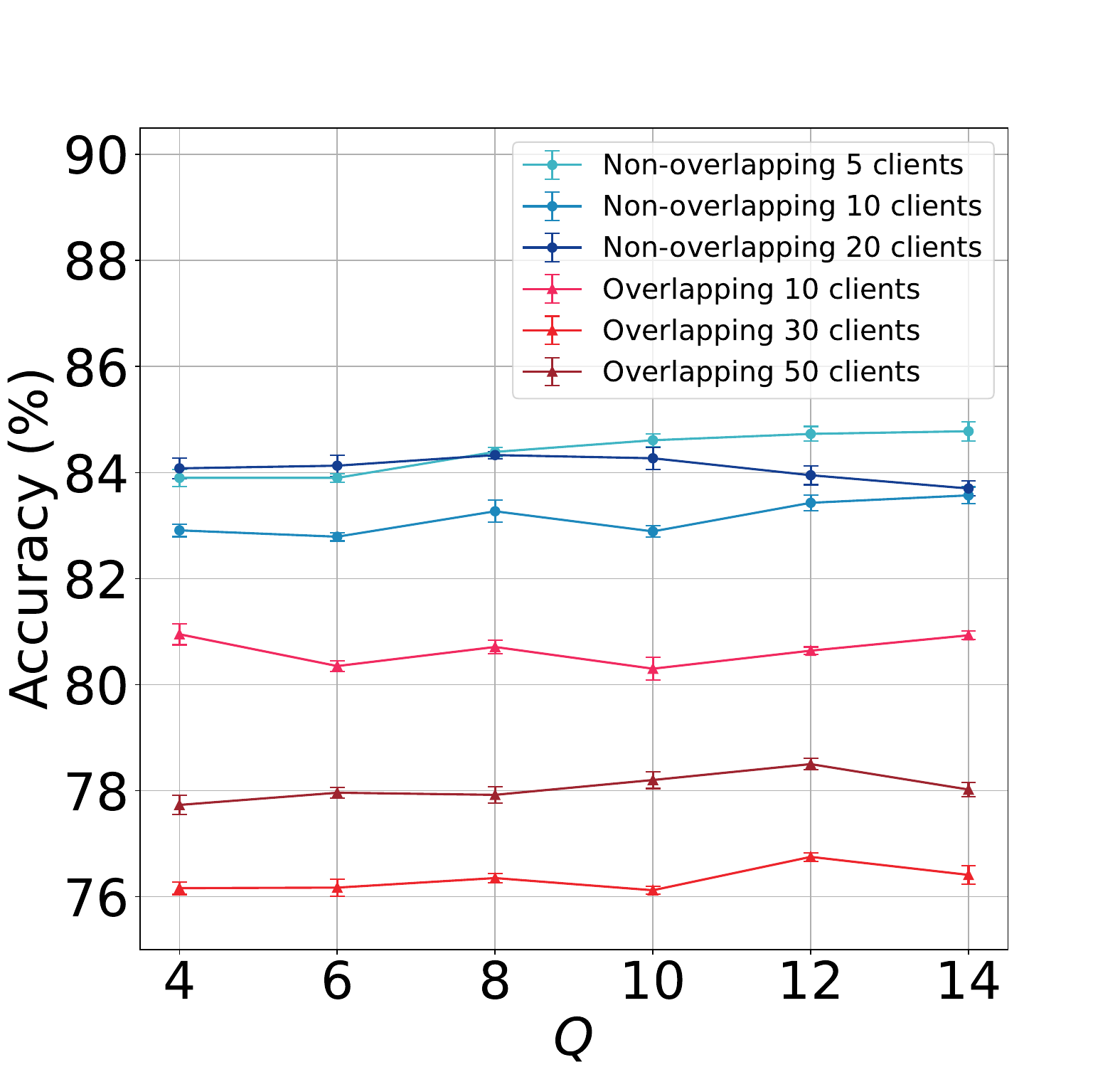}\label{fig_additional_sensitivity1_1}}
  \hfill
  \subfloat[\footnotesize{\textit{Cora} under different $B$}]{\includegraphics[width=0.2\columnwidth]{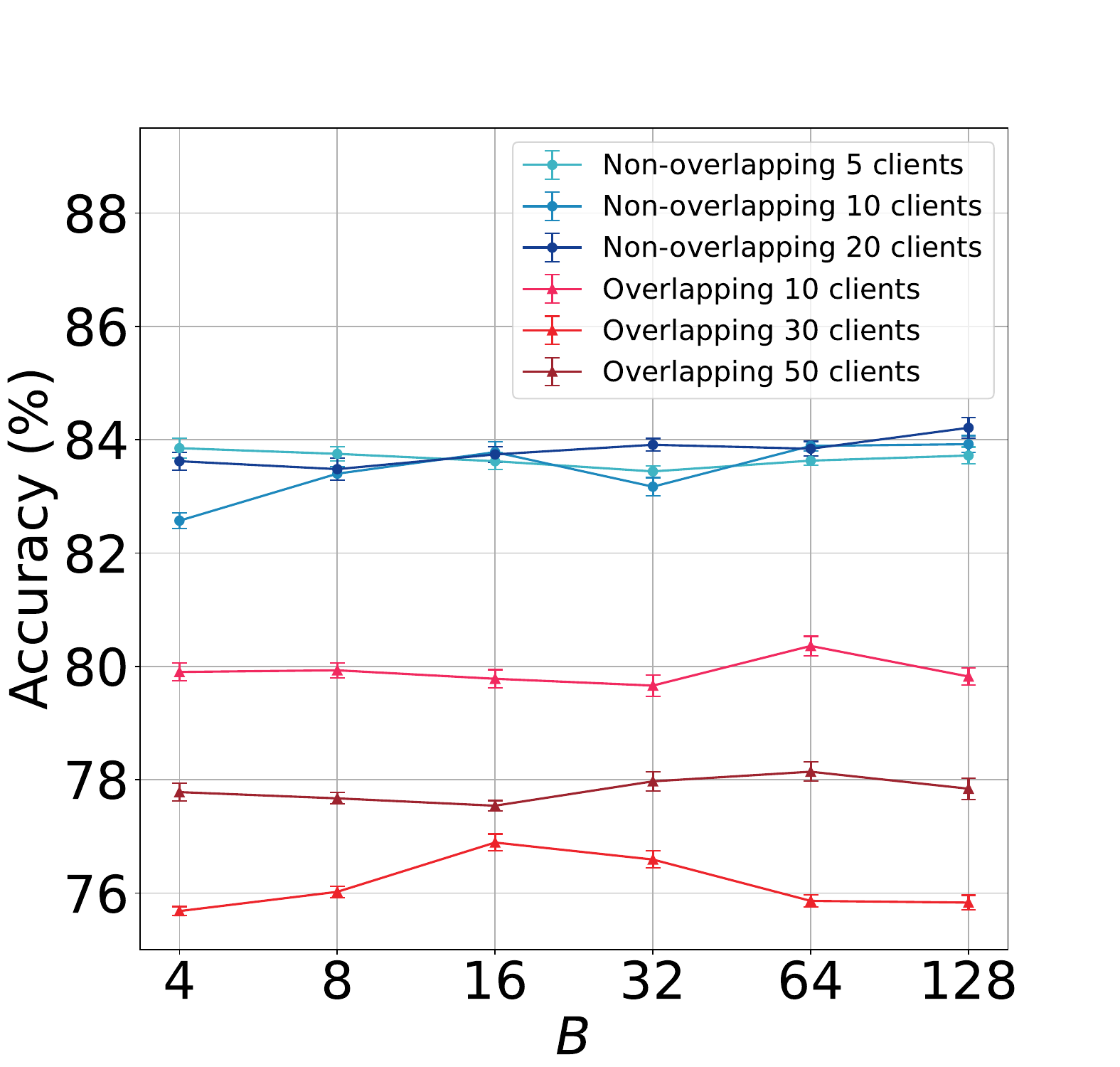}\label{fig_additional_sensitivity1_2}}
  \hfill
  \subfloat[\footnotesize{\textit{Cora} under different $\tau$}]{\includegraphics[width=0.2\columnwidth]{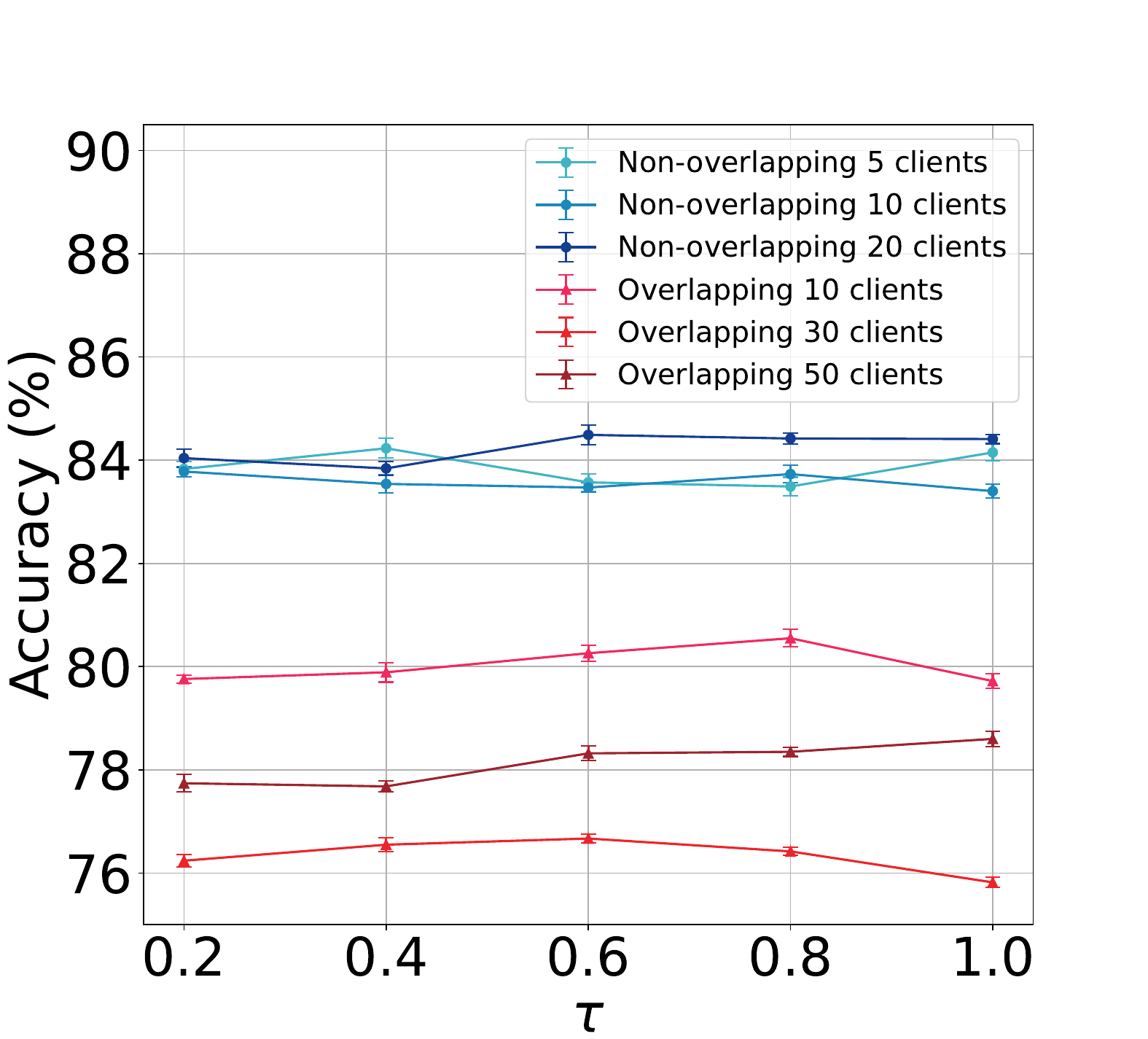}\label{fig_additional_sensitivity1_3}}
  \hfill
  \subfloat[\footnotesize{\textit{Cora} under different $\eta$}]{\includegraphics[width=0.2\columnwidth]{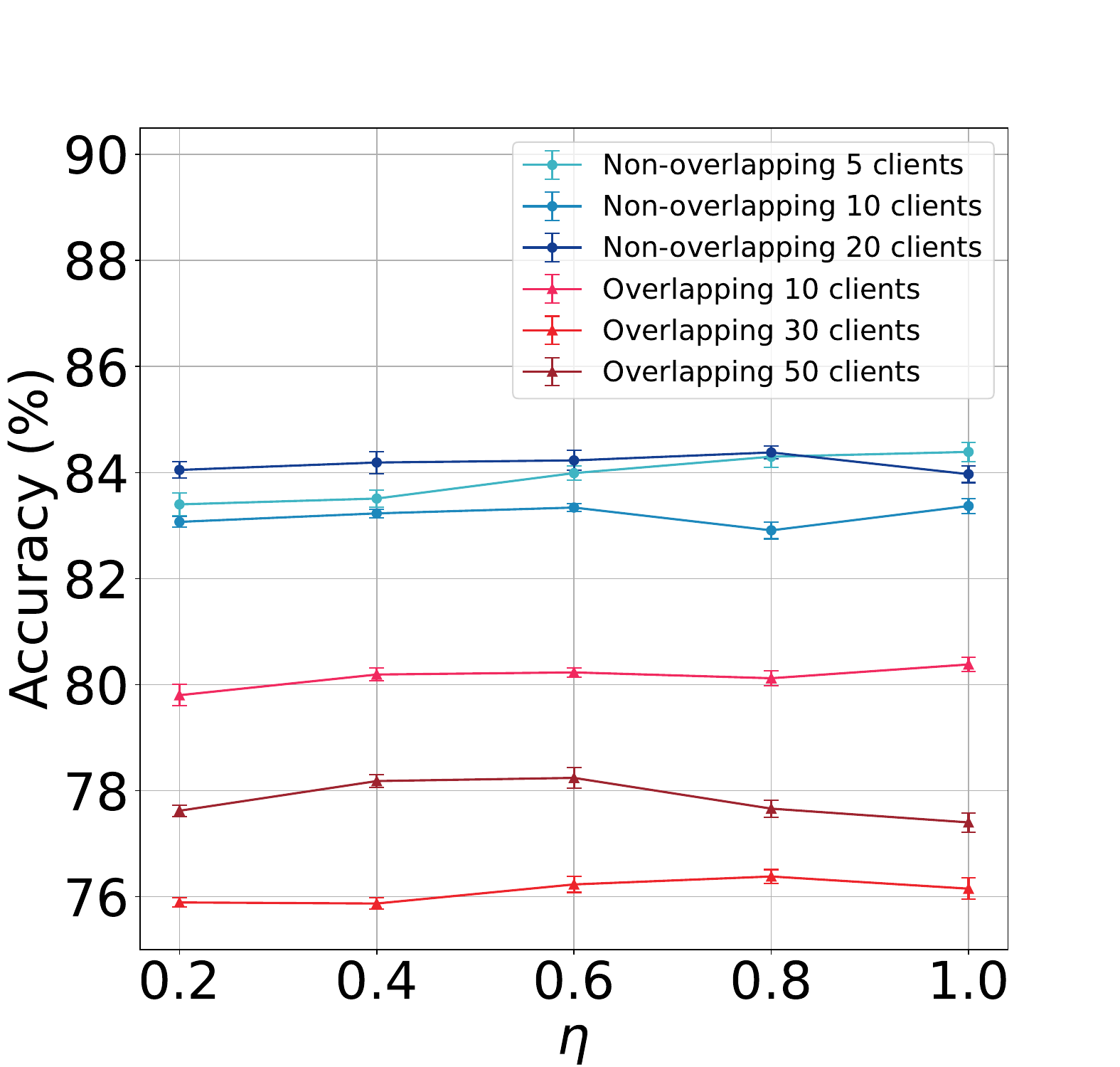}\label{fig_additional_sensitivity1_4}}
  \caption{Accuracy curves with variance bars on \textit{Cora} dataset under different values of $Q$, $B$, $\tau$, and $\eta$.}
  \label{fig_additional_sensitivity1}
\end{figure}

\begin{figure}[t]
  \centering
  \subfloat[\footnotesize{\textit{Roman-empire} under different $Q$}]{\includegraphics[width=0.2\columnwidth]{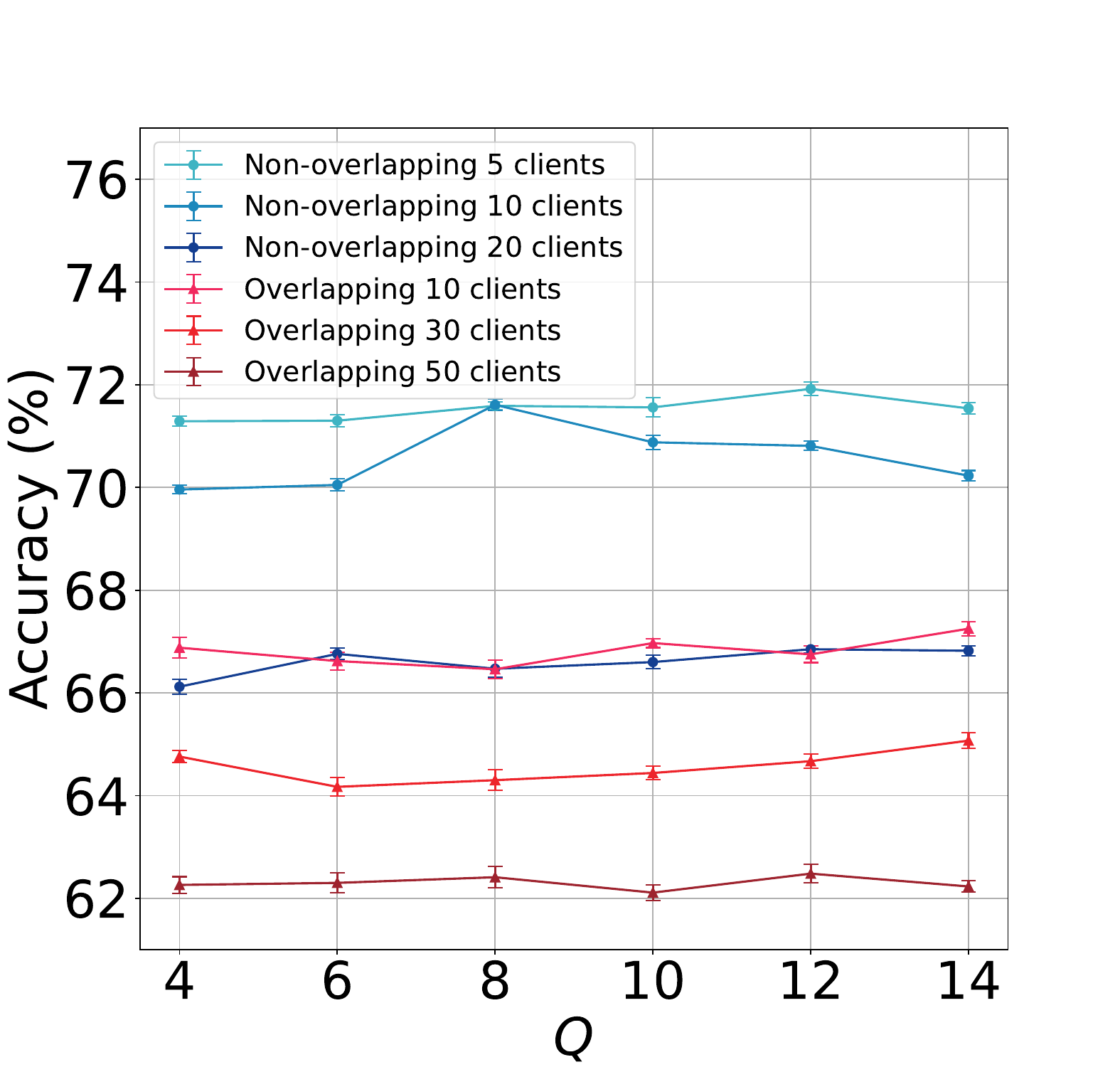}\label{fig_additional_sensitivity_1}}
  \hfill
  \subfloat[\footnotesize{\textit{Roman-empire} under different $B$}]{\includegraphics[width=0.2\columnwidth]{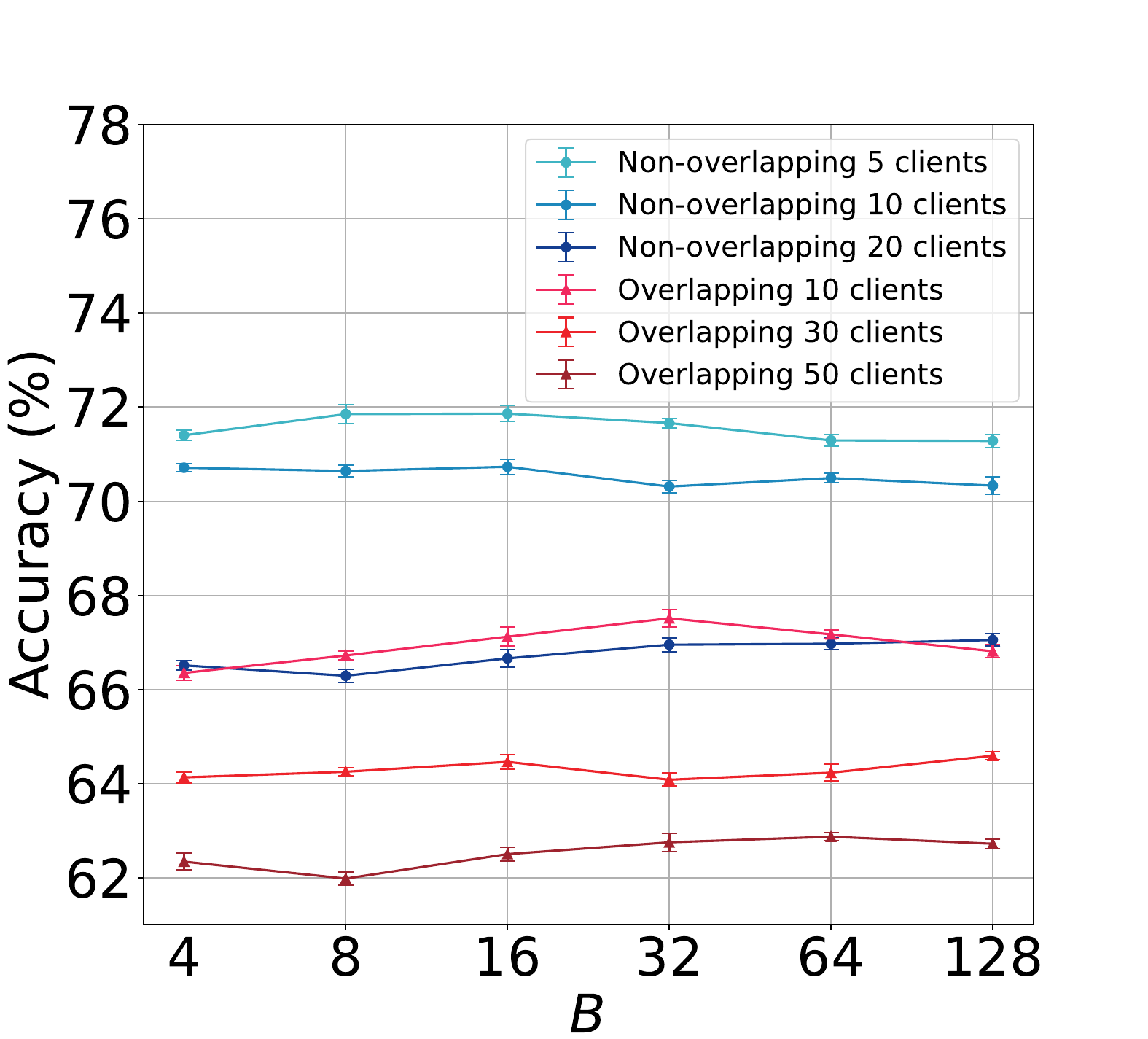}\label{fig_additional_sensitivity_2}}
  \hfill
  \subfloat[\footnotesize{\textit{Roman-empire} under different $\tau$}]{\includegraphics[width=0.2\columnwidth]{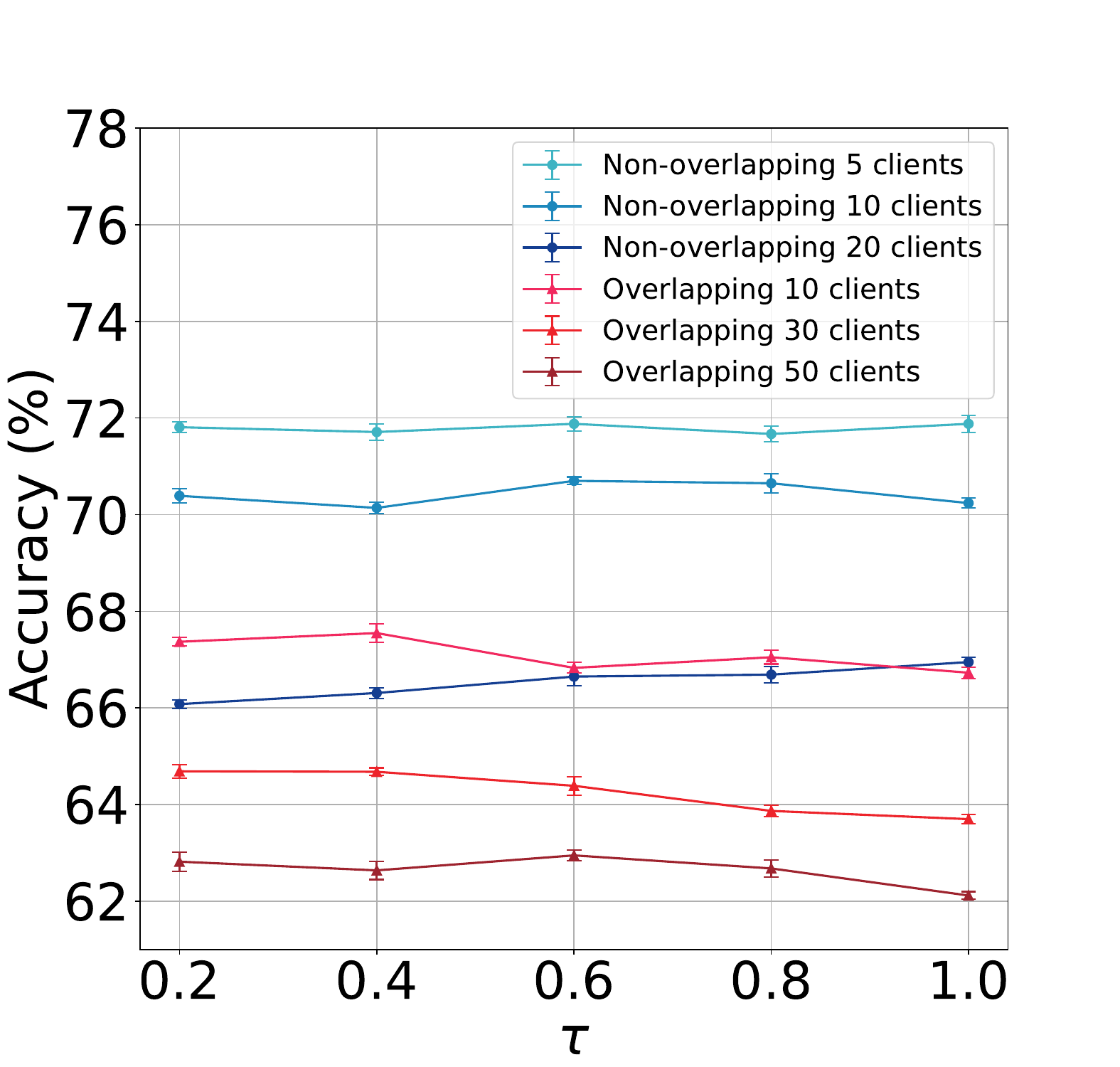}\label{fig_additional_sensitivity_3}}
  \hfill
  \subfloat[\footnotesize{\textit{Roman-empire} under different $\eta$}]{\includegraphics[width=0.2\columnwidth]{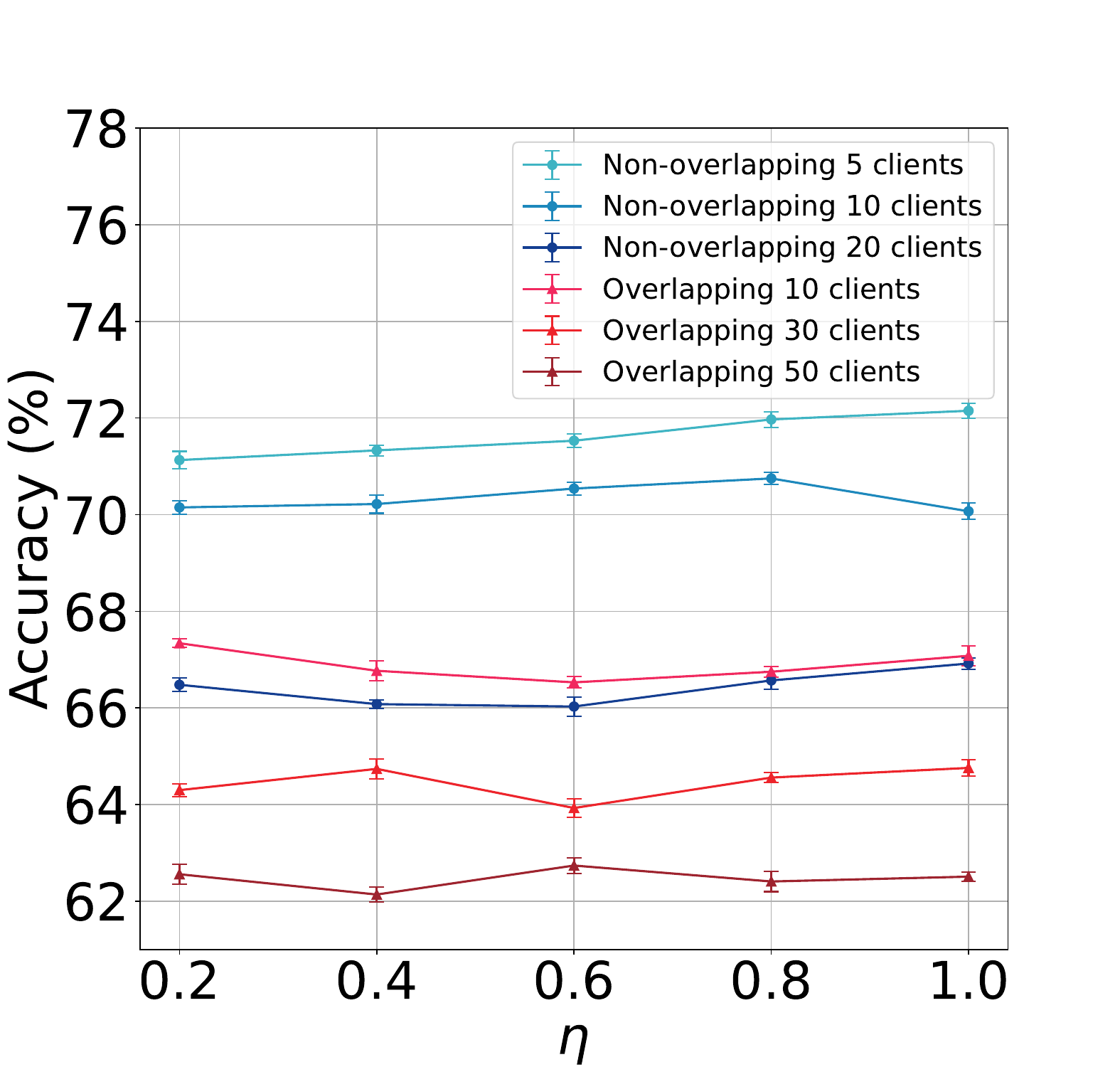}\label{fig_additional_sensitivity_4}}
  \caption{Accuracy curves with variance bars on \textit{Roman-empire} dataset under different values of $Q$, $B$, $\tau$, and $\eta$.}
  \label{fig_additional_sensitivity2}
\end{figure}

\subsection{Additional Case Studies}\label{ap_case_study}
To further illustrate the effectiveness of our proposed FedGMC in mitigating heterogeneity among clients, we conduct additional case studies on six datasets. To be specific, we first visualize global semantic manifolds of various clients by using t-SNE~\cite{van2008visualizing}. Subsequently, we visualize global structural manifolds of various clients. As shown in Fig.~\ref{fig_additional_case_study1} to Fig.~\ref{fig_additional_case_study6}, the 2D projections of semantic manifolds show the property of maximal equidistance, which demonstrates the effectiveness of our FedGMC in mitigating semantic heterogeneity. Similarly, global structural manifolds also show compact clusters, which validates the effectiveness of addressing structural heterogeneity. Moreover, after global refinement, both semantic and structural manifolds exhibit small variations while preserving the maximal equidistance property.

\begin{figure}[t]
	\centering
	\includegraphics[width=13.5cm]{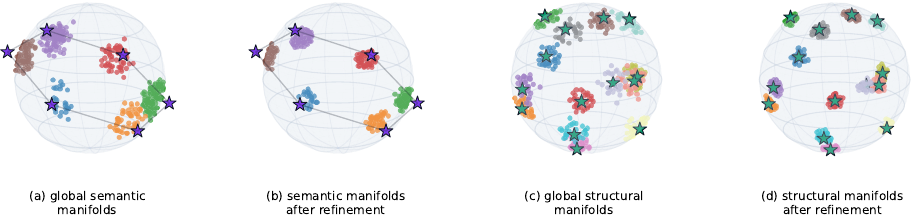}
	\caption{Case studies on \textit{CiteSeer} dataset under non-overlapping partitioning setting with 10 clients.}
	\label{fig_additional_case_study1}
\end{figure}

\begin{figure}[t]
	\centering
	\includegraphics[width=13.5cm]{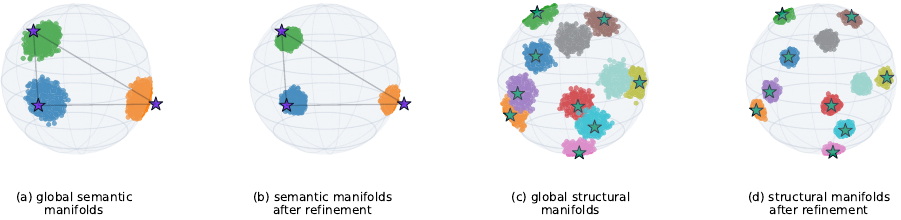}
	\caption{Case studies on \textit{PubMed} dataset under non-overlapping partitioning setting with 10 clients.}
	\label{fig_additional_case_study2}
\end{figure}

\begin{figure}[t]
	\centering
	\includegraphics[width=13.5cm]{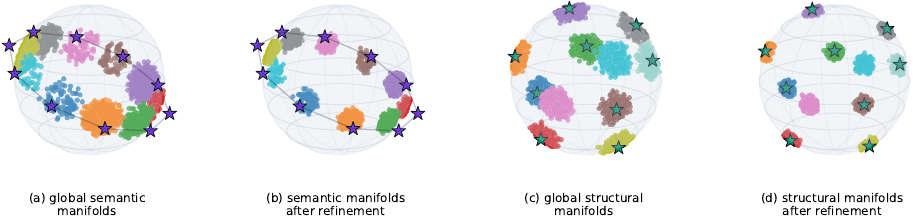}
	\caption{Case studies on \textit{Amazon-Computer} dataset under non-overlapping partitioning setting with 10 clients.}
	\label{fig_additional_case_study3}
\end{figure}

\begin{figure}[t]
	\centering
	\includegraphics[width=13.5cm]{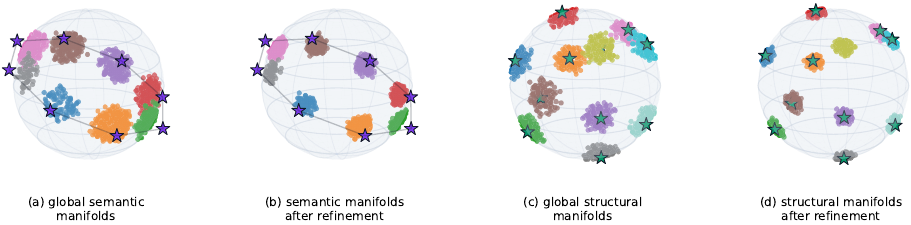}
	\caption{Case studies on \textit{Amazon-Photo} dataset under non-overlapping partitioning setting with 10 clients.}
	\label{fig_additional_case_study4}
\end{figure}

\begin{figure}[t]
	\centering
	\includegraphics[width=13.5cm]{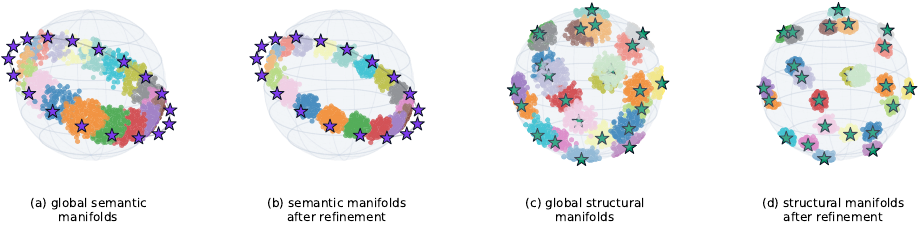}
	\caption{Case studies on \textit{Roman-empire} dataset under non-overlapping partitioning setting with 10 clients.}
	\label{fig_additional_case_study5}
\end{figure}

\begin{figure}[t]
	\centering
	\includegraphics[width=13.5cm]{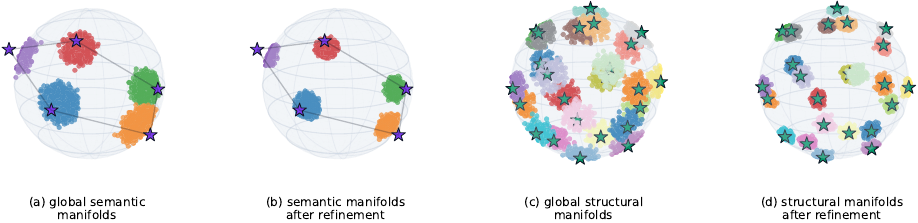}
	\caption{Case studies on \textit{Amazon-ratings} dataset under non-overlapping partitioning setting with 10 clients.}
	\label{fig_additional_case_study6}
\end{figure}

\subsection{Time of Each Communication Round}
\label{time_communication}
We report the time consumed per communication round for our proposed FedGMC and the compared baseline methods in Tab.~\ref{table6}. We can observe that FedGMC consistently achieves a lower time cost per communication round when compared with the average time of baseline methods. Notably, FedGMC is substantially more efficient than strong baseline methods such as FED-PUB and FedIIH. For instance, on \textit{Cora} dataset, FedGMC achieves more than a twofold speed improvement relative to the strong baseline method (\textit{i.e.}, FedIIH). This efficiency gain is attributed to the low complexity of FedGMC on both client side and server side.

\begin{table*}[t]
  \centering
  \scriptsize
  \caption{Time consumption (seconds) of each communication round for our proposed FedGMC and baseline methods on \textit{Cora} and \textit{Roman-empire} datasets.}
  \label{table6}
  \renewcommand{\arraystretch}{1} 
  \scalebox{1}{
  \begin{tabular}{lcccccc}
  \hline
  \rowcolor{gray!50}
                & \multicolumn{6}{c}{Cora}                                                                                                                                                                                                                                                                                                                                                                                                       \\ \hline
  Methods       & \begin{tabular}[c]{@{}c@{}}non-overlapping\\ 5 clients\end{tabular} & \begin{tabular}[c]{@{}c@{}}non-overlapping\\ 10 clients\end{tabular} & \begin{tabular}[c]{@{}c@{}}non-overlapping\\ 20 clients\end{tabular} & \begin{tabular}[c]{@{}c@{}}overlapping\\ 10 clients\end{tabular} & \begin{tabular}[c]{@{}c@{}}overlapping\\ 30 clients\end{tabular} & \begin{tabular}[c]{@{}c@{}}overlapping\\ 50 clients\end{tabular} \\ \hline
  \rowcolor{gray!20}
  FedAvg~\cite{mcmahan2017communication}        & 3.04                                                                & 2.19                                                                 & 5.63                                                                 & 5.40                                                             & 7.27                                                                 & 12.08                                                            \\ \hline
  FedProx~\cite{MLSYS2020_1f5fe839}       & 4.24                                                                & 4.65                                                                 & 8.86                                                                 & 5.49                                                             & 13.71                                                                & 22.43                                                            \\ \hline
  \rowcolor{gray!20}
  FedPer~\cite{Arivazhagan2019}        & 4.06                                                                & 4.13                                                                 & 8.17                                                                 & 4.16                                                             & 12.38                                                                & 20.24                                                            \\ \hline
  GCFL~\cite{NEURIPS2021_9c6947bd}          & 6.30                                                                & 9.38                                                                 & 18.87                                                                & 9.78                                                             & 27.93                                                                & 46.22                                                            \\ \hline
  \rowcolor{gray!20}
  FedGNN~\cite{wu2021fedgnn}        & 2.28                                                                & 4.42                                                                 & 8.76                                                                 & 5.40                                                             & 13.07                                                                & 23.04                                                            \\ \hline
  FedSage+\cite{NEURIPS2021_34adeb8e}      & 6.88                                                                & 8.55                                                                 & 17.88                                                                & 9.37                                                             & 16.97                                                                & 23.35                                                            \\ \hline
  \rowcolor{gray!20}
  FED-PUB~\cite{baek2023personalized}       & 22.04                                                               & 27.34                                                                & 60.31                                                                & 33.46                                                            & 80.54                                                                & 147.84                                                           \\ \hline
  FedGTA~\cite{li2023fedgta}        & 3.36                                                                & 2.24                                                                 & 5.89                                                                 & 4.30                                                             & 5.00                                                                 & 7.18                                                             \\ \hline
  \rowcolor{gray!20}
  AdaFGL~\cite{li2024adafgl}        & 1.99                                                                & 2.49                                                                 & 4.63                                                                 & 4.44                                                             & 6.16                                                                 & 7.83                                                             \\ \hline
  FedTAD~\cite{zhu2024fedtad}        & 4.91                                                                & 5.25                                                                 & 9.13                                                                 & 5.22                                                             & 12.84                                                                & 19.71                                                            \\ \hline
  \rowcolor{gray!20}
  FedIIH~\cite{wentao2025fediih}        & 19.57                                                               & 22.76                                                                & 56.09                                                                & 19.67                                                            & 65.49                                                                & 139.03                                                           \\ \hline
  FedICI~\cite{yuintegrating}        & 6.16                                                               & 2.40                                                                & 6.23                                                                & 5.09                                                            & 7.10                                                                & 11.18                                                           \\ \hline
  \rowcolor{gray!20}
  FedSSA~\cite{yu2026heterogeneity} & 2.87                                                                & 3.46                                                                 & 4.94                                                                 & 6.48                                                             & 11.17                                                                 & 16.47                                                            \\ \hline
  FedGMC (Ours) & 3.07                               & 6.32               & 9.25                                                              & 9.03                                                             & 12.83                                                                 & 17.03                                                            \\ \hline
  \rowcolor{yellow!30} \textbf{Average}       & 6.48                                                                & 7.54                                                                 & 16.05                                                                & 9.09                                                             & 20.89                                                            & 36.69                                                            \\ \hline
  \rowcolor{gray!50}
                & \multicolumn{6}{c}
                {Roman-empire}                                                                                                                                                                                                                                                                                                                                                                                               \\ \hline
  Methods       & \begin{tabular}[c]{@{}c@{}}non-overlapping\\ 5 clients\end{tabular} & \begin{tabular}[c]{@{}c@{}}non-overlapping\\ 10 clients\end{tabular} & \begin{tabular}[c]{@{}c@{}}non-overlapping\\ 20 clients\end{tabular} & \begin{tabular}[c]{@{}c@{}}overlapping\\ 10 clients\end{tabular} & \begin{tabular}[c]{@{}c@{}}overlapping\\ 30 clients\end{tabular} & \begin{tabular}[c]{@{}c@{}}overlapping\\ 50 clients\end{tabular} \\ \hline
  \rowcolor{gray!20}
  FedAvg~\cite{mcmahan2017communication}        & 7.02                                                                & 5.76                                                                 & 8.46                                                                 & 5.01                                                           & 18.47                                                                & 19.12                                                            \\ \hline
  FedProx~\cite{MLSYS2020_1f5fe839}       & 4.31                                                                & 6.73                                                                 & 9.04                                                                 & 6.90                                                             & 16.25                                                                & 23.66                                                            \\ \hline
  \rowcolor{gray!20}
  FedPer~\cite{Arivazhagan2019}        & 5.35                                                                & 6.19                                                                 & 9.19                                                                 & 6.47                                                             & 15.58                                                                & 20.22                                                            \\ \hline
  GCFL~\cite{NEURIPS2021_9c6947bd}          & 6.32                                                                & 9.58                                                                 & 18.37                                                                & 10.84                                                            & 29.91                                                                & 50.23                                                            \\ \hline
  \rowcolor{gray!20}
  FedGNN~\cite{wu2021fedgnn}        & 3.33                                                                & 6.60                                                                 & 9.43                                                                 & 6.45                                                             & 15.29                                                                & 22.82                                                            \\ \hline
  FedSage+\cite{NEURIPS2021_34adeb8e}      & 10.06                                                               & 14.82                                                                & 26.27                                                                & 23.09                                                            & 47.42                                                                & 62.72                                                            \\ \hline
  \rowcolor{gray!20}
  FED-PUB~\cite{baek2023personalized}       & 18.81                                                               & 28.03                                                                & 61.75                                                                & 28.45                                                            & 83.07                                                                & 133.05                                                           \\ \hline
  FedGTA~\cite{li2023fedgta}        & 2.17                                                                & 3.58                                                                 & 5.32                                                                 & 3.42                                                             & 6.12                                                                 & 9.92                                                             \\ \hline
  \rowcolor{gray!20}
  AdaFGL~\cite{li2024adafgl}        & 4.34                                                                & 4.14                                                                 & 5.55                                                                 & 6.49                                                             & 7.80                                                                 & 10.69                                                            \\ \hline
  FedTAD~\cite{zhu2024fedtad}        & 4.88                                                                & 8.55                                                                 & 15.03                                                                & 10.98                                                            & 22.42                                                                & 37.01                                                            \\ \hline
  \rowcolor{gray!20}
  FedIIH~\cite{wentao2025fediih}        & 17.45                                                               & 24.90                                                                & 47.01                                                                & 28.19                                                            & 61.17                                                                & 100.10                                                           \\ \hline
  FedICI~\cite{yuintegrating}        & 5.75                                                               & 5.71                                                                & 8.92                                                                & 9.35                                                            & 17.31                                                                & 20.14                                                           \\ \hline
  \rowcolor{gray!20}
  FedSSA~\cite{yu2026heterogeneity} & 5.78                                                                & 6.01                                                                 & 11.93                                                                 & 9.04                                                            & 16.59                                                                & 24.78                                                            \\ \hline
  FedGMC (Ours) & 7.04                                                                & 6.31                                                                 & 15.24                                                                 & 5.46                                                             & 18.13                                                                 & 33.39                                                            \\ \hline
  \rowcolor{yellow!30} \textbf{Average}       & 7.33                                                                & 9.78                                                                & 17.97                                                                & 11.44                                                            & 26.82                                                            & 40.56                                                            \\ \hline
  \end{tabular}
  }
\end{table*}
\section{Further Discussion}\label{appendix: further discussion}
\subsection{Broader Impact}
This paper could have the following positive impacts: (i) We provide a new method for GFL to deal with the heterogeneity. (ii) Our proposed method can greatly improve the performance of GFL on homophilic and heterophilic graph datasets in both non-overlapping and overlapping settings.

This paper presents work that aims to advance the field of GFL. We do not find any negative societal consequences of our work.
This paper does not raise any ethical concerns. This study does not involve human subjects, practices, data set releases, potentially harmful insights, methodologies, applications, potential conflicts of interest and sponsorship, discrimination/bias/fairness concerns, legal compliance, or research integrity issues. 

In summary, we believe that our proposed FedGMC has shown promising results in dealing with the heterogeneity in GFL. We will continue to improve our work for the benefit of the broader community.
\subsection{Limitations}
The proposed method is currently involving four hyperparameters. Future enhancements could include developing an adaptive variant that does not rely on hand-crafted tuning. Exploring strategies like using heuristic methods, may offer significant advances in the robustness of GFL.

\end{document}